\newcommand{\argmax}{\mathop{\mathrm{arg max}}}
\newtheorem{hypothesis}{\textbf{Hypothesis}}
\newcommand{\defeq}{\mathrel{\overset{\makebox[0pt]{\mbox{\normalfont\tiny\sffamily def}}}{=}}}
\newcommand{\EE}{\mathbb{E}}
\newcommand{\RR}{\mathbb{R}}
\newcommand{\States}{\mathcal{S}}
\newcommand{\Actions}{\mathcal{A}}
\newcommand{\Goals}{\mathcal{G}}
\newcommand{\augGoals}{\bar{\mathcal{G}}}
\newcommand{\sterm}{s_{\text{terminal}}}
\newcommand{\mparams}{{\boldsymbol{\theta}}}
\newcommand{\subroutine}[1]{\texttt{#1}}
\newcommand{\qparams}{{\mathbf{w}}}
\newcommand{\vsg}{r_\gamma}
\newcommand{\vgg}{\tilde{r}_\gamma}
\newcommand{\Gammasg}{\Gamma}
\newcommand{\Gammagg}{\tilde{\Gamma}}
\newcommand{\indsg}{m}
\newcommand{\relsg}{d}
\newcommand{\relgg}{\tilde{d}}
\newcommand{\vsub}{v_{g^\star}}
\newcommand{\vsubt}[1]{v_{g^\star,{#1}}}
\newcommand{\rparams}{{\boldsymbol{\theta}^r}}
\newcommand{\gamparams}{{\boldsymbol{\theta}^\Gamma}}
\newcommand{\vggparams}{{\tilde{\boldsymbol{\theta}}^r}}
\newcommand{\gamggparams}{{\tilde{\boldsymbol{\theta}}^\Gamma}}
\newcommand{\vgoal}{\tilde{v}}
\newcommand{\Ppigamma}{P_{\pi, \gamma}}
\newcommand{\rpigamma}{r_{\pi, \gamma}}
\newcommand{\polparams}{{\boldsymbol{\theta}^\pi}}
 \newcommand{\rgamma}{r_{\gamma}}
 \newcommand{\GamModel}{\Gamma}
\newcommand{\optionq}{\tilde{q}}
\tikzstyle{process} = [rectangle, minimum width=3cm, minimum height=1cm, text width=10cm, text centered, draw=black]
\tikzstyle{arrow} = [thick,->,>=stealth]
\begin{document}

\title{Goal-Space Planning with Subgoal Models}

\author{\name Chunlok Lo\thanks{Equal contribution} \email chunlok@ualberta.ca 
\AND
\name Kevin Roice\footnotemark[1] \email roice@ualberta.ca 
\AND
\name Parham Mohammad Panahi\footnotemark[1]  \email parham1@ualberta.ca 
\AND
\name Scott Jordan \email sjordan@ualberta.ca 
\AND
\name Adam White$^\dagger$ \email amw8@ualberta.ca 
\AND \name G{\'a}bor Mihucz \email mihucz@ualberta.ca 
\AND \name Farzane Aminmansour \email aminmans@ualberta.ca 
\AND
\name Martha White$^\dagger$ \email whitem@ualberta.ca \\
\addr Department of Computing Science\\
   University of Alberta\\
    Alberta Machine Intelligence Institute (Amii)\\
    Canada CIFAR AI Chair$^\dagger$\\
   Edmonton, Canada
}

\editor{My editor}

\maketitle

\begin{abstract}
This paper investigates a new approach to model-based reinforcement learning using background planning: mixing (approximate) dynamic programming updates and model-free updates, similar to the Dyna architecture. Background planning with learned models is often worse than model-free alternatives, such as Double DQN, even though the former uses significantly more memory and computation. The fundamental problem is that learned models can be inaccurate and often generate invalid states, especially when iterated many steps. In this paper, we avoid this limitation by constraining background planning to a set of (abstract) subgoals and learning only local, subgoal-conditioned models. This goal-space planning (GSP) approach is more computationally efficient, naturally incorporates temporal abstraction for faster long-horizon planning and avoids learning the transition dynamics entirely. We show that our GSP algorithm can propagate value from an abstract space in a manner that helps a variety of base learners learn significantly faster in different domains.
\end{abstract}

\begin{keywords}
  Model-Based Reinforcement Learning, Temporal Abstraction, Planning, Reward Shaping, Value Propagation
\end{keywords}

\section{Introduction}

Planning with learned models in reinforcement learning (RL) is important for sample efficiency. 
Planning 
provides a mechanism for the agent to generate hypothetical experience, in the background during interaction, to improve value estimates. This hypothetical experience provides a stand-in for the real-world; the agent can generate many experiences (transitions) in it's head (via a model) and learn from those experiences.
Dyna \citep{sutton1991integrated} is a classic example of {\em background planning}. On each step, the agent generates several transitions according to its model, and updates with those transitions as if they were real experience. 

Background planning can be used to both adapt to the non-stationarity and exploit things that remain constant. In many interesting environments, like the real-world or multi-agent games, the agent will be under-parameterized and thus cannot learn or even represent a stationary optimal policy. The agent can overcome this limitation, however, by using a model to rapidly update its policy. Continually updating the model and replanning allows the agent to adapt to the current situation. In addition, many aspects of the environment remain stationary (e.g., fire hurts and objects fall). The model can capture these stationary facts about how the world works and planning can be used to reason about how the world works to produce a better policies.

The promise of background planning
is that we can to learn and adapt value estimates efficiently, but many open problems remain to make it more widely useful. These include that 1) long rollouts generated by one-step models can diverge or generate invalid states, 2) learning probabilities over outcome states can be complex, especially for high-dimensional tasks and 3) planning itself can be computationally expensive for large state spaces. 

One way to overcome these issues is to construct an abstract model of the environment and plan at a higher level of abstraction. In this paper, we construct abstract MDPs using both state abstraction as well as temporal abstraction. State abstraction is achieved by simply grouping states. Temporal abstraction is achieved using \emph{options}---a policy coupled with a termination condition and initiation set \citep{sutton1999options}. A temporally-abstract model based on options allows the agent to {\em jump} between abstract states potentially alleviating the need to generate long rollouts.

An abstract model can be used to directly compute a policy in the abstract MDP, but there are issues with this approach.
This idea was explored with an algorithm called Landmark-based Approximate Value Iteration (LAVI) \citep{mann2015approximate}. Though planning can be shown to be provably more efficient, the resulting policy is suboptimal, restricted to going between landmark states. 
This suboptimality issue forces a trade-off between increasing the size of the abstract MDP (to increase the policy's expressivity) and increasing the computational cost to update the value function. In this paper, we investigate abstract model-based planning methods that have a small computational cost, can quickly propagate changes in value over the entire state space, and do not limit the optimality of learned policy. 

An alternative strategy that we explore in this work is to use the policy computed from the abstract MDP to guide the learning process in solving the original MDP. More specifically, the purpose of the abstract MDP is to propagate value quickly over an abstract state space and then transfer that information to a value function estimate in the original MDP. This approach has two main benefits: 1) the abstract MDP can quickly propagate value with a small computational cost, and 2) the learned policy is not limited to the abstract value function's approximation. Overall, this approach increases the agent's ability to learn and adapt to changes in the environment quickly. 

Specifically, we introduce Goal-Space Planning (GSP), a new background planning formalism for the general online RL setting. 
The key novelty is designing the framework to leverage
\emph{subgoal-conditioned models}: temporally-extended models that condition on subgoals. 
These models output predictions of accumulated rewards and discounts for state-subgoal pairs, which can be estimated using standard value-function learning algorithms. 
The models are designed to be simple to learn, as they are only learned for states local to subgoals and they avoid generating entire next state vectors. We use background planning on transitions between 
subgoals, to quickly propagate (suboptimal) value estimates for subgoals. We then leverage these quickly computed subgoal values, without suffering from suboptimality, by incorporating them into any standard value-based algorithm via potential-based shaping. In fact, we layer GSP algorithm onto two different algorithms---Sarsa($\lambda$) and Double Deep Q-Network (DDQN)---and show that improves on both base learners. 
We prove that dynamic programming with our subgoal models is sound (Proposition \ref{prop_vi_gsp}) and highlight that using these subgoal values through potential-based shaping does not change the optimal policy. 
We carefully investigate the components of GSP, particularly showing that 1) it propagates value and learns an optimal policy faster than its base learner, 2) can perform well with somewhat suboptimal subgoal selection, but can harm performance if subgoals are very poorly selected, 3) is quite robust to inaccuracy in its models and 4) that alternative potential-based rewards and alternatives ways to incorporate subgoal values are not as effective as the particular approach used in GSP. We conclude with a discussion on the large literature of related work and a discussion on the benefits of GSP over other background planning approaches.

\section{Problem Formulation}
\label{sec:problem_and_bg_planning}

We consider the standard reinforcement learning setting, where an agent learns to make decisions through interaction with an environment, formulated as a Markov Decision Process (MDP) represented by the tuple $(\mathcal{S}, \mathcal{A}, \mathcal{R}, \mathcal{P})$, where $\mathcal{S}$ is the state space and $\mathcal{A}$ is the action space. The reward function $\mathcal{R}: \mathcal{S}\times \mathcal{A} \times \mathcal{S} \rightarrow \mathbb{R}$ and the transition probability $\mathcal{P}: \mathcal{S}\times \mathcal{A} \times \mathcal{S}\rightarrow [0,1]$ describe the expected reward and probability of transitioning to a state, for a given state and action. On each discrete timestep $t$ the agent selects an action $A_t$ in state $S_t$, the environment transitions to a new state $S_{t+1}$ and emits a scalar reward $R_{t+1}$.

The agent's objective is to find a policy $\pi: \States \times \Actions \rightarrow [0, 1]$ that maximizes expected \emph{return}, the future discounted reward  
$G_t \doteq R_{t+1} + \gamma_{t+1} G_{t+1}$ across all states. The state-based discount $\gamma_{t+1} \in [0,1]$ depends on $S_{t+1}$ \citep{sutton2011horde}, which allows us to specify termination. If $S_{t+1}$ is a terminal state, then $\gamma_{t+1} = 0$; else, $\gamma_{t+1} = \gamma_c$ for some constant $\gamma_c \in [0,1]$.
The policy can be learned using algorithms like Sarsa($\lambda$) \citep{sutton2018reinforcement}, which approximate the action-values: the expected return from a given state and action, $q(s,a) \doteq \mathbb{E}\left [ G_t | S_t =s, A_t=a \right ]$.

Most model-based reinforcement learning systems learn a state-to-state transition model. 
The transition dynamics model can be either an expectation model $\mathbb{E}[S_{t+1} | S_{t}, A_{t}]$ or a probabilistic model $P(S_{t+1} | S_{t}, A_{t})$.
If the state space or feature space is large, then the expected next state or distribution over it can be difficult to estimate, as has been repeatedly shown \citep{talvitie2017selfcorrecting}. Further, these errors can compound when iterating the model forward or backward \citep{jafferjee2020hallucinating,vanhasselt2019when}. It is common to use an expectation model, but unless the environment is deterministic or we are only learning the values rather than action-values, this model can result in invalid states and detrimental updates \citep{wan2019planning}. The goal in this work is to develop a model-based approach that avoids learning the state-to-state transition model, but still obtains the benefits of model-based learning for faster learning and adaptation.

\begin{figure}[t]
   \vspace{-0.05cm}
     \centering
   \includegraphics[width=0.6\textwidth]{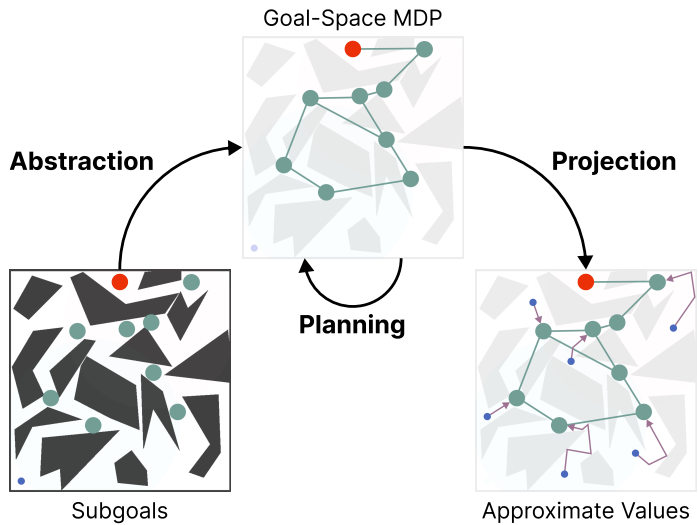}
   \caption{GSP in the PinBall domain. The agent begins with a set of subgoals (denoted in teal) and learns a set of subgoal-conditioned models. \textbf{(Abstraction)} Using these models, the agent forms an abstract MDP where the states are subgoals with options to reach each subgoal as actions. \textbf{(Planning)} The agent plans in this abstract MDP to quickly learn the values of these subgoals. \textbf{(Projection)} Using learned subgoal values, the agent obtains approximate values of states based on nearby subgoals and their values. These quickly updated approximate values are then used to speed up learning. }\label{fig:gsp_overview_diag}  
 \end{figure}
 
\section{Goal-Space Planning at a High-Level}
We consider three desiderata for when a model-based approach should be effective. (1) The model should be feasible to learn: we can get it to a sufficient level of accuracy that makes it beneficial to  plan with that model. 
(2) Planning should be computationally efficient, so that the agent's values can be quickly updated. (3) Finally, the model should be modular---composed of several local models or those that model a small part of the space---so that the model can quickly adapt to small changes in environment. These small changes might still result in large changes in the value; planning can quickly propagate these small changes, potentially changing the value function significantly. 

At a high level, the GSP algorithm focuses planning over a set of given abstract subgoals to provide quickly updated approximate values to speed up learning. To do so, the agent first learns a set of \emph{subgoal-conditioned models}, minimal models focused around planning utility. These models then form a temporally abstract goal-space semi-MDP, with subgoals as states, and options to reach each subgoal as actions. Finally, the agent can update its policy based on these subgoal values to speed up learning. 

Figure \ref{fig:gsp_overview_diag} provides a visual overview of this process. We visualize this is an environment called PinBall, which we also use in our experiments. PinBall is a continuous state domain where the agent must navigate a ball through a set of obstacles to reach the main goal, with a four-dimensional state space consisting of $(x,y, \dot{x}, \dot{y})$ positions and velocities. In this figure, the set of subgoals $\mathcal{G}$ are the teal dots, a finite space of 9 subgoals. The subgoals are abstract states, in that they correspond to many states: a subgoal is any $(x,y)$ location in a small ball, at any velocity. In this example, the subgoals are randomly distributed across the space. Subgoal discovery---identifying this set of subgoals $\mathcal{G}$---is an important part of this algorithm, as we show empirically in Section \ref{sec_subgoalselection}. For this paper, however, we focus on this planning formalism assuming these subgoals are given, already discovered by the agent.  

In the planning step (top central figure), we treat $\mathcal{G}$ as our finite set of states and do value iteration. The precise formula for this update is given later, in \eqref{eq:vi_gsp}, after we formally define the models that we learn for goal-space planning. In words, we compute the subgoal values $\vgoal: \mathcal{G} \rightarrow \mathbb{R}$, using $\vgg(g,g') = $ discounted return when trying to reach $g'$ from $g$ and $\Gammagg(g,g') = $ discounted probability of reaching $g'$ from $g$, 
\begin{equation*}
\vgoal(g) = \max\limits_{\substack{\text{relevant/nearby} \\ \text{subgoals } g'}} \vgg(g,g') + \Gammagg(g,g') \vgoal(g'). 
\end{equation*}

The projection step involves updating values for states, using the subgoal values. The most straightforward way to obtain the value for a state is to find the nearest subgoal $s$ and reason about
$\vsg(s,g) = $ discounted return when trying to reach $g$ from $s$ and $\Gammasg(s,g) = $ discounted probability of reaching $g$ from $s$, 
\begin{equation*}
\vsub(s) = \max\limits_{\substack{\text{relevant/nearby} \\ \text{subgoals } g}} \rgamma(s, g) + \GamModel(s,g) \vgoal(g).
\end{equation*}
Relevance here is defined as $s$ being within the initiation set of the option that reaches that subgoal. We learn an option policy to reach each subgoal, where the initiation set for the option is the set of states from which the option can be executed. The initiation set is a local region around the subgoal, which is why we say we have many local models. Again, we provide the formal definition later in \eqref{eq:vsub}. 

There are several ways we can use this value estimate: inside actor-critic or as a bootstrap target. For example, for a transition $(s,a,r,s')$, we could update action-values $q(s,a)$ using $r + \gamma \vsub(s')$. This naive approach, however, can result in significant bias, as we discuss in Section \ref{sec:gsp_policy}. Instead, we propose an approach to use $\vsub$ based on potential functions. 
These three key steps comprise GSP: abstraction, planning and projection. 

A key part of this algorithm is learning the subgoal models, $\vsg$ and $\Gammasg$. These models will be recognizable to many: they are universal value functions (UVFAs) \citep{schaul2015universal}. We can leverage advances in learning UVFAs to improve our model learning. These models are quite different from standard models in RL, in that most models in RL input a state (or abstract state) and action and output an expected next state (or expected next abstract state). Essentially, the model inputs the source and outputs the expected destination, or a distribution over the possible destinations. Here, the models take as inputs both the source 
and destination, and output only scalars (accumulated reward and discounted probability). The design of GSP is built around using these types of models, that avoids outputting predictions about entire state vectors.

\section{Motivating Experiments for GSP}
This section motivates the capabilities of the GSP framework through a series of demonstrative results. We investigate the utility of GSP in propagating value and speeding up learning.
%
We do so using learners in three domains: FourRooms, PinBall \citep{konidaris2009skill} and GridBall (a version of PinBall without velocities). Unless otherwise stated, all learning curves are averaged over 30 runs, with shaded regions representing one standard error.

\subsection{GSP on Propagating Value} \label{sec:exp_specification}

The central hypothesis of this work is that GSP can accelerate value propagation. By using information from local models in our updates, our belief is that GSP will have a larger change in value to more states, leading to policy changes over larger regions of the state space. 
In this section, we consider the effect of our background planning algorithm on value-based RL methods.
\begin{hypothesis}
GSP changes the value for more states with the same set of experience.
\label{hyp:val_prop_state}
\end{hypothesis}

\begin{wrapfigure}[19]{l}{0.3\textwidth}
  \vspace{-0.4cm}
  \begin{centering}
  \includegraphics[width=0.3\textwidth]{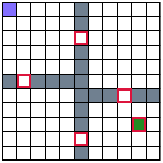}
  \caption{The FourRooms domain. The blue square is the initial state, green square the goal state, and red boxes the subgoals. A subgoal's initiation set contains the states in any room connected to that subgoal.}\label{fourrooms}  
  \end{centering}
\end{wrapfigure}
In order to verify whether GSP helps to quickly propagate value, we first test this hypothesis in a simple grid world environment: the FourRooms domain. The agent can choose from one of 4 actions in a discrete action space $\mathcal{A} = \{\texttt{up}, \texttt{down}, \texttt{left}, \texttt{right}\}$. All state transitions are deterministic. The grey squares in Figure \ref{fourrooms} indicate walls, and the state remains unchanged if the agent takes an action that leads into a wall. This is an episodic task, where the base learner has a fixed start state and must navigate to a fixed goal state where the episode terminates. Episodes can also terminate by timeout after 1000 timesteps.

In this domain, we test the effect of using GSP with pre-trained models on a Sarsa($\lambda$) base learner in the tabular setting (i.e. no function approximation for the value function). Full details on using GSP with this temporal difference (TD) learner can be found in Algorithm \ref{alg:MainPolicySarsaLambdaUpdate}. We set the four hallway states plus the goal state as subgoals, with their initiation sets being the two rooms they connect. Full details of option policy learning can be found in the appendix \ref{app_model_learning}.

\begin{figure}[htbp]
\vspace{0.44cm}
  \centering
   \includegraphics[width=\textwidth]{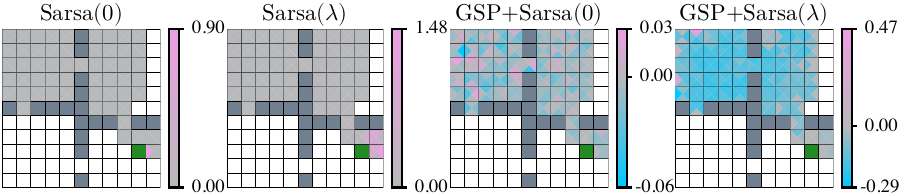}
  \caption{These four plots show the action values after a single episode of updates for Sarsa with and without GSP and eligibility traces, i.e., $\lambda = 0.9$. Each algorithm's update is simulated from the same data collected from a uniform random policy. Each state (square) is made up of four triangles representing each of the four available actions. White squares represent states not visited in the episode.}
  \label{fig:hm_fourrooms}
\end{figure}

Figure \ref{fig:hm_fourrooms} shows the base learner's action-value function after a single episode using four different algorithms: Sarsa($0$), Sarsa($\lambda$), Sarsa($0$)+GSP, and Sarsa($\lambda$)+GSP. In Figure \ref{fig:hm_fourrooms}, the Sarsa($0$) learner updates the value of the state-action pair that immediately preceded the +1 reward at the goal state. The plot for Sarsa($\lambda$) shows a decaying trail of updates made at the end of the episode, to assign credit to the state-action pairs that led to the +1 reward. The plots for the GSP variants show that all state-action pairs sampled receive instant feedback on the quality of their actions. The updates with GSP can be both positive or negative based on if the agent makes progress towards the goal state or not. This direction of update comes from the potential-based reward shaping rewards/penalizes transitions based on whether $\gamma_{t+1} \vsub(S_{t+1}) > \vsub(S_{t})$. It is clear that projecting subgoal values from the abstract MDP leads to action-value updates over more of the visited states, even without memory mechanisms such as eligibility traces. 

It is evident from these updates over a single episode that the resulting policy from GSP updates should be more likely to go to the goal. We would like to quantify how much faster this propagated value can help our base learner over multiple episodes of experience. More specifically, we want to test the following hypothesis.
\begin{hypothesis}
GSP enables a TD base-learner to learn faster. 
\label{hyp:val_prop_time}
\end{hypothesis}

We expect GSP to improve a base learner’s performance on a task within fewer environment interactions. We shall test whether the value propagation over the state-action space as seen in Figure \ref{fig:hm_fourrooms} makes this the case over the course of several episodes (i.e. we are now testing the effect of value propagation over time). Figure \ref{fig:fourrooms_curve} shows the performance of a Sarsa($\lambda$) base learner with and without GSP in the FourRooms domain with a reward of -1 per step. 
%
%
Full details on the hyperparameters used can be found in Appendix \ref{app_model_learning}. It is evident that the GSP-augmented Sarsa($\lambda$) learner is able to reach the optimal policy much faster. The GSP learner also starts at a much lower steps-to-goal.  
We \emph{believe} this first episode performance improvement is because the feedback from GSP teaches the agent which actions move towards or away from the goal during the first episode. 

\begin{figure}[htbp]
  \begin{centering}
    \scalebox{0.65}{\input{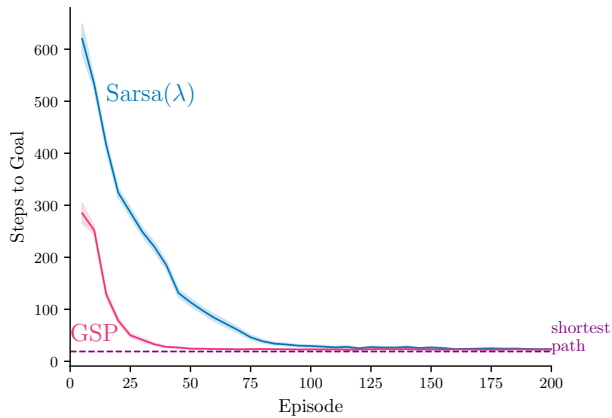}}
  \caption{This plot shows the average number of steps to goal smoothed over five episodes in the FourRooms domain. Shaded region represents 1 standard error across 100 runs.}
  \label{fig:fourrooms_curve}  
  \end{centering}
\end{figure}

\subsection{GSP in Larger State Spaces}\label{sec:larger_state}
Many real world applications of RL involve large and/or continuous state spaces. Current planning techniques struggle with such state spaces. This motivates an investigation into how well Hypotheses \ref{hyp:val_prop_state} and \ref{hyp:val_prop_time} hold up when GSP is used in such environments (e.g. the PinBall domain). To better analyse GSP and its value propagation across state-space, we also created an intermediate environment between FourRooms and PinBall called GridBall. 

PinBall is a continuous state domain where the agent navigates a ball through a set of obstacles to reach the main goal. This domain uses a four-dimensional state representation of positions and velocities, $(x, y, \dot{x}, \dot{y}) \in [0,1] \times [0,1] \times [-2, 2] \times [-2, 2]$. The agent chooses from one of five actions at each timestep. $\mathcal{A} = \{\texttt{up}, \texttt{down}, \texttt{left}, \texttt{right}, \texttt{nothing}\}$, where the $\texttt{nothing}$ action adds no change to the ball's velocity, and the other actions each add an impulse force in one of the four cardinal directions. In all our experiments, the agent is initialised with zero velocity at a fixed start position at the beginning of every episode. All collisions are elastic and we use a drag coefficient of $0.995$. This is an episodic task with a fixed starting state and main goal. An episode ends when the agent reaches the main goal or after 1,000 time steps. It should be noted that, unlike in the FourRooms environment, there exists states which are not in the initiation set of any subgoal - a common occurence when deploying GSP in the state spaces of real-world applications.

\begin{wrapfigure}{r}{0.3\textwidth}
  \vspace{-0.44cm}
  \centering
  \includegraphics[width=0.27\textwidth]{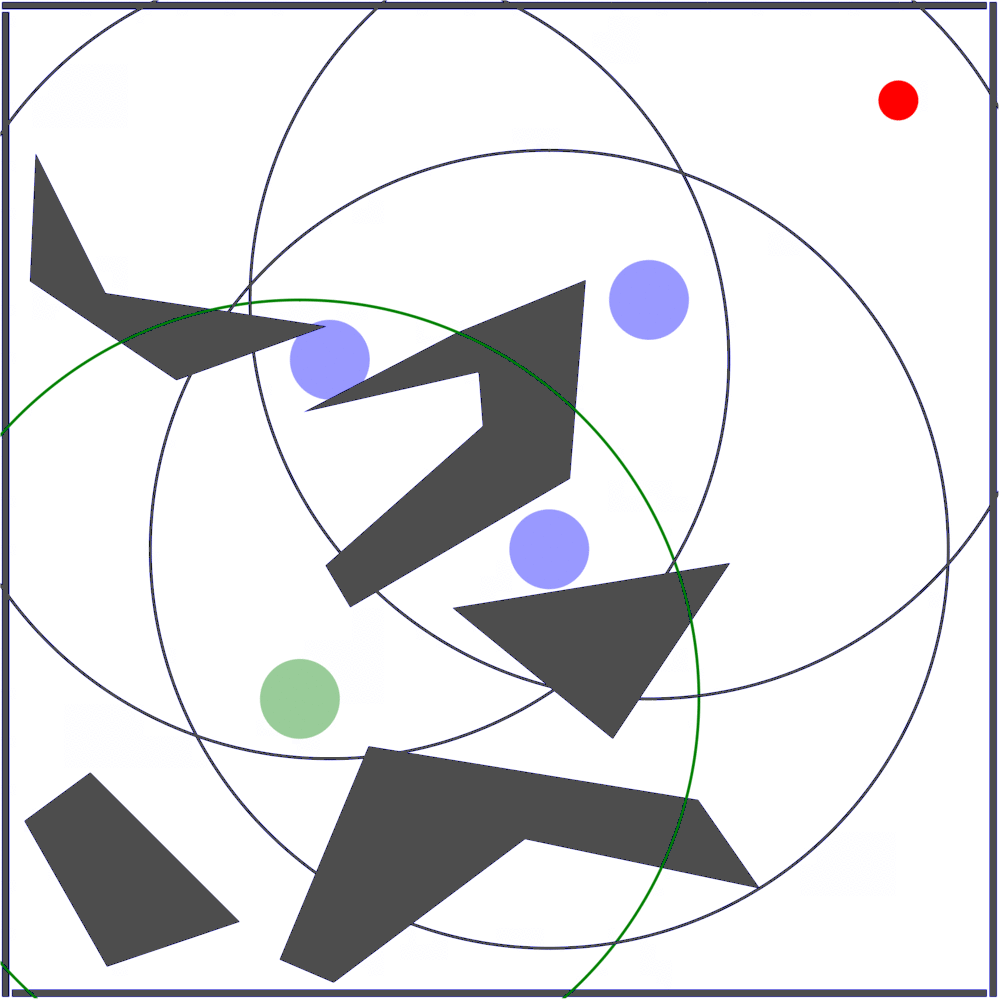}
  \caption{Obstacles and subgoals for GridBall and PinBall. The larger circles show the initiation set boundaries. Subgoals are defined in position space.}\label{fig:obstacle_layout}
  \vspace{-0.44cm}
\end{wrapfigure}

GridBall is like PinBall, but change to be more like a gridworld to facilitate visualization. The velocity components of the state are removed, meaning the state only consists of $(x,y)$ locations, and the action space is changed to displace the ball by a fixed amount in each cardinal dimension. We keep the same obstacle collision mechanics and calculations from PinBall. Since GridBall does not have any velocity components, we can plot heatmaps of value propagation without having to consider the velocity at which the agent arrived at a given position. 

For Hypothesis \ref{hyp:val_prop_state}, we repeat the experiments on GridBall with base learners that use tile-coded value features \citep{sutton2018reinforcement}, and linear value function approximation. Full details on the option policies and subgoal models used for this are outlined in shown in Appendices \ref{learn_opt_pol} and \ref{sec:model-learning}. Like in the FourRooms experiment, we set the reward to be 0 at all states and +1 once the agent reaches any state in the main goal to show value propagation. We collect a single episode of experience from the Sarsa($0$)+GSP learner and use its trajectory to perform a batch update on all learners. This controls for any variability in trajectories between learners, so we can isolate and study the change in value propagation. 

Figure \ref{hm_gridball} compares the state value function (averaged over the action value estimates) of Sarsa($0$), Sarsa($\lambda$), Sarsa($0$)+GSP and Sarsa($\lambda$)+GSP learners after a single episode of interaction with the environment. The results are similar to those on FourRooms. The Sarsa($0$) algorithm only updates the value of the tiles activated by the state preceding the goal. Sarsa($\lambda$) has a decaying trail of updates to the tiles activated preceeding the goal, and the GSP learners updates values at all states in the initiation set of a subgoal. 

\begin{figure}[tbp] 
\vspace{-0.44cm}
  \centering
  \begin{subfigure}{0.22\textwidth}
    \centering
    \caption{Sarsa(0)}
    \includegraphics[width=\textwidth]{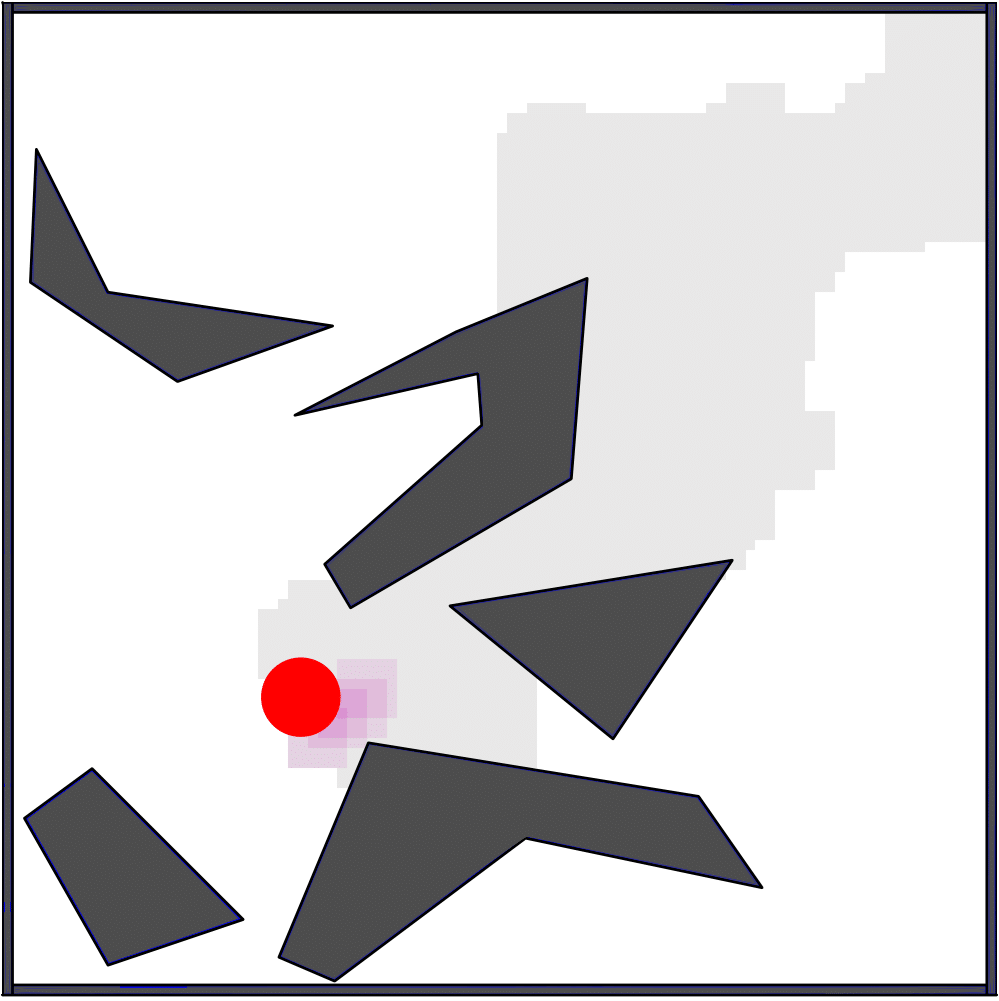}
    \label{sub_fig:hm_gridball_sarsa}
  \end{subfigure}
  \begin{subfigure}{0.22\textwidth}
    \centering
    \caption{Sarsa($\lambda$)}
    \includegraphics[width=\textwidth]{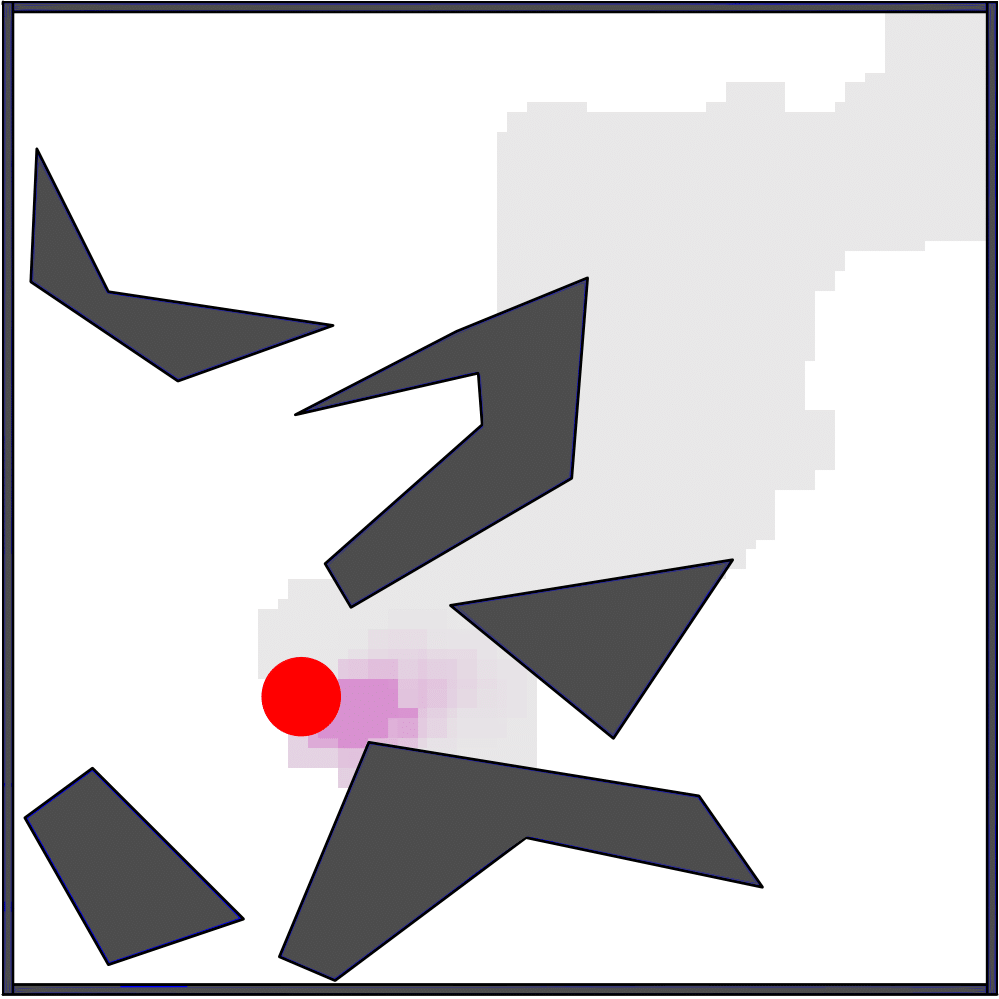}
    \label{sub_fig:hm_gridball_gsp}
  \end{subfigure}
  \begin{subfigure}{0.22\textwidth}
    \centering
    \caption{GSP+Sarsa(0)}
    \includegraphics[width=\textwidth]{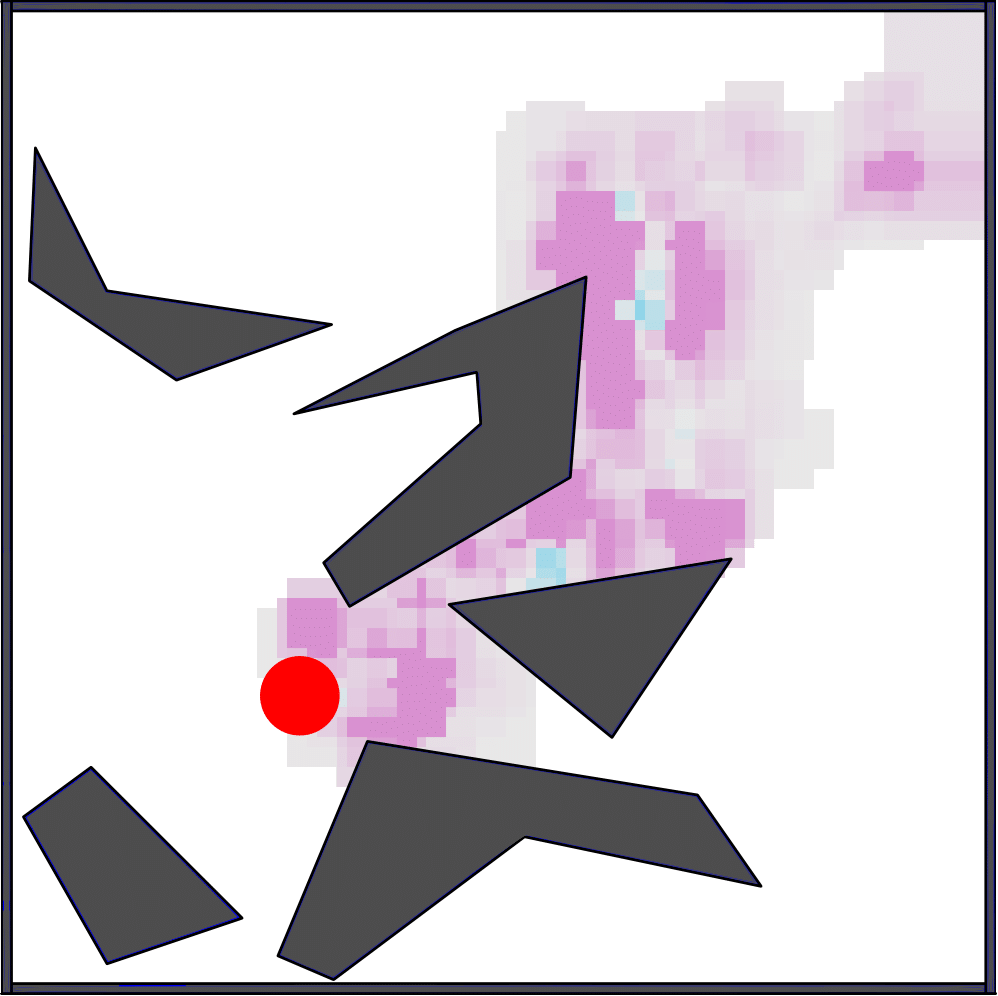}
    \label{sub_fig:hm_gridball_nogsp}
  \end{subfigure}
  \begin{subfigure}{0.22\textwidth}
    \centering
    \caption{GSP+Sarsa($\lambda$)}
    \includegraphics[width=\textwidth]{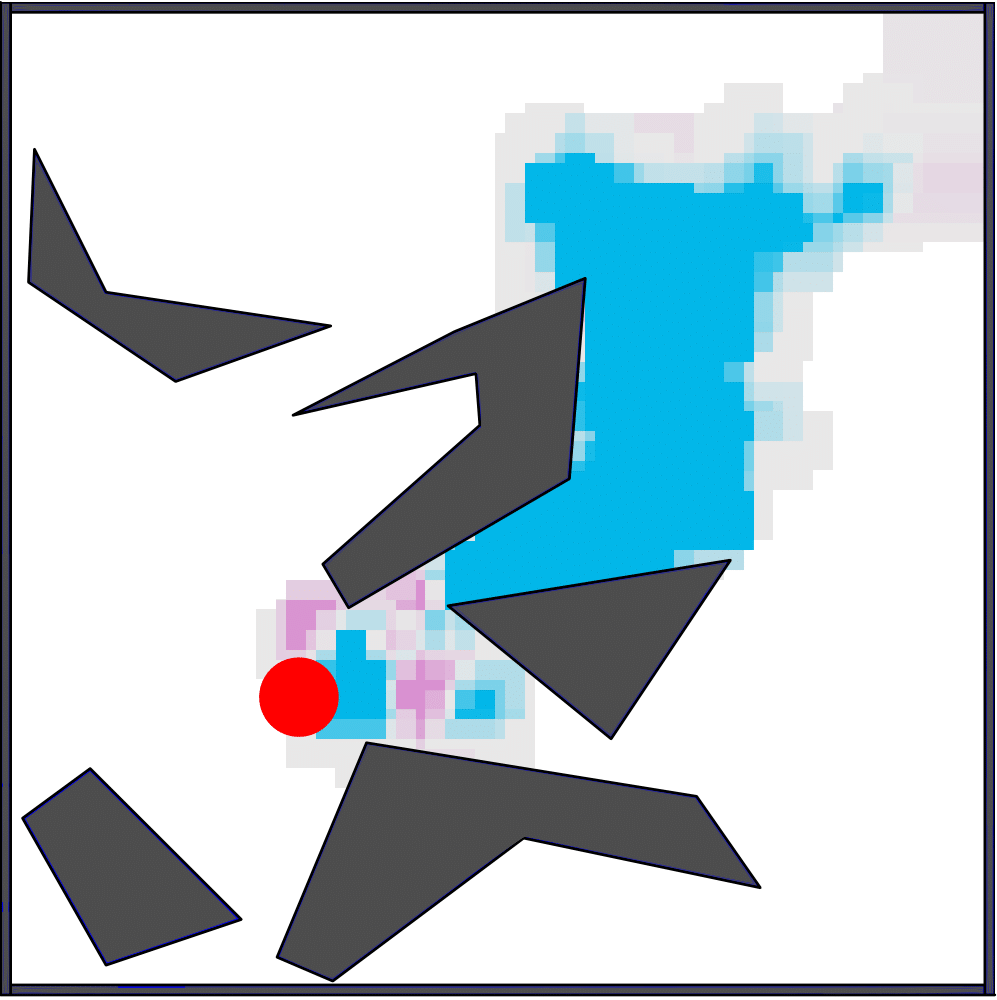}
    \label{sub_fig:hm_gridball_gsp_lambda}
  \end{subfigure}
  \begin{subfigure}{0.06\textwidth}
    \includegraphics[width=\textwidth]{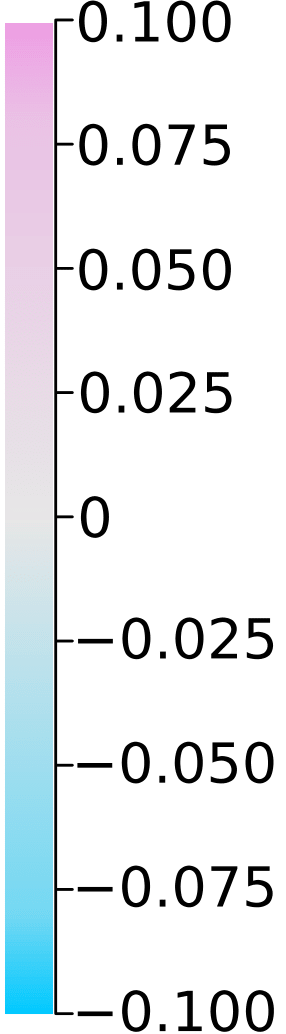}
    \vspace{0.0001cm}
  \end{subfigure}
  \vspace{-0.66cm}
  \caption{The tile-coded value function after one episode in GridBall. Like Figure \ref{fig:hm_fourrooms}, the gray regions show the visited states which were not updated. The red circle is the main goal.}
  \vspace{-0.88cm}
  \label{hm_gridball}
\end{figure}

To examine how GSP translates to faster learning (Hypothesis \ref{hyp:val_prop_time}), we measure the performance (steps to goal) over time for each algorithm in both GridBall and PinBall domains. Figure \ref{fig:gridball_pinball_curve} shows that GSP significantly improves the rate of learning in these larger domains too, with the base learner able to reach near its top performance within 75 and 100 epsiodes in GridBall and PinBall respectively.  All runs are able to find a similar length path to the goal. As the size of the state space increases, the benefit of using local models in the GSP updates still holds. 

Similar to the previous domains, the Sarsa($\lambda$) learner using GSP is able to reach a good policy much faster than the base learner without GSP. In both domains, the GSP and non-GSP Sarsa($\lambda$) learners plateau at the same average steps to goal. Even though the obstacles remain unchanged from GridBall, it takes roughly 50 episodes longer for even the GSP variant to reach a good policy in PinBall. This is likely due to the continuous 4-dimensional state space making the task harder.

\begin{figure}[tbp]
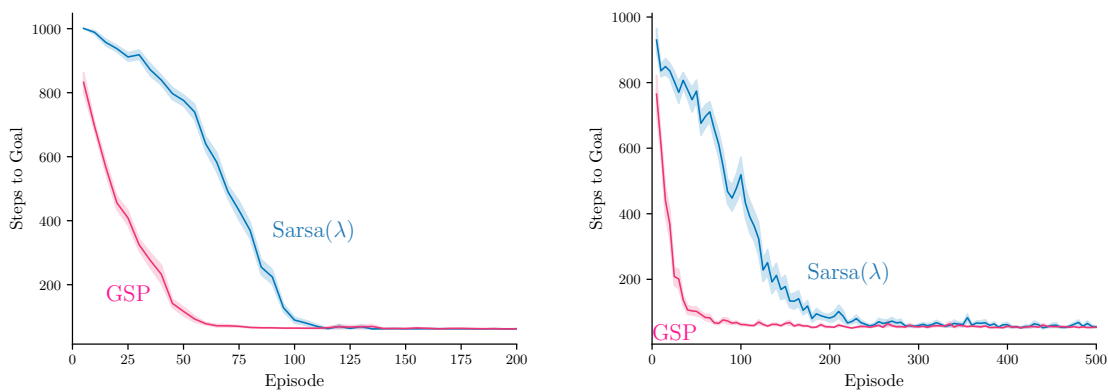

\begin{subfigure}{0.5\textwidth}
    \centering
    \caption{GridBall}
    \scalebox{0.6}{\input{gridball_stepstogoal_noLAVI.pgf}}
\end{subfigure}
\begin{subfigure}{0.5\textwidth}
    \centering
    \caption{PinBall}
    \scalebox{0.6}{\input{pinball_stepstogoal_noLAVI.pgf}}
\end{subfigure}
\caption{ Five episode moving average of return in the GridBall over 200 episodes (left) and PinBall over 500 episodes (right). We performed 30 runs, and showed 1 standard error in the shaded region. All learners used linear value function approximation on their tile coded features.}
  \label{fig:gridball_pinball_curve} 
\end{figure}

\section{Goal-Space Planning in Detail}

In this section we outline the technical definitions and details for the goal-space planning approach. We first discuss the definition of subgoals and then the corresponding subgoal-conditioned models. We then discuss how to use these models for planning, particularly how to do value iteration to compute the subgoal values and then how to use those values to influence values of states (the projection step). We conlude the section summarizing the overall goal-spacing planning framework, including how we can layer GSP into a standard algorithm called Double DQN.

\subsection{Defining Subgoals}
   
Similar to options \citep{sutton1999options}, we define a subgoal as having two components: a set of goal states, and a set of initiation states. These two sets are defined by the indicator functions $m$ and $d$ with $m$ specifying the states in the goal set, and $d$ specifying states in an initiation set. We say that a state $s$ is a member of subgoal $g$ if $\indsg(s,g)=1$ and we only reason about reaching a subgoal $g$ from states $s$ such that $\relsg(s,g)=1$. 

We consider a finite set of subgoals $\Goals$.
While subgoals could represent a single state, they can also describe more complex conditions that are common to a group of states.
For example, $g$ could correspond to a situation where both the front and side distance sensors of a robot report low readings---what a person would call being in a corner. 
If the first two elements of the state vector $s$ consist of the front and side distance sensor, $\indsg(s,g) = 1$ for any states where $s_1, s_2$ are less than some threshold $\epsilon$. 
As another example, in Figure \ref{fig:gsp_overview_diag}, we simply encode the nine subgoals---which correspond to regions with a small radius.
For a concrete example, we visualize subgoals for our experiments in Figures \ref{fourrooms} and \ref{fig:obstacle_layout}.
%
Essentially, our subgoals define a new state space in an abstract MDP, and these new abstract states (subgoals) can be represented in different ways, just like in regular MDPs. 
Finally, we only reason about reaching subgoals from a subset of states, called \emph{initiation sets}. This constraint is key for locality, to learn and reason about a subset of states for a subgoal. 
We assume the existence of a (learned) \emph{initiation function} $\relsg$ to indicate when the agent is sufficiently close in terms of reachability. 
We discuss some approaches to learn this initiation function in Appendix \ref{sec:model-learning}. But, here, we assume it is part of the discovery procedure for the subgoals and instead focus on how to use it. 

For the rest of this paper, we presume we are given subgoals and initiation sets. We develop algorithms to learn and use models, given those subgoals. We expect a complete agent to discover these subgoals on its own, including how to represent these subgoals to facilitate generalization and planning. In this work, though, we first focus on how the agent can leverage reasonably well-specified subgoals. 


\subsection{Defining Subgoal-Conditioned Models}

For planning and acting to operate in two different spaces, we define four models: two used in planning over subgoals (subgoal-to-subgoal) and two used to project these subgoal values back into the underlying state space (state-to-subgoal). Figure \ref{model_diag} visualizes these two spaces. 
 
The state-to-subgoal models are $\rgamma: \States \times \augGoals \rightarrow \RR$ and $\GamModel: \States \times \augGoals \rightarrow [0,1]$, where $\augGoals = \Goals \cup \{ \sterm\}$ if there is a terminal state (episodic problems) and otherwise $\augGoals = \Goals$.  
An option policy $\pi_g: \States \times \Actions \rightarrow [0,1]$ for subgoal $g$ starts from any $s$ in the initiation set, and terminates in $g$---in $\tilde{s}$ where $\indsg(\tilde{s},g) = 1$.
The reward-model $\rgamma(s,g)$ is the discounted rewards under option policy $\pi_g$:
\begin{equation*}
\rgamma(s,g) = \mathbb{E}_{\pi_g}[R_{t+1} + \gamma_g(S_{t+1}) \rgamma(S_{t+1}, g) | S_{t} = s],
\end{equation*}
where the discount is zero upon reaching subgoal $g$
{\small
\begin{equation*}
\gamma_g(S_{t+1})  \defeq \begin{cases}
0 & \text{if $\indsg(S_{t+1},g) = 1$, namely if subgoal $g$ is achieved by being in $S_{t+1}$}, \\
\gamma_{t+1} & \text{else.}
\end{cases} 
\end{equation*}
}
The discount-model $\GamModel(s,g)$ reflects the discounted number of steps until reaching subgoal $g$ starting from $s$, in expectation under option policy $\pi_g$
\begin{align*}
\GamModel(s,g) = \mathbb{E}_{\pi_g}[\indsg(S_{t+1}, g)\gamma_{t+1} + \gamma_{g}(S_{t+1}) \GamModel(S_{t+1}, g) | S_{t} = s].
\end{align*}
These state-to-subgoal will only be queried for $(s,g)$ where $\relsg(s,g) > 0$: they are local models. 

To define subgoal-to-subgoal models,\footnote{The first input is any $g \in \Goals$, the second is $g' \in \augGoals$, which includes $s_{\text{terminal}}$. We need to reason about reaching any subgoal or $s_{\text{terminal}}$. But $s_{\text{terminal}}$ is not a real state: we do not reason about starting from it to reach subgoals.} $\vgg: \Goals \times \augGoals \rightarrow \RR$ and $\Gammagg: \Goals \times \augGoals \rightarrow [0,1]$, we use the state-to-subgoal models. 
For each subgoal $g \in \Goals$, we aggregate $\rgamma(s,g')$ for all $s$ where $\indsg(s, g) = 1$.
\begin{equation}\label{sg_models}
{
\vgg(g,g') \defeq \frac{1}{z(g)}\sum\limits_{s: \indsg(s,g) = 1} \vsg(s,g') \quad \text{and} \quad
\Gammagg(g,g') \defeq \frac{1}{z(g)}\sum\limits_{s: \indsg(s,g) = 1}  \Gammasg(s,g')}
\end{equation}
for normalizer $z(g) \defeq \sum_{s: \indsg(s,g) = 1} \indsg(s,g)$. This definition assumes a uniform weighting over the states $s$ where $\indsg(s,g) = 1$. We could allow a non-uniform weighting, potentially based on visitation frequency in the environment. For this work, however, we assume that $\indsg(s,g) = 1$ for a smaller number of states $s$ with relatively similar $\vsg(s,g')$, making a uniform weighting reasonable.

These models are also local models, as we can similarly extract $\relgg(g,g')$ from $\relsg(s,g')$ and only reason about $g'$ nearby or relevant to $g$. We set $\relgg(g,g') = \max_{s \in \States: \indsg(s, g) > 0} \relsg(s,g')$, indicating that if there is a state $s$ that is in the initiation set for $g'$ and has membership in $g$, then $g'$ is also relevant to $g$. 

 \begin{wrapfigure}[9]{r}{0.5\textwidth}
 \vspace{-0.6cm}
	\begin{centering}
	\includegraphics[width=0.5\textwidth]{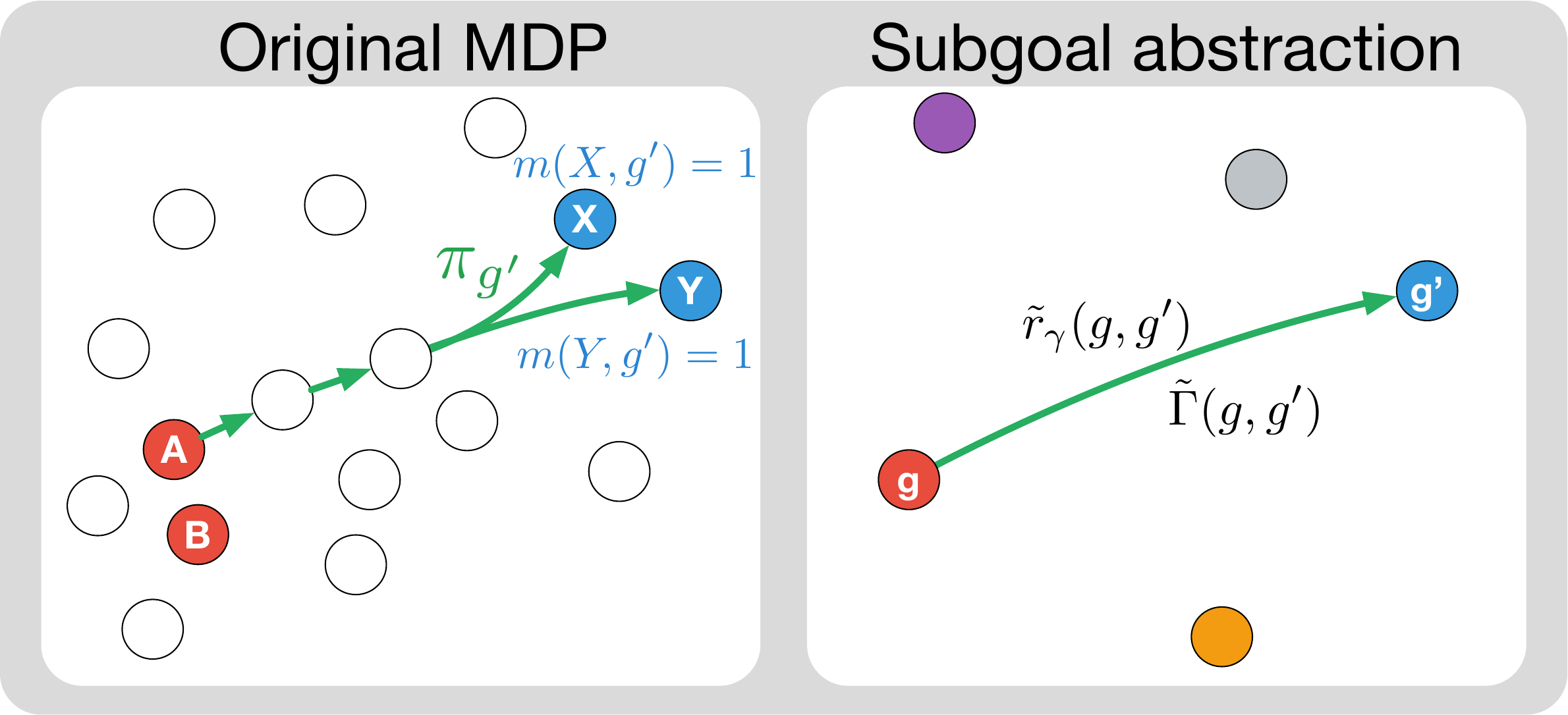}
  \vspace{-0.5cm}
	\caption{Original and abstract state spaces.}\label{model_diag}  
	\end{centering}
\end{wrapfigure}

Let us consider an example, in Figure \ref{model_diag}.
The red states are members of $g$ (namely $\indsg(A,g)=1$) and the blue members of $g'$ (namely $\indsg(X,g')=1$ and $\indsg(Y,g')=1$). For all $s$ in the diagram, $\relsg(s,g') > 0$ (all are in the initiation set): the policy $\pi_{g'}$ can be queried from any $s$ to get to $g'$. The green path in the left indicates the trajectory under $\pi_{g'}$ from $A$, stochastically reaching either $X$ or $Y$, with accumulated reward $\rgamma(A, g')$ and discount $\Gammasg(A,g')$ (averaged over reaching $X$ and $Y$). The subgoal-to-subgoal models, on the right, indicate $g'$ can be reached from $g$, with $\vgg(g,g')$ averaged over both $\rgamma(A, g')$ and $\rgamma(B, g')$ and  $\Gammagg(g, g')$ over $\Gammasg(A, g')$ and $\Gammasg(B, g')$, as described in \eqref{sg_models}.

\subsection{Goal-Space Planning with Subgoal-Conditioned Models}
We can now consider how to plan with these models. 
Planning involves learning $\vgoal(g)$: the value for different subgoals. This can be achieved using an update similar to value iteration, for all $g \in \Goals$
\begin{equation} 
\vgoal(g) = \max\limits_{g' \in \augGoals: \relgg(g,g') > 0} \vgg(g, g') + \Gammagg(g,g') \vgoal(g'). \label{eq:vi_gsp}
\end{equation}
The value of reaching $g'$ from $g$ is the discounted rewards along the way, $\vgg(g, g')$, plus the discounted value in $g'$. If  $\Gammagg(g,g')$ is very small, it is difficult to reach $g'$ from $g$---or takes many steps---and so the value in $g'$ is discounted by more. With a relatively small number of subgoals, we can sweep through them all to quickly compute $\vgoal(g)$. With a larger set of subgoals, we can instead do as many updates possible, in the background on each step, by stochastically sampling $g$. 

We can interpret this update as a standard value iteration update in a new MDP, where 1) the set of states is $\Goals$, 2) the actions from $g \in \Goals$ are state-dependent, corresponding to choosing which $g' \in \augGoals$ to go to in the set where $\relgg(g,g') > 0$ and 3) the rewards are $\vgg$ and the discounted transition probabilities are $\Gammagg$. 
The transition probabilities are typically separate from the discount, but it is equivalent to consider the discounted transition probabilities. 
Under this correspondence, it is straightforward to show that the above converges to the optimal values in this new Goal-Space MDP, shown in \textbf{Proposition \ref{prop_vi_gsp}} in Appendix \ref{app_control}.  

This goal-space planning approach does not suffer from typical issues with model-based RL. Firstly, even though the models are not directly iterated over multiple timesteps,  we still obtain temporal abstractions. This is because the subgoal models are temporally abstract by construction, and do not need to be called iteratively over many timesteps for long horizon planning - which could lead to compounding error. Second, we do not need to predict entire state vectors---or distributions over them---because we instead input the outcome $g'$ into the function approximator. This may feel like a false success as it potentially requires restricting ourselves to a smaller number of subgoals. If we want to use a larger number of subgoals, then we may need a function to generate these subgoal vectors anyway---bringing us back to the problem of generating vectors. However, this is likely easier as 1) the subgoals themselves can be much smaller and more abstract, making it more feasibly to procedurally generate them and 2) it may be more feasible to maintain a large set of subgoal vectors, or generate individual subgoal vectors, than producing relevant subgoal vectors from a given subgoal.

\subsection{Using Subgoal Values to Update the Policy}\label{sec:gsp_policy}
Now let us examine how to use $\vgoal(g)$ to update our main policy. 
The simplest way to decide how to behave from a state is to cycle through the subgoals, and pick the one with the highest value. 
In other words, we can set
\begin{equation} \label{eq:vsub}
\vsub(s) \defeq \begin{cases}
    \max\limits_{g \in \augGoals: \relsg(s,g) > 0} \rgamma(s, g) + \GamModel(s,g) \vgoal(g) & \text{if} \, \, \exists \, g \in \augGoals : \relsg(s,g) >0 , \ \text{(projection step)}\\
    \hfil \text{undefined} & \text{otherwise,}
\end{cases} 
\end{equation}
and take action $a$ that corresponds to the action given by $\pi_g$ for this maximizing $g$ as shown in Figure  \ref{fig:policy_diag}. Note that some states may not have any nearby subgoals, and $\vsub(s)$ is undefined for that state. This is not the only problem with this naive approach. 

 \begin{wrapfigure}{l}{0.3\textwidth}
	\centering
	\includegraphics[width=0.3\textwidth]{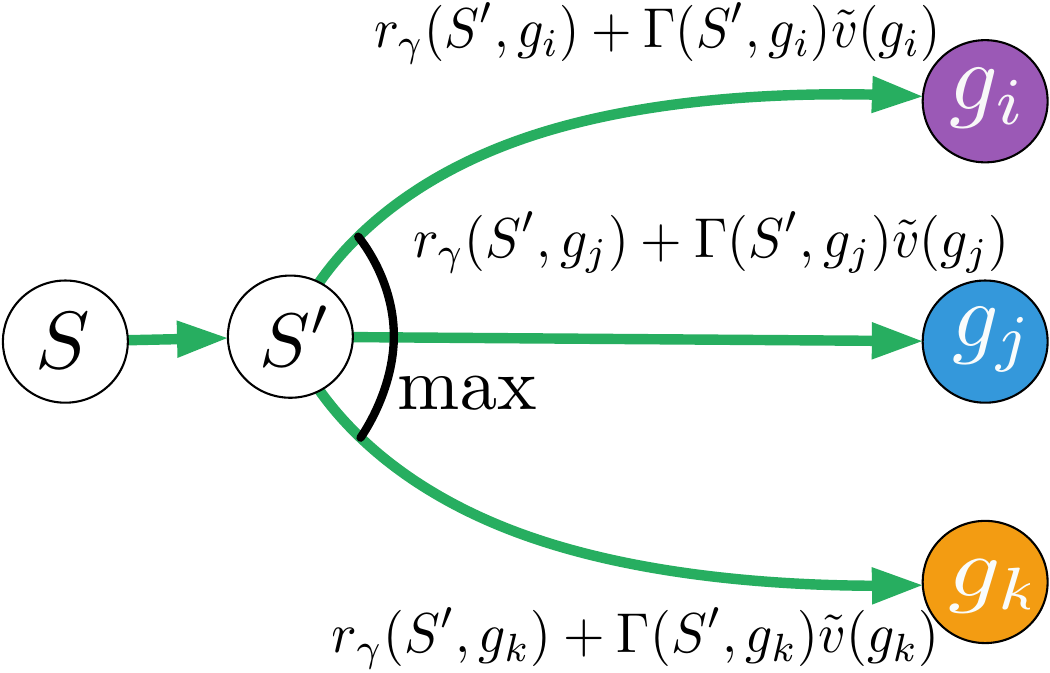}
	\caption{Computing $\vsub(S')$. This is used with $\vsub(S)$ to shape the reward signal for a transition $S\to S'$.}\label{fig:policy_diag}
  \vspace{-0.44cm}
\end{wrapfigure}
There are two other critical issues with this approach. Policies are restricted to go through subgoals, which might result in suboptimal policies. From a given state $s$, the set of relevant subgoals $g$ may not be on the optimal path. This limitation is expressly one we early mentioned we wished to avoid, and is a key limitation of Landmark Value Iteration (LAVI) developed for the setting where models are given \citep{mann2015approximate}. Second, the learned models themselves may have inaccuracies, or planning may not have been completed in the background, resulting in $\vgoal(g)$ that are not yet fully accurate. 

When incorporating $\vsub$ into the learning process, we want to ensure the optimal policy remains unchanged and that $\vsub$ guides the agent by helping evaluate the quality of its decisions. 
A simple way to satisfy these requirements is to use potential-based reward shaping \citep{ng1999reward}. Potential-based reward shaping defines a new MDP with a modified reward function where the agent receives the reward $\tilde R_{t+1} = R_{t+1} + \gamma \Phi(S_{t+1}) - \Phi(S_t)$, where $\Phi \colon \mathcal{S} \to \mathbb{R}$ is any state-dependent function. \citeauthor{ng1999reward} show that such a reward transformation preserves the optimal policy from the original MDP. 
We propose using $\Phi = \vsub$ to modify any temporal TD learning algorithm to be compatible with GSP. For example, in the Sarsa($\lambda$) algorithm, the update for the weights $\qparams$ of the function $q \colon \mathcal{S} \times \mathcal{A} \times \mathbb{R}^n \to \mathbb{R}$ would use the TD-error 
\begin{equation} \label{eq:oci_delta}
\delta \defeq \underbrace{R_{t+1} + \gamma_{t+1} \vsub(S_{t+1}) - \vsub(S_t)}_{\tilde R_{t+1}} \, + \, \gamma_{t+1} q(S_{t+1},A_{t+1}; \qparams) - q(S_t,A_t; \qparams).
\end{equation}

Potential-based shaping rewards the agent for selecting actions that result in transition that increase $\Phi$. 
Consider the case when $\Phi$ represents the \emph{negative} distance to a goal state. When $\Phi(S_{t+1}) > \Phi(S_t)$, then the agent has made progress towards getting to the goal, and it receives a positive addition to the reward. 
When $\Phi$ is an estimate of the value function, one can interpret the additive reward as rewarding the agent for taking actions that increase the value function estimate and penalizes actions that decrease the value.
In this way, using $\Phi=\vsub$, the agent can leverage immediate feedback on the quality of its actions using the information from the abstract value function about what an optimal policy might look like.

For intuition as to why potential-based reward shaping does not bias the optimal policy, notice that $\sum_{t=0}^\infty \gamma^{t} \left (\gamma \Phi(S_{t+1}) - \Phi(S_t) \right ) = -\Phi(S_0)$, which means the relative values of each action remain the same. 
The cancellations of these intermediate terms mean that algorithms like REINFORCE \citep{williams1992simple} or Proximal Policy Optimization \citep{schulman2017proximal} will see little benefit when combined with potential-based reward shaping (as they use the discount sum of all rewards to update the policy). 
For these algorithms, one could instead estimate a $q_{g^\star}$ and leverage trajectory-wise control variates \citep{pankov2018cv,cheng2019trajcv,huang2020drcv}.
We leave investigation of this approach to future work and focus on TD learning algorithms in this paper.

It is important to note that if $\vsub$ can help improve learning, it can also make learning harder if its guidance makes it less likely for an agent to sample optimal actions. This increase in difficulty is likely if the models used to construct the abstract MDP and $\vsub$ have substantial errors. In this case, the agent has to learn to overcome the bad ``advice" provided by $\vsub$. We investigate this further with non-stationary environments and inaccurate models in Sections \ref{sec:region_of_attraction} and \ref{sec:mod_acc} respectively.

\subsection{Putting it All Together: The Full Goal-Space Planning Algorithm}

The remaining piece is to learn the models and put it all together. Learning the models is straightforward, as we can leverage the large literature on general value functions \citep{sutton2011horde} and UVFAs \citep{schaul2015universal}. There are nuances involved in 1) restricting updating to relevant states according to $\relsg(s,g)$, 2) learning option policies that reach subgoals, but also maximize rewards along the way and 3) considering ways to jointly learn $d$ and $\GamModel$. We include these details in Appendix \ref{sec:model-learning} as well as detailed pseudocode in Appendix. \ref{app:pseudocode}. Here, we summarize the higher-level steps.

\begin{wrapfigure}{r}{0.47\textwidth}
 \vspace{-0.44cm}
	\centering
	\includegraphics[width=0.47\textwidth]{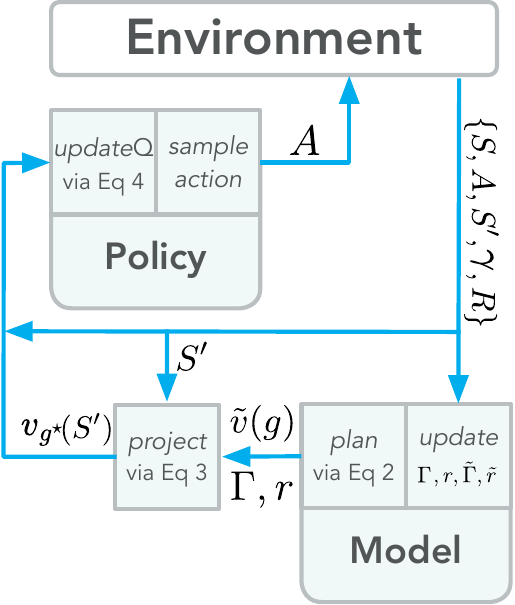}
	\caption{Goal-Space Planning.}\label{gsp_diagram}  
    \vspace{-2cm}
\end{wrapfigure}
The algorithm is visualized in Figure \ref{gsp_diagram}. The steps of agent-environment interaction include:
\begin{enumerate}
    \item take action $A_t$ in state $S_t$, to get $S_{t+1}, R_{t+1}$ and $\gamma_{t+1}$
    
    \item query the model for $\rgamma(S_{t+1},g)$, $\Gammasg(S_{t+1},g)$, $\vgoal(g)$ for all $g$ where $\relsg(S_{t+1},g) > 0$

    \item compute projection $\vsub(S_{t+1})$, using \eqref{eq:vsub}
    
    \item update the main policy with the transition and $\vsub(S_{t+1})$, using \eqref{eq:oci_delta}. 
\end{enumerate} 
All background computation is used for model learning using a replay buffer and for planning to obtain $\vgoal$, so that they can be queried at any time on step 2.

\newcommand{\qtarg}{\qparams_{\mathrm{targ}}}
\newcommand{\optionset}{\Pi}
\newcommand{\scqueue}{P}

To be more concrete,
Algorithm \ref{alg:DDQN_GSP} shows the GSP algorithm, layered on DDQN \citep{van2016deep}. DDQN is a variant of DQN---and so relies on replay---that additionally incorporates the idea behind Double Q-learning to avoid maximization bias in the Q-learning update \citep{hasselt2010double}. All new parts relevant to GSP are colored blue; without the blue components, it is a standard DDQN algorithm. The primary change is the addition of the potential to the action-value weights $\qparams$, with the other blue lines primarily around learning the model and doing planning. GSP should improve on replay because it simply augments replay with a  potential difference that more quickly guides the agent to take promising actions. 

\begin{algorithm}[t]
  \caption{\subroutine{{\color{blue}GSP} (built on DDQN)}}
  \label{alg:DDQN_GSP}
\begin{algorithmic}
\State Initialize base learner parameters $\qparams,\qtarg = \mathbf{w}_0$, {\color{blue} set of subgoals $\Goals$, relevance function $d$, model parameters $\mparams = (\rparams, \gamparams, \polparams), \tilde{\mparams} = (\vggparams, \gamggparams)$}
\State Sample initial state $s_0$ from the environment
  \For {$t \in 0, 1, 2, ...$}
      \State Take action $a_t$ using $q$ (e.g., $\epsilon$-greedy), observe $s_{t+1}, r_{t+1}, \gamma_{t+1}$
  \State Add experience $(s_t, a_t, s_{t+1}, r_{t+1}, \gamma_{t+1})$ to replay buffer $D$    
  \State {\color{blue}\subroutine{DDQNModelUpdate}$()$ (see Algorithm \ref{alg:DDQN_model})}
  \State {\color{blue}\subroutine{Planning}$()$ (see Algorithm \ref{alg:GoalSpacePlanning})}
  \For{$n$ mini-batches}
    \State Sample batch $B = \{ (s, a, r, s', \gamma )\}$ from $D$
    {\color{blue} \If{$d(s, \cdot), d(s', \cdot) > 0$}
      \State $\vsub(s) = \max_{g \in \augGoals: \relsg(s,g) > 0} \rgamma(s, g; \rparams) + \GamModel(s,g; \gamparams) \vgoal(g)$
      \State $\vsub(s') = \max_{g \in \augGoals: \relsg(s',g) > 0} \rgamma(s', g; \rparams) + \GamModel(s',g; \gamparams) \vgoal(g)$
      \State $\tilde{r} = r + \gamma \vsub(s') - \vsub(s) $
    \Else
      \State $\tilde{r} = r$
    \EndIf
    \State $Y(s,a,r,s',\gamma) = \tilde{r} + \gamma q(s',\argmax_{a'} q(s',a'; \qparams); \qtarg)$}
    \State $L = \frac{1}{|B|}\sum_{(s, a, r, s', \gamma) \in B} (Y(s,a,r,s',\gamma) - q(s,a; \qparams))^2$
    \State $\qparams \leftarrow \qparams - \alpha \nabla_\qparams L$ 
    \If{$n_{\text{updates}}\% \tau == 0$}
    \State $\qtarg \leftarrow \qparams $
    \EndIf
    \State $n_{\text{updates}}$ = $n_{\text{updates}}$ + 1
  \EndFor
  \EndFor
\end{algorithmic}
\end{algorithm}

\section{Experiments to understand GSP more deeply} \label{sec:experiment}

This section demonstrates the capabilities and limitations of the GSP framework. Having seen the role of GSP in propagting value and speeding up learning, we investigate how this utility of GSP is affected by:  1) deep non-linear function approximation, 2) subgoal placement, 3) sensitivity to models, 3) the $\vsub$ potential used.
%

\subsection{GSP with Deep Reinforcement Learning}\label{sec:non_linear}
The previous results shed light on the dynamics of value propagation with GSP when a learner is given a representation of it's environment (a look-up table or a tile coding). A natural next step is to look at the whether the reward and transition dynamics learnt in GSP can still propagate value (Hypothesis \ref{hyp:val_prop_time}) in the deep RL setting, where the learner must also learn a representation of its environment.
We test this by running a DDQN base learner \citep{van2016deep} in the PinBall domain, with GSP layered on DDQN as in Algorithm \ref{alg:DDQN_GSP}. 
The base learner's complete hyper-parameter specifications can be found in Appendix \ref{learn_opt_pol}. 

\begin{figure}[th]
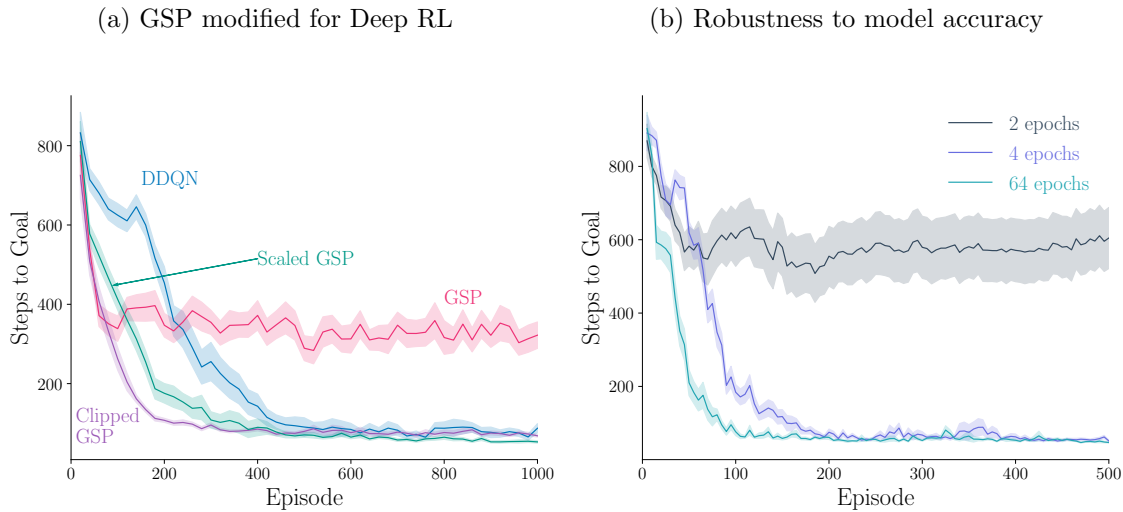

  \begin{subfigure}{0.49\textwidth}
    \centering
    \caption{GSP modified for Deep RL}
    \scalebox{0.45}{\input{ddqn.pgf}}
  \label{fig:dqn} 
\end{subfigure}
  \begin{subfigure}{0.49\textwidth}
    \centering
    \caption{Robustness to model accuracy}
    \scalebox{0.45}{\input{partial_fixed.pgf}}
  \label{fig:partial_mods} 
\end{subfigure}
  \caption{Investigating the behavior of GSP in the deep reinforcement learning setting in PinBall. (a) Following the format of Figure \ref{fig:gridball_pinball_curve}, we show the 20 episode moving average of steps to the main goal in PinBall. 
  (b) Five episode moving average of steps to goal in PinBall for GSP with models trained with differing numbers of epochs.}
  \label{fig:deep} 
\end{figure}


Unlike the previous experiments, using GSP out of the box resulted in the base learner converging to a sub-optimal policy. 
This is despite the fact that we used the same $\vsub$ as the previous PinBall experiments.
We investigated the distribution of shaping terms added to the environment reward and observed that they were occasionally an order of magnitude greater than the environment reward.
Though the linear and tabular methods handled these spikes in potential difference gracefully, these large displacements seemed to causes issues when using neural networks and a DDQN base learner. 

We tested two variants of GSP that better control the magnitudes of the raw potential differences ($\gamma \Phi(S_{t+1}) - \Phi(S_t)$). We adjusted for this by either clipping or down-scaling the potential difference added to the reward. The scaled reward multiplies the potential difference by 0.1. Clipped GSP clips the potential difference into the $[-1, 1]$ interval. With these basics magnitude controls, GSP again learns significantly faster than its base learner, as shown in Figure \ref{fig:dqn}. 

\subsection{Robustness to Accuracy of the Learned Models} \label{sec:mod_acc}

In this section, we investigate how robust GSP is to inaccuracy of its models. 
When examining the accuracy of the learned models, we found the the errors in $\rgamma$ and $\Gamma$ could be as high as 20\% in some parts of the state space (see Appendix \ref{app_model_learning} for more information). Despite this level of inaccuracy in some states, GSP still learned effectively, as seen in Sections \ref{sec:exp_specification} and \ref{sec:larger_state}. 

We conducted a targeted experiment controlling the level of accuracy to better understand this robustness and test the following hypothesis.
\begin{hypothesis}
     GSP can learn faster with more accurate models, but can still improve on the base learner even with partially learned models. 
     \label{hyp:model_accuracy}
\end{hypothesis}
We varied the number of epochs to
obtain models of varying accuracy. Our models were fully connected artificial neural networks, and we learn the models for each subgoal by performing mini-batch stochastic gradient descent on a dataset of trajectories that end in a member state of that subgoal $g$. Full implementation details for this mini-batch stochastic gradient descent can be found in Appendix \ref{app:sec_learning_models}.

As expected, Figure \ref{fig:partial_mods} shows that more epochs over the same dataset of transitions improves how quickly the base learner reaches the main goal. Within 4 epochs of model training, the learner is able to reach a good policy to the main goal. However, if the model is very inaccurate (2 epochs), the GSP update will bias the base learner to a sub-optimal policy. There is a trend of diminishing improvement when iterating over the same dataset of experience: doubling the number of epochs from two to four results in a policy that reaches the main goal 10$\times$ quicker, but a learner which used a further 16$\times$ the number of epochs attains a statistically identical episode length by episode 500. While more accurate models lead to faster learning, relatively few epochs are required to propagate enough value to help the learner reach a good policy.


\subsection{The Impact of Subgoal Selection}\label{sec_subgoalselection}

While the above experiments show that goal-space planning can speed up learning and propagate value faster, it is crucial to understand how value propagation depends on the selection of subgoals. 
Specifically, we want to identify 1) how the graphical structure of the subgoals impacts value propagation in $\vsub$ and 2) how quickly the base learner can change their policy.
To answer these questions we consider a setting where the agent is presented with new information that indicates it should change its behavior. We will then update the state-to-subgoal and subgoal-to-subgoal models online and measure how much $\vsub$ changes, along with how quickly the base learner can change its policy on different subgoal configurations.

For this task, the agent has to decide between taking one of two paths to a goal state. We initialize the agent to use an optimal policy so it takes the shorter of the two paths. Then we introduce a lava pool along the optimal path that gives the agent a large negative reward for entering it. This negative reward means the initial path is no longer optimal and that the agent needs to switch to the alternate path. 
The FourRooms environment uses $-1$ reward per step, and each state in the lava pool has a reward of $-20$. The agent, initialized with $q^\star$ for the original FourRooms environment, is run for 100 episode in the new FourRooms environment with the lava pool. We run GSP with Sarsa in this tabular setting for all subgoal configurations for 200 runs each. 

We test the following hypothesis.
\begin{hypothesis}
    The placement of subgoals along the initial and alternate optimal paths are essential for fast adaptation. 
\end{hypothesis}
To test this hypothesis, we will evaluate the following four subgoal configurations.
The first subgoal arrangement contains no subgoals near the goal state and the goal state is not connected the other subgoals. The second contains a subgoal on the initial optimal path, but no subgoal on the alternate path. The third is where there is subgoal on the alternate path but no subgoal on the initial optimal path.  The last is where there are subgoals on both paths. We illustrate these subgoal configurations in Figure \ref{fig:lavasubggoals}. Recall that a subgoal's initiation set is the states in the two adjacent rooms.

\begin{figure}[ht]
  \centering
    \includegraphics[width=\textwidth]{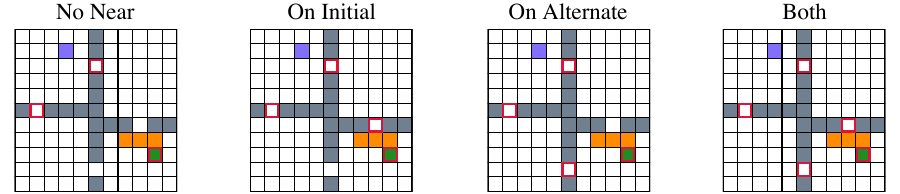}
    \caption{Different subgoal configurations in the FourRooms environment with a lava pool. The purple square is the learner's starting location, the gray squares the walls, the orange squares the location of the lava pool, and the green square the goal location. The only difference between these figures are the red boxes, which indicate the states that are subgoals for that configuration. 
    }
    \label{fig:lavasubggoals}
\end{figure}

For this experiment, the state-to-subgoal models and abstract MDP need to be updated online. However, since only the reward function is changing, we only need to update the reward models $\rgamma$ and $\vgg$. Furthermore, we can represent $\rgamma$ using successor features so that the agent only needs to estimate the reward function \citep{barreto2017successor}. Let $\boldsymbol{\psi}^{\pi_g}(s) \approx \EE_{\pi_g} \left [ \sum_{k=0}^\infty \prod_{k'=0}^k \gamma_{t+k'} \boldsymbol{\phi}(S_{t+k}) | S_t=s \right ]$, where $\boldsymbol{\phi}(S_{t}) \in \RR^n$ is a vector of features for state $S_t$ and actions are selected according to option policy $\pi_g$. Then $\rgamma(s,g) = \mathbf{w}^\top \boldsymbol{\psi}^{\pi_g}(s)$, where $\mathbf{w} \in \mathbb{R}^n$. 
The learner can then update $\rgamma$ by estimating the reward function with stochastic gradient descent, i.e., $\mathbf{w} \gets \mathbf{w} + \eta [R_t - \mathbf{w}^\top \boldsymbol{\phi}(S_t)] \boldsymbol{\phi}(S_t)$ for some scalar step size $\eta$.

\begin{figure}[tbp]
  \centering
    \includegraphics[width=\textwidth]{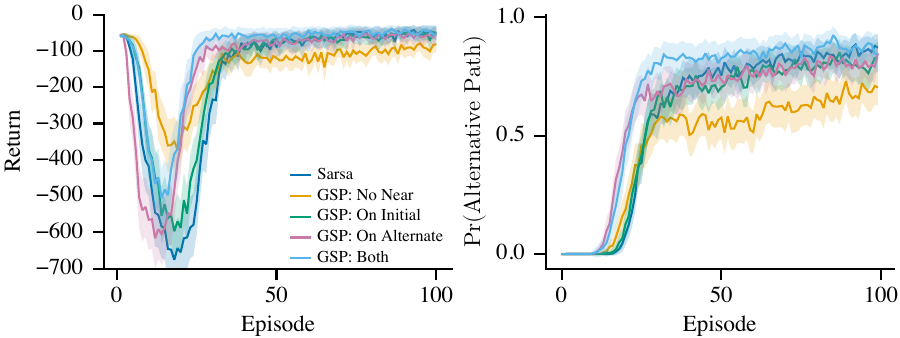}
    \caption{This figure shows the average return (left) and average probability the agent will take the alternative path (right) from each episode. Shaded regions represent (0.05,0.9) tolerance intervals \citep{patterson2020empirical} over 200 trials.} 
    \label{fig:lavapoolcurves}
\end{figure}

To understand how learning is impacted by the subgoal configuration we show the return and probability the agent takes the alternative path in Figure \ref{fig:lavapoolcurves}. The first thing that is apparent is that all configurations are able to change the policy so that the probability of taking the alternative path increases. The main differences come from how quickly, in expectation, each configuration is able to change the policy to have a high probability of taking the alternate path. The Both and On Alternate subgoal configurations have the quickest change in the policy on average, while the other methods are slower. The No Near configuration also seems to, on average, have the smallest increase in probability of taking the alternate path. These results suggest that for GSP to be most impactful, there needs to be a path through the subgoals that represents the desirable path. 

\begin{figure}[tb]
  \centering
    \includegraphics[width=\textwidth]{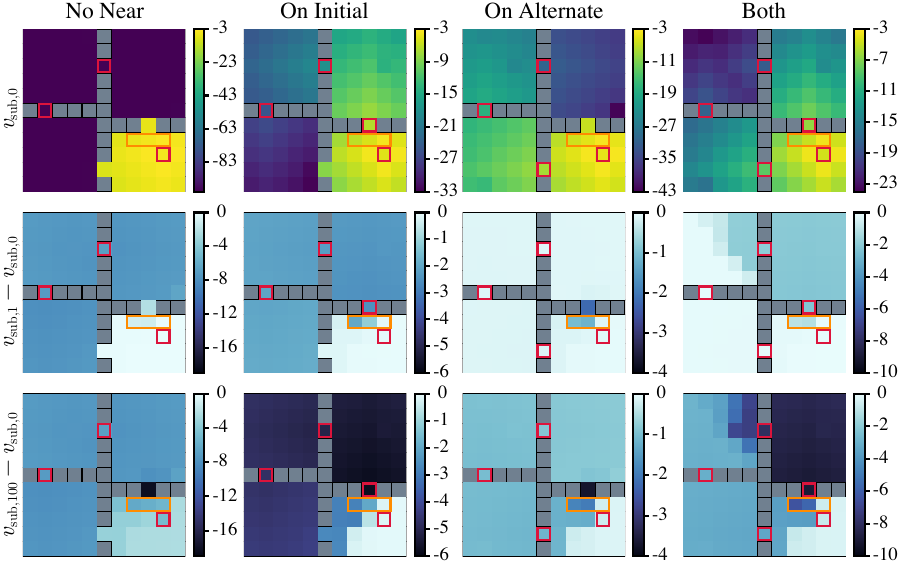}
    \caption{The top row of this figure shows the value of $\vsub$ for each state before the lava pool, for each subgoal configuration. The second and third rows show the change in $\vsub$ after the first and $100^{\text{th}}$ episode, after the lava pool is introduced.}
    \label{fig:lavapoolvsub}
\end{figure}

To better understand these results, we look more closely at $\vsub$ for each configuration. We measure how $\vsub$ changes over learning, i.e. $\vsubt{t} - \vsubt{0}$, where $\vsubt{i}$ is the value of $\vsub$ after episode $i$. 
We first examine the values of $\vsub$ for each subgoal configuration before the introduction of the lava pool (top row in Figure \ref{fig:lavapoolvsub}). For the No Near subgoals configuration, $\vsub$ has a disconnected graph, so all but the room with the goal state has a large negative value. 
For both On Initial and On Alternate configurations, $\vsub$ is the smallest in the room that is furthest from the goal state according to the abstract MDP. This is due to the structure of the abstract MDP only knowing about a single path to the goal state. 
In the Both subgoal configuration $\vsub$ closely represents the optimal value function in each state. 

We then look at the change in $\vsub$ after the lava pool is introduced, after one episode (middle row in Figure \ref{fig:lavapoolvsub}) and 100 episodes (bottom row in Figure \ref{fig:lavapoolvsub}).
We notice that the change in $\vsub$ follows the same patterns as the value representation. The value in the No Near subgoal configuration does not propagate information from the lava pool to rooms outside bottom left room. For the On Initial configuration, the value decreases quickly in the top right room, but also the other two rooms as well. After $100$ episodes the value is decreased in most states but the top right room sees the largest decrease. For the On Alternate configuration value is not quickly propagated after discovering the lava pool because there is no connected region from the path the agent took to the lava pool. However, overtime it does propagate small changes that comes from a small probability of hitting the lava pool on the alternate path. With the Both subgoal configuration, value is quickly decreased in the states that would take the initial path, but not the alternate path. This indicates the desirable path through subgoals changes in the abstract MDP. Over time the decrease in value is largely isolated to the top right room with the decreases in the other rooms coming from small chances of hitting the lava pool on the alternate path.

\textbf{Remark:} We also examined the utility of these subgoals for learning before the lava pool was introduced. Here we found that the On Alternate subgoal placement actually caused the agent to learn a suboptimal policy, because it biased it towards the alternate path initially. You can see a visualization of this $\vsub$ in Figure \ref{fig:lavapoolvsub} (top row, third column). The base learner does not use a smart exploration strategy to overcome this initial bias, and so settles on a suboptimal solution---namely, to take the slightly longer alternate path. See Appendix \ref{sec:region_of_attraction} for the full details and results for this experiment. Note that this suboptimality did not arise in the above experiment, because the lava pool made one path significantly worse than the other, pushing the agent.

\subsection{Comparison with Other Potentials} \label{sec:potentials}

Having shown several instances of $\vsub$ being used as a potential for reward shaping, we shall now investigate how much of the GSP performance improvements are due to $\vsub$ capturing useful information about the MDP, rather than just being a general consequence of using a good heuristic with potential-based reward shaping.
\begin{hypothesis}
    Using any potential function that captures the relative importance of a transition will increase the learning efficiency of the base learner, but $\vsub$ that is tailored to the MDP will allow for faster learning.
\end{hypothesis}
%
We test this by comparing $\vsub$ with two other potentials - an informative and an uninformative one - in the PinBall domain.
The first potential function is the negative $L_2$ distance in position space (scaled) to the main goal, $(x_g, y_g)$,
\begin{equation}    
    \Phi(S_t) = -100 \times \left \|       
        \begin{bmatrix}
            x_g \\ 
            y_g \end{bmatrix} - 
        \begin{bmatrix}
            x(S_t) \\ 
            y(S_t) 
        \end{bmatrix} 
        \right \|_2,
\end{equation}
where $x(S_t)$ and $y(S_t)$ are functions that return the $x$ and $y$ coordinates of the agent's state respectively. This potential function captures a measure of closeness to the goal state, but does not consider obstacles or the velocity component. So it should provide some learning benefit but should not be as helpful as $\vsub$. We scale this potential by a factor of $100$ to make it comparable in magnitude to $\vsub$. Reward shaping with the unscaled $L_2$ distance did not have any significant effect on the base learner.
The second potential is created by randomly assigning a potential value for each state, i.e., 
\begin{equation}
 \forall s \in \mathcal{S}, \ \Phi(s) \gets \mathcal{U}[-100,0].
 \end{equation}
This potential does not encode any useful information about the environment on average. It should not help learning and could even make it harder if it encourages the agent to take sub-optimal actions. 

\begin{figure}[htbp]
    \centering
    \scalebox{0.7}{\input{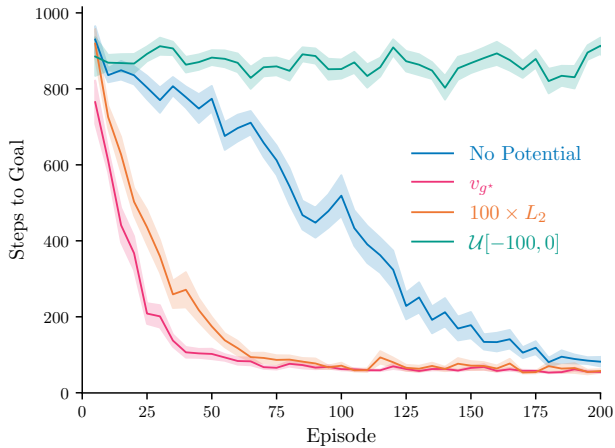}}
  \caption{Five episode moving average of steps to goal in PinBall with different potential functions for $\Phi(s)$. We follow the format of Figure \ref{fig:dqn}.}
  \label{fig:potentials}  
\end{figure}

We test the impacts of these potentials by comparing the performance of a Sarsa($\lambda$) base learner using each of the three potentials. We use the PinBall domain with the same subgoal configuration and settings as in Section \ref{sec:larger_state} and display the results in Figure \ref{fig:potentials}.
%
Using $\vsub$ for the potential reaches the main goal fastest, though
using $L_2$ also resulted in significant speed-ups over the base learner (No potential). The $L_2$ heuristic, however, is specific to navigation environments, and finding general purpose heuristics is difficult. Using a subgoal formulation for the potential is more easily extensible to other environments. The random potential harms performance, likely because it skews the reward and impacts exploration negatively. 

\subsection{Comparing to an Alternative Way of using $\vsub$}\label{sec:alt_way}

$\vsub$ is a core component of the new view on planning presented in this paper. As defined in \eqref{eq:vsub}, it reflects an approximate value of a state by using the value of a nearby subgoal. We used $\vsub$ through potential-based reward shaping, but other approaches are possible. For example, another approach is to solely bootstrap off of $\vsub$'s prediction, instead of the base learner's $q$ estimate,
\begin{equation} \label{eq:delta_lavi}
R_{t+1} + \gamma_{t+1}\vsub(S_{t+1}) - q(S_t,A_t; \qparams).
\end{equation}

The update with this TD error is reminiscent of an algorithm called Landmark Approximate Value Iteration (LAVI) \citep{mann2015approximate}. LAVI is designed for the setting where a model, or simulator, is given. Similar to GSP, the algorithm plans only over a set of landmarks (subgoals). They assume that they have options that terminate near the landmarks, and do value iteration with the simulator by executing options from only the landmarks. The greedy policy for a state uses the computed values for landmark states by selecting the option that takes the agent to the best landmark state, and using options to move only between landmark states from there. The planning is much more efficent, because the number of landmark states is relatively small, but the policies are suboptimal. 

We could similarly use $\vsub$, by running the option to bring the agent to the best nearby subgoal. However, a more direct comparison in our setting is to use the modified TD error update above. We call this update Approximate LAVI, to recognize the similarity to this elegant algorithm.
In all environments, the approximate LAVI learner either learns much slower or converges to a sub-optimal policy instead.
\begin{figure}[htbp]
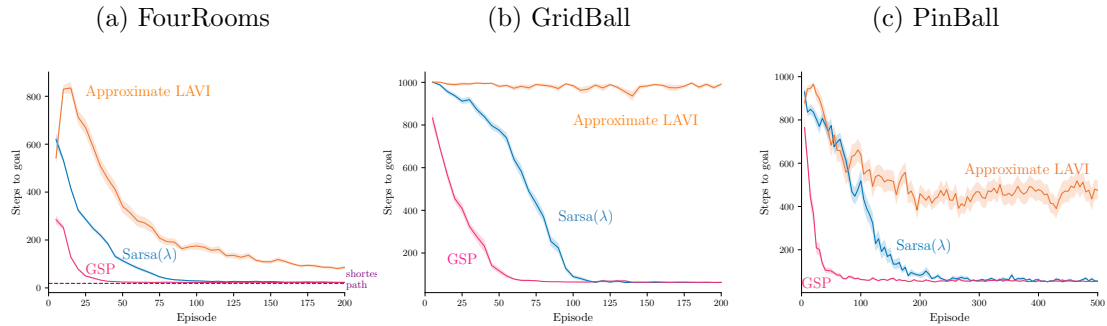

\label{fig:alt_way}
\begin{subfigure}{0.32\textwidth}
    \centering
    \caption{FourRooms}
    \scalebox{0.4}{\input{fourrooms_tab_stepstogoal.pgf}}
\end{subfigure}
\begin{subfigure}{0.32\textwidth}
    \centering
    \caption{GridBall}
    \scalebox{0.4}{\input{gridball_stepstogoal.pgf}}
\end{subfigure}
\begin{subfigure}{0.32\textwidth}
    \centering
    \caption{PinBall}
    \scalebox{0.4}{\input{pinball_stepstogoal.pgf}}
\end{subfigure}
\caption{ Five episode moving average of return in FourRooms, GridBall and PinBall. As with previous learning curves, we follow the format of 30 runs and shaded region representing 1 standard error.}
\end{figure}

In our preliminary experiments, we had investigated an update rule that partially bootstraps off $\vsub$. Namely, we used a TD error of $R_{t+1} + \gamma_{t+1}(\beta\vsub(S_{t+1}) + (1-\beta)q(S_{t+1}, A_{t+1})) - q(S_t, A_t)$, where $\beta \in [0,1]$. Potential based reward shaping with $\vsub$ was found to outperform this technique. We discuss this more in Appendix \ref{app:alt_way}.


\section{Relationships to Other Model-based Approaches}

Now that we have detailed the GSP algorithm and investigated its benefits and limitations, we contrast it to other approaches for background planning. In this section we first provide an overview of related work to better place GSP amongst the large literature of related ideas, beyond background planning. Then we more explicitly contrast GSP to Dyna and Dyna with options, which are two natural approaches to background planning. Finally, we provide a short discussion around one of the key properties of GSP, efficient planning, and why this is desirable for background planning approaches.

\subsection{Related Work}

A variety of approaches have been developed to handle issues with learning and iterating one-step models. Several papers have shown that using forward model simulations can produce simulated states that result in catastrophically misleading values \citep{jafferjee2020hallucinating,vanhasselt2019when,lambert2022investigating}. This problem has been tackled by using reverse models \citep{pan2018organizing,jafferjee2020hallucinating,vanhasselt2019when}; primarily using the model for decision-time planning \citep{vanhasselt2019when,silver2008samplebased,chelu2020forethought}; and improving training strategies to account for accumulated errors in rollouts \citep{talvitie2014model,venkatraman2015improving,talvitie2017selfcorrecting}. 
An emerging trend is to avoid approximating the true transition dynamics, and instead learn dynamics  tailored to predicting values on the next step correctly \citep{farahmand2017valueaware,farahmand2018iterative,ayoub2020modelbased}. This trend is also implicit in the variety of techniques that encode the planning procedure into neural network architectures that can then be trained end-to-end \citep{tamar2016value,silver2017predictron,oh2017value,weber2017imaginationaugmented,farquhar2018treeqn,schrittwieser2020mastering}. We similarly attempt to avoid issues with iterating models, but do so by considering a different type of model.

Current deep model-based RL techniques plan in a lower-dimensional abstract space where the relevant features from the original high-dimensional experience are preserved, often refered to as a \emph{latent space}. MuZero \citep{schrittwieser2020mastering}, for example, embeds the history of observations to then use predictive models of values, policies and one-step rewards. Using these three predictive models in the latent space guides MuZero’s Monte Carlo Tree Search without the need for a perfect simulator of the environment. Most recently, DreamerV3 demonstrated the capabilities of a discrete latent world model in a range of pixel-based environments \citep{hafner2023mastering}. There is growing evidence that it is easier to learn accurate models in a latent space.

Temporal abstraction has also been considered to make planning more efficient, through the use of hierarchical RL and/or options. MAXQ \cite{dietterich2000hierarchical} introduced the idea of learning hierarchical policies with multiple levels, breaking up the problem into multiple subgoals. A large literature followed, focused on efficient planning with hierarchical policies \citep{diuk2006hierarchical} and using a hierarchy of MDPs with state abstraction and macro-actions \citep{bakker2005hierarchical,konidaris2014constructing,konidaris2016constructing,gopalan2017planning}. See \citet{gopalan2017planning} for an excellent summary.

Rather than using a hierarchy and planning only in abstract MDPs, another strategy is simply to add options as additional (macro) actions in planning, still also including primitive actions. Similar ideas were explored before the introduction of options \citep{singh1992scaling, dayan1992feudal}. 
There has been some theoretical characterization of the utility of options for improving convergence rates of value iteration \citep{mann2014scaling} and sample efficiency \citep{brunskill2014pacinspired}, though also hardness results reflecting that the augmented MDP is not necessarily more efficient to solve \citep{zahavy2020planning} and hardness results around discovering options efficient for planning \citep{jinnai2019finding}. Empirically, 
incorporating options into planning has largely only been tested in tabular settings \citep{sutton1999options,singh2004intrinsically,wan2021averagereward}. Recent work has considered mechanism for identifying and learning option policies for planning under function approximation \citep{sutton2022rewardrespecting}, but as yet did not consider issues with learning the models. 

There has been some work using options for planning that is more similar to GSP, using only one-level of abstraction and restricting planning to the abstract MDP.
\citet{hauskrecht2013hierarchical} proposed to plan only in the abstract MDP with macro-actions (options) and abstract states corresponding to the boundaries of the regions spanned by the options, which is like restricting abstract states to subgoals. The most similar to our work is LAVI, which restricts value iteration to a small subset of landmark states \citep{mann2015approximate}.\footnote{A similar idea to landmark states has been considered in more classical AI approaches, under the term bi-level planning \citep{wolfe2010combined,hogg2010learning,chitnis2021learning}. These techniques are quite different from Dyna-style planning---updating values with (stochastic) dynamic programming updates---and so we do not consider them further here.} These methods also have similar flavors to using a hierarchy of MDPs, in that they focus planning in a smaller space and (mostly) avoid planning at the lowest level, obtaining significant computational speed-ups. The key distinction to GSP is that we are not in the traditional planning setting where a model is given; in our online setting, the agent needs to learn the model from interaction.

The use of landmark states has also been explored in \emph{goal-conditioned RL}, where the agent is given a desired goal state or states. This is a problem setting where the aim is to learn a policy $\pi(a| s, g)$ that can be conditioned on different possible goals. The agent learns for a given set of goals, with the assumption that at the start of each episode the goal state is explicitly given to the agent. After this training phase, the policy should generalize to previously unseen goals. Naturally, this idea has particularly been applied to navigation, having the agent learn to navigate to different states (goals) in the environment. The first work to exploit the idea of landmark states in GCRL
was for learning and using universal value function approximators (UVFAs) \citep{huang2019mapping}. The UVFA conditions action-values on both state-action pairs as well as landmark states. The agent can reach new goals by searching on a learned graph between landmark states, to identify which landmark to moves towards. A flurry of work followed, still in the goal-conditioned setting \citep{nasiriany2019planning,emmons2020sparse,zhang2020generating,zhang2021world,aubret2021distop,hoang2021successor,gieselmann2021planningaugmented,kim2021landmarkguided,dubey2021snap}. 

Some of this work focused on exploiting landmark states for planning in GCRL. \citet{huang2019mapping} used landmark states as interim subgoals, with a graph-based search to plan between these subgoals \citep{huang2019mapping}. The policy is set to reach the nearest goal (using action-values with cost-to-goal rewards of -1 per step) and learned distance functions between states and goals and between goals. These models are like our reward and discount models, but tailored to navigation and distances. \citet{nasiriany2019planning} built on this idea, introducing an algorithm called Latent Embeddings for Abstracted Planning (LEAP), that using gradient descent to search for a sequence of subgoals in a latent space. 

The idea of learning models that immediately apply to new subtasks using successor features is like GCRL, but does not explicitly use landmark states. 
The option keyboard involves encoding options (or policies) as vectors that describe the corresponding (pseudo) reward \citep{barreto2019option}. This work has been expanded more recently, using successor features \citep{barreto2020fast}. New policies can then be easily obtained for new reward functions, by linearly combining the (basis) vectors for the already learned options. However no planning is involved in that work, beyond a one-step decision-time choice amongst options.

\begin{algorithm}[t]
  \caption{\subroutine{Dyna {\color{red} with options}} (using the DDQN update)}
  \label{alg:dyna}
\begin{algorithmic}
\State Initialize base learner parameters $\qparams,\qtarg = \mathbf{w}_0$, model parameters $\mparams$, search-control queue $\scqueue$, {\color{red} set of options $\optionset$}
\State Sample initial state $s_0$ from the environment
  \For {$t \in 0, 1, 2, ...$}
      \State Take action $a_t$ using $q$ (e.g., $\epsilon$-greedy), observe $s_{t+1}, r_{t+1}, \gamma_{t+1}$
  \State Store $s_t$ in $\scqueue$ 
  \State \subroutine{ModelUpdate}$(s_t, a_t, s_{t+1}, r_{t+1},\gamma_{t+1})$ (see Algorithm \ref{alg:ModelUpdate})
  \For{$n_{\mathrm{main}}$ mini-batches}
    \State Sample batch $B = \{s\}$ from $\scqueue$
    \State For each $s \in B$, pick a random $\tilde{a}$ from $\mathcal{A} {\color{red} \cup \optionset}$ 
    \State {\color{red} // if $\tilde{a}$ is an option, $s'$ is an outcome state after many steps,} 
    \State {\color{red} // $r$ is a discounted sum of rewards under the option until termination}
    \State {\color{red} // and $\gamma$ is the discount raised to the number of steps that option executes} 
    \State Query model at each $(s,\tilde{a})$ to get outcome $s', r, \gamma$ and corresponding target
   \State $Y = r + \gamma q(s',\argmax_{a'} q(s',a'; \qparams); \qparams_{\mathrm{targ}})$
    \State $L = \frac{1}{|B|}\sum_{(s, a, Y) \in B} (Y - q(s,a; \qparams))^2$
    \State $\qparams \leftarrow \qparams - \alpha \nabla_\qparams L$ 
    \If{$n_{\text{updates}}\% \tau == 0$}
    \State $\qparams_{\mathrm{targ}} \leftarrow \qparams $
    \EndIf
    \State $n_{\text{updates}}$ = $n_{\text{updates}}$ + 1
  \EndFor
  \EndFor
\end{algorithmic}
\end{algorithm}

\subsection{Contrasting GSP to Dyna and Dyna with Options}

In this section we contrast GSP to Dyna and Dyna with options, which are two canonical approaches to do background planning. Dyna involves learning a transition model and updating with simulated experience, in the background. The original version of Dyna simply uses one step transitions from observed states, making it look quite similar to replay. Replay can actually be seen as a limited, non-parametric version of Dyna, and often Dyna and replay perform similarly \citep{pan2018organizing,vanhasselt2019when}, without more focused \emph{search control}, namely from which states we query the model. 
To truly obtain the benefits of the model with Dyna, it is key to consider which $(s,a)$ is the most useful to update from, which may even be a hypothetical $(s,a)$ never observed. Querying the model from such unseen $(s,a)$ leverages the generalization capabilities of the model much more than simply querying the model from an observed ($s,a)$. It is likely a clever search-control strategy could significantly improve Dyna, but it is also a hard problem. There remain very few proposed search-control approaches only a handful of works \citep{moore1993prioritized,wingate2005prioritization,pan2019hill}. 

If we go beyond one-step transitions, then we further deviate from replay and can benefit from having an explicit learned model. As mentioned above, Dyna with rollouts can suffer from model iteration error. An alternative approach to predict outcomes multiple steps into the future is to incorporate options into Dyna. This extension was first proposed for the tabular setting \citep{singh2004intrinsically}, with little follow-up work beyond a recent re-investigation still in the tabular setting \citep{sutton2022rewardrespecting}. The general idea behind Dyna with options is to treat options like macro-actions in the planning loop. Let us consider the one-step transition dynamics model, for a given $\pi$, where the model outputs $\tilde{s}', \tilde{r}, \tilde{\gamma}$ from $(s,\pi)$. For simplicity, assume we are learning expected transition dynamics. The model outputs the expected outcome state $\tilde{s}'$ after executing the option from $s$. In other words, $\tilde{s}'$ is the expected outcome state multiple steps into the future. The outputted reward $\tilde{r}$ from $(s,\pi)$ is the discounted cumulative sum of rewards of the option $\pi$ when starting from $s$, until termination. The outputted $\tilde{\gamma}$ is the discounted probability of terminating. For example, if the option always terminated after $n$ steps, then the model would output $\tilde{\gamma} = \gamma^n$. We show a possible variant of Dyna and Dyna with options, again using a similar update to DDQN, in Algorithm \ref{alg:dyna}, to make this extension more concrete. 

Dyna with options should allow for faster value propagation than Dyna alone. It effectively uses multi-step updates rather than simple single-step updates. However, it actually requires learning an even more complex model than Dyna, since it must learn the transition-dynamics for the options as well as the transition dynamics for the primitive actions. Moreover, it does still plan over all states and actions; again without smarter search-control, planning is likely to still be inefficient. In this sense, the variants of Dyna and Dyna with options presented here do not satisfy two key desiderata: feasible model learning and efficient planning. We discuss the importance of efficient planning in more depth in the next section. 

\textbf{Remark:} The well-versed reader may be confused why we consider Dyna algorithms on states, rather than on a latent state, also called agent state. It might seem that learning transition dynamics on a (compact) agent state might provide us the desired desiderata. Such a change is likely to make it more feasible to learn the model and make planning more efficient. Nonetheless, we are still stuck planning in a continuous latent space that is likely to have 32 dimensions, or more, based on typical design choices. It reduces, but does not remove, these issues.

  \begin{figure}[t]
   \centering
       \includegraphics[width=0.6\textwidth]{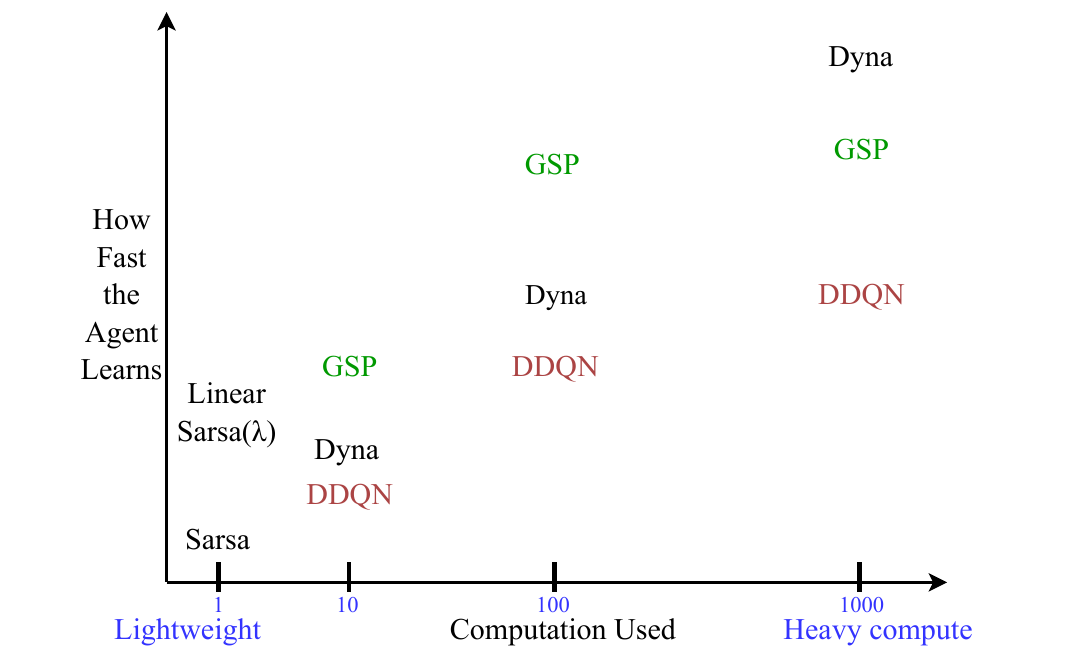}
            \caption{A visualization of the hypothetical trade-off between computation and how quickly the agent learns for different algorithms. This plot is focused on computation, rather than dealing with model errors, so Dyna means Dyna with a highly accurate model. Lightweight algorithms like Sarsa, that update only with the most recent sample, cannot leverage more computation to improve learning. DDQN, GSP and Dyna can leverage more computation by increasing the amount of replay and planning steps. Of course, this diagram is completely hypothetical, but reflects the thinking that guides this work as well as anticipated behavior of these algorithms. GSP is more effective than Dyna with less compute, but is likely to plateau at a slightly suboptimal point. With a lot of compute, Dyna with an accurate model is effectively doing dynamic programming and extracting a policy from the model. But with much less compute, it does not efficiently focus that compute to improve the policy. DDQN is limited by the limited data that it has in its buffer, and unlike Dyna, cannot reason about possible outcomes outside of this dataset.}\label{fig:trade-off} 
   \end{figure}
   
\subsection{The Importance of Efficient Planning}

GSP is designed to allow for efficient planning. We want changes in the environment to quickly propagate through the value function. This is achieved by focusing planning on a small set of subgoals. The local subgoal models can be updated efficiently, and value iteration to get the new subgoal values. Value iteration for a small set of states is very efficient, and the agent can perform many value iteration updates per step to keep these subgoal values accurate. Replay then propagates these subgoal values to the state values. 

Standard replay, Dyna and even Dyna with options does not have the same computational efficiency due to the lack of higher level planning. In practice, with a bounded agent, poor computational efficiency can result in poor sample efficiency. The learned model might even be perfectly accurate, and with unlimited computation per step, the agent could obtain the perfect value function. But with a computational budget---for example with a budget of ten planning steps per step---it may fail to transfer it's (immense) knowledge about the world into the policy. Eventually, over many steps (environment interactions), it will get an accurate value function. An algorithm, on the other hand, that can more quickly transfer knowledge from it's model to the value function will get closer to the true action-values in a smaller number of environment steps. We visualize this conceptual trade-off in Figure \ref{fig:trade-off}, for DDQN (namely replay), Dyna and GSP (layered on top of DDQN).

\section{Conclusion}
 
In this paper we introduced a new planning framework, called Goal-Space Planning (GSP).  
GSP provides a new approach to use background planning to improve value propagation, with minimalist, local models and computationally efficient planning. 
We showed that these subgoal-conditioned models can be accurately learned using standard value estimation algorithms, and can be used to quickly propagate value through state spaces of varying sizes. We find a consequent learning speed-up on base learners with different types of value function approximation. Subgoal selection was found to play a big role on the value function and policy the base learner reaches. In particular, we see that the GSP algorithm helps the base learner find a path through the state space, based on the high level path from found in the abstract MDP. We also verify that the performance improvement observed in GSP is result of $\vsub$ capturing the MDP dynamics, and not a general consequence of potential-based reward shaping.

This work introduces a new formalism, and many new technical questions along with it. We have tested GSP with pre-trained models and assumed a given set of subgoals. Our experiments learning the models online using successor representations indicate that GSP can get similar learning speed boosts. Using a recency buffer, however, accumulates transitions only along the optimal trajectory, sometimes causing the models to become inaccurate part-way through learning. An important next step is to incorporate smarter model learning strategies. The other critical open question is in subgoal discovery. For this work, we relied on hand-chosen subgoals, but in general the agent should discover its own subgoals. In general, though, option and subgoal discovery remain open questions. One utility of this work is that it could help narrow the scope of the discovery question, to that of finding abstract subgoals that help a learner plan more efficiently.

\bibliography{paper}

\begin{thebibliography}{78}
\providecommand{\natexlab}[1]{#1}
\providecommand{\url}[1]{\texttt{#1}}
\expandafter\ifx\csname urlstyle\endcsname\relax
  \providecommand{\doi}[1]{doi: #1}\else
  \providecommand{\doi}{doi: \begingroup \urlstyle{rm}\Url}\fi

\bibitem[Aubret et~al.(2021)Aubret, {Matignon}, and Hassas]{aubret2021distop}
Arthur Aubret, Laetitia {Matignon}, and Salima Hassas.
\newblock {{DisTop}}: {{Discovering}} a {{Topological}} representation to learn
  diverse and rewarding skills.
\newblock \emph{arXiv:2106.03853 [cs]}, 2021.

\bibitem[Ayoub et~al.(2020)Ayoub, Jia, Szepesv{\'{a}}ri, Wang, and
  Yang]{ayoub2020modelbased}
Alex Ayoub, Zeyu Jia, Csaba Szepesv{\'{a}}ri, Mengdi Wang, and Lin Yang.
\newblock Model-{{Based Reinforcement Learning}} with {{Value-Targeted
  Regression}}.
\newblock In \emph{{{International Conference}} on {{Machine Learning}}}, 2020.

\bibitem[Bakker et~al.(2005)Bakker, Zivkovic, and
  Krose]{bakker2005hierarchical}
Bram Bakker, Zoran Zivkovic, and Ben Krose.
\newblock Hierarchical dynamic programming for robot path planning.
\newblock In \emph{International Conference on Intelligent Robots and Systems},
  2005.

\bibitem[Barreto et~al.(2017)Barreto, Dabney, Munos, Hunt, Schaul, Silver, and
  van Hasselt]{barreto2017successor}
Andr{\'{e}} Barreto, Will Dabney, R{\'{e}}mi Munos, Jonathan~J. Hunt, Tom
  Schaul, David Silver, and Hado van Hasselt.
\newblock Successor features for transfer in reinforcement learning.
\newblock In \emph{Advances in Neural Information Processing Systems}, 2017.

\bibitem[Barreto et~al.(2019)Barreto, Borsa, Hou, Comanici, Ayg{\"u}n, Hamel,
  Toyama, {Hunt}, Mourad, Silver, and Precup]{barreto2019option}
Andre Barreto, Diana Borsa, Shaobo Hou, Gheorghe Comanici, Eser Ayg{\"u}n,
  Philippe Hamel, Daniel Toyama, Jonathan {Hunt}, Shibl Mourad, David Silver,
  and Doina Precup.
\newblock The {{Option Keyboard}}: {{Combining Skills}} in {{Reinforcement
  Learning}}.
\newblock In \emph{Advances in {{Neural Information Processing Systems}}},
  2019.

\bibitem[Barreto et~al.(2020)Barreto, Hou, Borsa, Silver, and
  Precup]{barreto2020fast}
Andr{\'e} Barreto, Shaobo Hou, Diana Borsa, David Silver, and Doina Precup.
\newblock Fast reinforcement learning with generalized policy updates.
\newblock \emph{Proceedings of the National Academy of Sciences}, 117\penalty0
  (48), 2020.

\bibitem[Brunskill and Li(2014)]{brunskill2014pacinspired}
Emma Brunskill and Lihong Li.
\newblock {{PAC-inspired Option Discovery}} in {{Lifelong Reinforcement
  Learning}}.
\newblock In \emph{Proceedings of the 31st {{International Conference}} on
  {{Machine Learning}}}. {PMLR}, 2014.

\bibitem[Chelu et~al.(2020)Chelu, Precup, and {van
  Hasselt}]{chelu2020forethought}
Veronica Chelu, Doina Precup, and Hado~P {van Hasselt}.
\newblock Forethought and hindsight in credit assignment.
\newblock In \emph{Advances in Neural Information Processing Systems}, 2020.

\bibitem[Cheng et~al.(2019)Cheng, Yan, and Boots]{cheng2019trajcv}
Ching{-}An Cheng, Xinyan Yan, and Byron Boots.
\newblock Trajectory-wise {Control Variates} for {Variance Reduction} in
  {Policy Gradient Methods}.
\newblock In \emph{3rd Annual Conference on Robot Learning}, volume 100 of
  \emph{Proceedings of Machine Learning Research}, 2019.

\bibitem[Chitnis et~al.(2022)Chitnis, Silver, Tenenbaum, Lozano{-}P{\'{e}}rez,
  and Kaelbling]{chitnis2021learning}
Rohan Chitnis, Tom Silver, Joshua~B. Tenenbaum, Tom{\'{a}}s
  Lozano{-}P{\'{e}}rez, and Leslie~Pack Kaelbling.
\newblock Learning {Neuro-Symbolic Relational Transition Models for Bilevel
  Planning}.
\newblock In \emph{International Conference on Intelligent Robots and Systems}.
  {IEEE}, 2022.

\bibitem[Dayan and Hinton(1992)]{dayan1992feudal}
Peter Dayan and Geoffrey~E Hinton.
\newblock Feudal reinforcement learning.
\newblock In \emph{Advances in {{Neural Information Processing Systems}}},
  1992.

\bibitem[Dietterich(2000)]{dietterich2000hierarchical}
Thomas~G Dietterich.
\newblock Hierarchical reinforcement learning with the maxq value function
  decomposition.
\newblock \emph{Journal of Artificial Intelligence Research}, 13:\penalty0
  227--303, 2000.

\bibitem[Diuk et~al.(2006)Diuk, Strehl, and Littman]{diuk2006hierarchical}
Carlos Diuk, Alexander~L Strehl, and Michael~L Littman.
\newblock A hierarchical approach to efficient reinforcement learning in
  deterministic domains.
\newblock In \emph{International Joint Conference on Autonomous Agents and
  Multiagent Systems}, 2006.

\bibitem[Dubey et~al.(2021)Dubey, Sohn, Abualdenien, Thrash, Hoelscher,
  Borrmann, and Kapadia]{dubey2021snap}
Rohit~K. Dubey, Samuel~S. Sohn, Jimmy Abualdenien, Tyler Thrash, Christoph
  Hoelscher, Andr{\'e} Borrmann, and Mubbasir Kapadia.
\newblock {{SNAP}}:{{Successor Entropy}} based {{Incremental Subgoal
  Discovery}} for {{Adaptive Navigation}}.
\newblock In \emph{Motion, {{Interaction}} and {{Games}}}, 2021.

\bibitem[Emmons et~al.(2020)Emmons, Jain, Laskin, Kurutach, Abbeel, and
  Pathak]{emmons2020sparse}
Scott Emmons, Ajay Jain, Misha Laskin, Thanard Kurutach, Pieter Abbeel, and
  Deepak Pathak.
\newblock Sparse {Graphical Memory} for {Robust Planning}.
\newblock In \emph{Advances in Neural Information Processing Systems}, 2020.

\bibitem[Farahmand(2018)]{farahmand2018iterative}
Amir-massoud Farahmand.
\newblock Iterative {{Value-Aware Model Learning}}.
\newblock In \emph{Advances in {{Neural Information Processing Systems}}},
  2018.

\bibitem[Farahmand et~al.(2017)Farahmand, Barreto, and
  Nikovski]{farahmand2017valueaware}
Amir-massoud Farahmand, Andre M~S Barreto, and Daniel~N Nikovski.
\newblock Value-{{Aware Loss Function}} for {{Model-based Reinforcement
  Learning}}.
\newblock In \emph{International {{Conference}} on {{Artificial Intelligence}}
  and {{Statistics}}}, 2017.

\bibitem[Farquhar et~al.(2018)Farquhar, Rockt{\"a}schel, Igl, and
  Whiteson]{farquhar2018treeqn}
Gregory Farquhar, Tim Rockt{\"a}schel, Maximilian Igl, and Shimon Whiteson.
\newblock {{TreeQN}} and {{ATreeC}}: {{Differentiable Tree-Structured Models}}
  for {{Deep Reinforcement Learning}}.
\newblock In \emph{International Conference on Learning Representations}, 2018.

\bibitem[Gieselmann and Pokorny(2021)]{gieselmann2021planningaugmented}
Robert Gieselmann and Florian~T. Pokorny.
\newblock Planning-{{Augmented Hierarchical Reinforcement Learning}}.
\newblock \emph{IEEE Robotics and Automation Letters}, 6\penalty0 (3), 2021.

\bibitem[Gopalan et~al.(2017)Gopalan, Littman, MacGlashan, Squire, Tellex,
  Winder, Wong, et~al.]{gopalan2017planning}
Nakul Gopalan, Michael Littman, James MacGlashan, Shawn Squire, Stefanie
  Tellex, John Winder, Lawson Wong, et~al.
\newblock Planning with abstract markov decision processes.
\newblock In \emph{International Conference on Automated Planning and
  Scheduling}, 2017.

\bibitem[Hafner et~al.(2023)Hafner, Pasukonis, Ba, and
  Lillicrap]{hafner2023mastering}
Danijar Hafner, Jurgis Pasukonis, Jimmy Ba, and Timothy Lillicrap.
\newblock Mastering {Diverse Domains} through {World Models}.
\newblock \emph{arXiv:2301.04104}, 2023.

\bibitem[Hauskrecht et~al.(2013)Hauskrecht, Meuleau, Kaelbling, Dean, and
  Boutilier]{hauskrecht2013hierarchical}
Milos Hauskrecht, Nicolas Meuleau, Leslie~Pack Kaelbling, Thomas~L Dean, and
  Craig Boutilier.
\newblock Hierarchical solution of markov decision processes using
  macro-actions.
\newblock In \emph{Uncertainty in Artificial Intelligence}, 2013.

\bibitem[He et~al.(2015)He, Zhang, Ren, and Sun]{he2015delving}
Kaiming He, Xiangyu Zhang, Shaoqing Ren, and Jian Sun.
\newblock Delving deep into rectifiers: Surpassing human-level performance on
  imagenet classification.
\newblock In \emph{IEEE International Conference on Computer Vision}, 2015.

\bibitem[Hoang et~al.(2021)Hoang, Sohn, Choi, Carvalho, and
  Lee]{hoang2021successor}
Christopher Hoang, Sungryull Sohn, Jongwook Choi, Wilka Carvalho, and Honglak
  Lee.
\newblock Successor {{Feature Landmarks}} for {{Long-Horizon Goal-Conditioned
  Reinforcement Learning}}.
\newblock In \emph{Advances in {{Neural Information Processing Systems}}},
  2021.

\bibitem[Hogg et~al.(2010)Hogg, Kuter, and {Mu{\~n}oz-Avila}]{hogg2010learning}
Chad Hogg, U.~Kuter, and Hector {Mu{\~n}oz-Avila}.
\newblock Learning {{Methods}} to {{Generate Good Plans}}: {{Integrating HTN
  Learning}} and {{Reinforcement Learning}}.
\newblock In \emph{{{AAAI Conference on Artificial Intelligence}}}, 2010.

\bibitem[Huang and Jiang(2020)]{huang2020drcv}
Jiawei Huang and Nan Jiang.
\newblock From {Importance Sampling} to {Doubly Robust Policy Gradient}.
\newblock In \emph{International Conference on Machine Learning}, 2020.

\bibitem[Huang et~al.(2019)Huang, Liu, and Su]{huang2019mapping}
Zhiao Huang, Fangchen Liu, and Hao Su.
\newblock Mapping {State Space} using {Landmarks} for {Universal Goal
  Reaching}.
\newblock In \emph{Advances in Neural Information Processing Systems}, 2019.

\bibitem[Jafferjee et~al.(2020)Jafferjee, Imani, Talvitie, White, and
  Bowling]{jafferjee2020hallucinating}
Taher Jafferjee, Ehsan Imani, Erin Talvitie, Martha White, and Micheal Bowling.
\newblock Hallucinating {{Value}}: {{A Pitfall}} of {{Dyna-style Planning}}
  with {{Imperfect Environment Models}}.
\newblock \emph{arXiv:2006.04363}, 2020.

\bibitem[Jinnai et~al.(2019)Jinnai, Abel, Hershkowitz, Littman, and
  Konidaris]{jinnai2019finding}
Yuu Jinnai, David Abel, David Hershkowitz, Michael Littman, and George
  Konidaris.
\newblock Finding {{Options}} that {{Minimize Planning Time}}.
\newblock In \emph{{{International Conference}} on {{Machine Learning}}}, 2019.

\bibitem[Khetarpal et~al.(2020)Khetarpal, Ahmed, Comanici, Abel, and
  Precup]{khetarpal2020what}
Khimya Khetarpal, Zafarali Ahmed, Gheorghe Comanici, David Abel, and Doina
  Precup.
\newblock What can {{I}} do here? {{A Theory}} of {{Affordances}} in
  {{Reinforcement Learning}}.
\newblock In \emph{{{International Conference}} on {{Machine Learning}}}, 2020.

\bibitem[Kim et~al.(2021)Kim, Seo, and Shin]{kim2021landmarkguided}
Junsu Kim, Younggyo Seo, and Jinwoo Shin.
\newblock Landmark-{{Guided Subgoal Generation}} in {{Hierarchical
  Reinforcement Learning}}.
\newblock In \emph{Advances in {{Neural Information Processing Systems}}},
  2021.

\bibitem[Konidaris(2016)]{konidaris2016constructing}
George Konidaris.
\newblock Constructing abstraction hierarchies using a skill-symbol loop.
\newblock In \emph{International Joint Conference on Artificial Intelligence},
  volume 2016, 2016.

\bibitem[Konidaris et~al.(2014)Konidaris, Kaelbling, and
  Lozano-Perez]{konidaris2014constructing}
George Konidaris, Leslie Kaelbling, and Tomas Lozano-Perez.
\newblock Constructing symbolic representations for high-level planning.
\newblock In \emph{AAAI Conference on Artificial Intelligence}, 2014.

\bibitem[Konidaris and Barto(2009)]{konidaris2009skill}
George~D. Konidaris and Andrew~G. Barto.
\newblock Skill {Discovery} in {Continuous Reinforcement Learning Domains}
  using {Skill Chaining}.
\newblock In \emph{Advances in Neural Information Processing Systems}, 2009.

\bibitem[Lambert et~al.(2022)Lambert, Pister, and
  Calandra]{lambert2022investigating}
Nathan Lambert, Kristofer Pister, and Roberto Calandra.
\newblock Investigating {{Compounding Prediction Errors}} in {{Learned Dynamics
  Models}}.
\newblock \emph{arXiv:2203.09637}, 2022.

\bibitem[Mann and Mannor(2014)]{mann2014scaling}
Timothy Mann and Shie Mannor.
\newblock Scaling up approximate value iteration with options: Better policies
  with fewer iterations.
\newblock In \emph{International Conference on Machine Learning}, pages
  127--135. PMLR, 2014.

\bibitem[Mann et~al.(2015)Mann, Mannor, and Precup]{mann2015approximate}
Timothy~A. Mann, Shie Mannor, and Doina Precup.
\newblock Approximate {{Value Iteration}} with {{Temporally Extended Actions}}.
\newblock \emph{Journal of Artificial Intelligence Research}, 53, 2015.

\bibitem[McGovern and Barto(2001)]{mcgovern2001automatic}
Amy McGovern and Andrew~G. Barto.
\newblock Automatic {Discovery} of {Subgoals} in {Reinforcement Learning} using
  {Diverse Density}.
\newblock In \emph{{{International Conference}} on {{Machine Learning}}}, 2001.

\bibitem[Moore and Atkeson(1993)]{moore1993prioritized}
Andrew~W. Moore and Christopher~G. Atkeson.
\newblock {Prioritized Sweeping}: {{Reinforcement Learning}} with {{Less Data}}
  and {{Less Time}}.
\newblock \emph{Machine learning}, 13\penalty0 (1), 1993.

\bibitem[Nasiriany et~al.(2019)Nasiriany, Pong, Lin, and
  Levine]{nasiriany2019planning}
Soroush Nasiriany, Vitchyr Pong, Steven Lin, and Sergey Levine.
\newblock Planning with {{Goal-Conditioned Policies}}.
\newblock In \emph{Advances in {{Neural Information Processing Systems}}},
  2019.

\bibitem[Ng et~al.(1999)Ng, Harada, and Russell]{ng1999reward}
Andrew~Y. Ng, Daishi Harada, and Stuart Russell.
\newblock {Policy Invariance} under {Reward Transformations}: {Theory} and
  {Application} to {Reward Shaping}.
\newblock In \emph{International Conference on Machine Learning}, 1999.

\bibitem[Oh et~al.(2017)Oh, Singh, and Lee]{oh2017value}
Junhyuk Oh, Satinder Singh, and Honglak Lee.
\newblock {Value Prediction Network}.
\newblock In \emph{Advances in Neural Information Processing Systems}, 2017.

\bibitem[Pan et~al.(2018)Pan, Zaheer, White, Patterson, and
  White]{pan2018organizing}
Yangchen Pan, Muhammad Zaheer, Adam White, Andrew Patterson, and Martha White.
\newblock Organizing {{Experience}}: A {{Deeper Look}} at {{Replay Mechanisms}}
  for {{Sample-Based Planning}} in {{Continuous State Domains}}.
\newblock In \emph{International Joint Conference on {{Artificial
  Intelligence}}}, 2018.

\bibitem[Pan et~al.(2019)Pan, Yao, Farahmand, and White]{pan2019hill}
Yangchen Pan, Hengshuai Yao, Amir-Massoud Farahmand, and Martha White.
\newblock {Hill Climbing on Value Estimates for Search-Control in {{Dyna}}}.
\newblock In \emph{{{International Joint Conference}} on {{Artificial
  Intelligence}}}, 2019.

\bibitem[Pankov(2018)]{pankov2018cv}
Sergey Pankov.
\newblock {Reward-Estimation Variance Elimination in Sequential Decision
  processes}.
\newblock \emph{arXiv:1811.06225}, 2018.

\bibitem[Patterson et~al.(2020)Patterson, Neumann, White, and
  White]{patterson2020empirical}
Andrew Patterson, Samuel Neumann, Martha White, and Adam White.
\newblock {Empirical Design in Reinforcement Learning}.
\newblock \emph{arXiv:2304.01315}, 2020.

\bibitem[Penrose(1955)]{penrose1955generalized}
Roger Penrose.
\newblock {A Generalized Inverse for Matrices}.
\newblock \emph{Mathematical Proceedings of the Cambridge Philosophical
  Society}, 51\penalty0 (3), 1955.

\bibitem[Schaul et~al.(2015)Schaul, Horgan, Gregor, and
  Silver]{schaul2015universal}
Tom Schaul, Daniel Horgan, Karol Gregor, and David Silver.
\newblock Universal {{Value Function Approximators}}.
\newblock In \emph{International {{Conference}} on {{Machine Learning}}}, 2015.

\bibitem[Schrittwieser et~al.(2020)Schrittwieser, Antonoglou, Hubert, Simonyan,
  Sifre, Schmitt, Guez, Lockhart, Hassabis, Graepel, Lillicrap, and
  Silver]{schrittwieser2020mastering}
Julian Schrittwieser, Ioannis Antonoglou, Thomas Hubert, Karen Simonyan,
  Laurent Sifre, Simon Schmitt, Arthur Guez, Edward Lockhart, Demis Hassabis,
  Thore Graepel, Timothy Lillicrap, and David Silver.
\newblock Mastering {{Atari}}, {{Go}}, {{Chess}} and {{Shogi}} by {{Planning}}
  with a {{Learned Model}}.
\newblock \emph{Nature}, 588\penalty0 (7839), 2020.

\bibitem[Schulman et~al.(2017)Schulman, Wolski, Dhariwal, Radford, and
  Klimov]{schulman2017proximal}
John Schulman, Filip Wolski, Prafulla Dhariwal, Alec Radford, and Oleg Klimov.
\newblock Proximal policy optimization algorithms.
\newblock \emph{arXiv:1707.06347}, 2017.

\bibitem[Silver et~al.(2008)Silver, Sutton, and
  M{\"u}ller]{silver2008samplebased}
David Silver, Richard~S. Sutton, and Martin M{\"u}ller.
\newblock {Sample-Based Learning and Search with Permanent and Transient
  Memories}.
\newblock In \emph{International {{Conference}} on {{Machine Learning}}}, 2008.

\bibitem[Silver et~al.(2017)Silver, Hasselt, Hessel, Schaul, Guez, Harley,
  {Dulac-Arnold}, Reichert, Rabinowitz, Barreto, and
  Degris]{silver2017predictron}
David Silver, Hado Hasselt, Matteo Hessel, Tom Schaul, Arthur Guez, Tim Harley,
  Gabriel {Dulac-Arnold}, David Reichert, Neil Rabinowitz, Andre Barreto, and
  Thomas Degris.
\newblock The {{Predictron}}: {{End-To-End Learning}} and {{Planning}}.
\newblock In \emph{{{International Conference}} on {{Machine Learning}}}, 2017.

\bibitem[Singh et~al.(2004)Singh, Barto, and Chentanez]{singh2004intrinsically}
Satinder Singh, Andrew Barto, and Nuttapong Chentanez.
\newblock Intrinsically {{Motivated Reinforcement Learning}}.
\newblock In \emph{Advances in {{Neural Information Processing Systems}}},
  2004.

\bibitem[Singh(1992)]{singh1992scaling}
Satinder~P Singh.
\newblock Scaling reinforcement learning algorithms by learning variable
  temporal resolution models.
\newblock In \emph{Machine Learning Proceedings 1992}, pages 406--415.
  Elsevier, 1992.

\bibitem[Stolle and Precup(2002)]{stolle2002learning}
Martin Stolle and Doina Precup.
\newblock Learning {{Options}} in {{Reinforcement Learning}}.
\newblock In \emph{Abstraction, {{Reformulation}}, and {{Approximation}}},
  2002.

\bibitem[Sutton(1991)]{sutton1991integrated}
Richard~S. Sutton.
\newblock Integrated modeling and control based on reinforcement learning and
  dynamic programming.
\newblock In \emph{Advances in {{Neural Information Processing Systems}}},
  1991.

\bibitem[Sutton and Barto(2018)]{sutton2018reinforcement}
Richard~S. Sutton and Andrew~G. Barto.
\newblock \emph{Reinforcement Learning: An Introduction}.
\newblock MIT Press, 2018.

\bibitem[Sutton et~al.(1999)Sutton, Precup, and Singh]{sutton1999options}
Richard~S. Sutton, Doina Precup, and Satinder~P. Singh.
\newblock {Between MDPs and Semi-MDPs: A Framework for Temporal Abstraction in
  Reinforcement Learning}.
\newblock \emph{Artificial Intelligence}, 112\penalty0 (1-2), 1999.

\bibitem[Sutton et~al.(2011)Sutton, Modayil, Delp, Degris, Pilarski, White, and
  Precup]{sutton2011horde}
Richard~S. Sutton, Joseph Modayil, Michael Delp, Thomas Degris, Patrick~M
  Pilarski, Adam White, and Doina Precup.
\newblock {Horde: A Scalable Real-Time Architecture for Learning Knowledge from
  Unsupervised Sensorimotor Interaction}.
\newblock In \emph{International Conference on Autonomous Agents and Multiagent
  Systems}, 2011.

\bibitem[Sutton et~al.(2016)Sutton, Mahmood, and White]{sutton2016anemphatic}
Richard~S. Sutton, Rupam~A. Mahmood, and Martha White.
\newblock {An Emphatic Approach to the Problem of Off-Policy
  Temporal-Difference Learning}.
\newblock \emph{The Journal of Machine Learning Research}, 2016.

\bibitem[Sutton et~al.(2022)Sutton, Machado, Holland, Szepesv{\'a}ri, Timbers,
  Tanner, and White]{sutton2022rewardrespecting}
Richard~S. Sutton, Marlos~C. Machado, G.~Zacharias Holland, David
  Szepesv{\'a}ri, Finbarr Timbers, Brian Tanner, and Adam White.
\newblock Reward-{{Respecting Subtasks}} for {{Model-Based Reinforcement
  Learning}}.
\newblock \emph{Artificial Intelligence}, 324, 2022.

\bibitem[Talvitie(2014)]{talvitie2014model}
Erik Talvitie.
\newblock {Model Regularization for Stable Sample Roll-Outs}.
\newblock In \emph{Uncertainty in {{Artificial Intelligence}}}, 2014.

\bibitem[Talvitie(2017)]{talvitie2017selfcorrecting}
Erik Talvitie.
\newblock Self-{{Correcting Models}} for {{Model-Based Reinforcement
  Learning}}.
\newblock In \emph{{{AAAI Conference}} on {{Artificial Intelligence}}}, 2017.

\bibitem[Tamar et~al.(2016)Tamar, Wu, Thomas, Levine, and
  Abbeel]{tamar2016value}
Aviv Tamar, Yi~Wu, Garrett Thomas, Sergey Levine, and Pieter Abbeel.
\newblock Value {{Iteration Networks}}.
\newblock In \emph{Advances in {{Neural Information Processing Systems}}},
  2016.

\bibitem[van Hasselt(2010)]{hasselt2010double}
Hado van Hasselt.
\newblock Double q-learning.
\newblock In \emph{Advances in {{Neural Information Processing Systems}}},
  2010.

\bibitem[van Hasselt et~al.(2016)van Hasselt, Guez, and Silver]{van2016deep}
Hado van Hasselt, Arthur Guez, and David Silver.
\newblock {Deep Reinforcement Learning with Double Q-learning}.
\newblock In \emph{AAAI Conference on Artificial Intelligence}, 2016.

\bibitem[{van Hasselt} et~al.(2019){van Hasselt}, Hessel, and
  Aslanides]{vanhasselt2019when}
Hado {van Hasselt}, Matteo Hessel, and John Aslanides.
\newblock {When to use Parametric Models in Reinforcement Learning?}
\newblock In \emph{Advances in {{Neural Information Processing Systems}}},
  2019.

\bibitem[Venkatraman et~al.(2015)Venkatraman, Hebert, and
  Bagnell]{venkatraman2015improving}
Arun Venkatraman, Martial Hebert, and J.~Andrew Bagnell.
\newblock Improving {{Multi-Step Prediction}} of {{Learned Time Series
  Models}}.
\newblock In \emph{{{AAAI Conference}} on {{Artificial Intelligence}}}, 2015.

\bibitem[Wan et~al.(2019)Wan, Zaheer, White, White, and
  Sutton]{wan2019planning}
Yi~Wan, Muhammad Zaheer, Adam White, Martha White, and Richard~S. Sutton.
\newblock Planning with {{Expectation Models}}.
\newblock In \emph{International Joint Conference on {{Artificial
  Intelligence}}}, 2019.

\bibitem[Wan et~al.(2021)Wan, Naik, and Sutton]{wan2021averagereward}
Yi~Wan, Abhishek Naik, and Richard~S. Sutton.
\newblock Average-{{Reward Learning}} and {{Planning}} with {{Options}}.
\newblock In \emph{Advances in {{Neural Information Processing Systems}}},
  2021.

\bibitem[Weber et~al.(2017)Weber, Racani{\`e}re, Reichert, Buesing, Guez,
  Rezende, Badia, Vinyals, Heess, Li, Pascanu, Battaglia, Silver, and
  Wierstra]{weber2017imaginationaugmented}
Theophane Weber, Sebastien Racani{\`e}re, David~P Reichert, Lars Buesing,
  Arthur Guez, Danilo~Jimenez Rezende, Adri{\`a}~Puigdom{\`e}nech Badia, Oriol
  Vinyals, Nicolas Heess, Yujia Li, Razvan Pascanu, Peter Battaglia, David
  Silver, and Daan Wierstra.
\newblock Imagination-{{Augmented Agents}} for {{Deep Reinforcement Learning}}.
\newblock In \emph{Advances in {{Neural Information Processing Systems}}},
  2017.

\bibitem[White(2017)]{white2017unifying}
Martha White.
\newblock {Unifying Task Specification in Reinforcement Learning}.
\newblock In \emph{International {{Conference}} on {{Machine Learning}}}, 2017.

\bibitem[Williams(1992)]{williams1992simple}
Ronald~J. Williams.
\newblock Simple statistical gradient-following algorithms for connectionist
  reinforcement learning.
\newblock \emph{Machine Learning}, 8, 1992.

\bibitem[Wingate et~al.(2005)Wingate, Seppi, and
  Edu]{wingate2005prioritization}
David Wingate, Kevin~D. Seppi, and Cs~Byu Edu.
\newblock Prioritization {{Methods}} for {{Accelerating MDP Solvers}}.
\newblock \emph{Journal of Machine Learning Research}, 2005.

\bibitem[Wolfe et~al.(2010)Wolfe, Marthi, and Russell]{wolfe2010combined}
Jason Wolfe, Bhaskara Marthi, and Stuart Russell.
\newblock Combined {{Task}} and {{Motion Planning}} for {{Mobile
  Manipulation}}.
\newblock \emph{International Conference on Automated Planning and Scheduling},
  2010.

\bibitem[Zahavy et~al.(2020)Zahavy, Hasidim, Kaplan, and
  Mansour]{zahavy2020planning}
Tom Zahavy, Avinatan Hasidim, Haim Kaplan, and Yishay Mansour.
\newblock Planning in {{Hierarchical Reinforcement Learning}}: {{Guarantees}}
  for {{Using Local Policies}}.
\newblock In \emph{Proceedings of the 31st {{International Conference}} on
  {{Algorithmic Learning Theory}}}. {PMLR}, 2020.

\bibitem[Zhang et~al.(2021)Zhang, Yang, and Stadie]{zhang2021world}
Lunjun Zhang, Ge~Yang, and Bradly~C. Stadie.
\newblock World {{Model}} as a {{Graph}}: {{Learning Latent Landmarks}} for
  {{Planning}}.
\newblock In \emph{{{International Conference}} on {{Machine Learning}}}, 2021.

\bibitem[Zhang et~al.(2020)Zhang, Guo, Tan, Hu, and Chen]{zhang2020generating}
Tianren Zhang, Shangqi Guo, Tian Tan, Xiaolin Hu, and Feng Chen.
\newblock {Generating Adjacency-Constrained Subgoals in Hierarchical
  Reinforcement Learning}.
\newblock In \emph{Advances in Neural Information Processing Systems}, 2020.

\end{thebibliography}

\newpage

\appendix

\section{Starting Simpler: Goal-Space Planning for Policy Evaluation}\label{sec_simple}

To highlight the key idea for efficient planning, we provide an example of GSP in a simpler setting: policy evaluation for learning $v^\pi$ for a fixed deterministic policy $\pi$ in a deterministic environment, assuming access to the true models. 
The key idea is to propagate values quickly across the space by updating between a subset of states that we call \emph{subgoals}, $g \in \mathcal{G} \subset \mathcal{S}$, as visualized in Figure \ref{fig:gsp_eval_illust}. (Later we extend $\mathcal{G} \not\subset \mathcal{S}$ to abstract subgoal vectors that need not correspond to any state.) To do so, we need temporally extended models between pairs $g, g'$ that may be further than one-transition apart. For policy evaluation, these models are the accumulated rewards $\rpigamma: \States \times \States \rightarrow \RR$ and discounted probabilities $\Ppigamma: \States \times \States \rightarrow [0,1]$ under $\pi$:
\begin{align*}
\rpigamma(g,g') &\defeq \mathbb{E}_{\pi}[R_{t+1} + \gamma_{g',t+1} \rpigamma(S_{t+1}, g') | S_{t} = g]\\
\Ppigamma(g, g') &\defeq \mathbb{E}_{\pi}[1(S_{t+1} = g') \gamma_{t+1} + \gamma_{g',t+1} \Ppigamma(S_{t+1}, g') | S_{t} = g]
\end{align*}
where $\gamma_{g',t+1} = 0$ if $S_{t+1} = g'$ and otherwise equals $\gamma_{t+1}$, the environment discount. If we cannot reach $g'$ from $g$ under $\pi$, then $\Ppigamma(g,g')$ will simply accumulate many zeros and be zero. 
%
We can treat $\Goals$ as our new state space and plan in this space, to get value estimates $v$ for all $g \in \Goals$
\begin{align*}
v(g) = \rpigamma(g, g') + \Ppigamma(g,g') v(g') \quad \text{ where } g' = {\textstyle \argmax_{g' \in \augGoals}} \Ppigamma(g,g') 
\end{align*}
where $\augGoals = \Goals \cup \{ \sterm\}$ if there is a terminal state (episodic problems) and otherwise $\augGoals = \Goals$.  
It is straightforward to show this converges, because $\Ppigamma$ is a substochastic matrix (see Appendix \ref{app_simpleproofs}).

Once we have these values, we can propagate these to other states, locally, again using the closest $g$ to $s$. We can do so by noticing that the above definitions can be easily extended to $\rpigamma(s,g')$ and $\Ppigamma(s, g')$, since for a pair $(s,g)$ they are about starting in the state $s$ and reaching $g$ under $\pi$. 
 \begin{equation}\label{eq_vpis}
v(s) = \vsg(s,g) + \Ppigamma(s, g) v(g) \quad \text{ where } g = {\textstyle\argmax_{g \in \augGoals}} \Ppigamma(s,g) 
.
\end{equation}
Because the rhs of this equation is fixed, we only cycle through these states once to get their values.
 
All of this might seem like a lot of work for policy evaluation; indeed, it will be more useful to have this formalism for control. But, even here
goal-space planning can be beneficial. Let assume a chain $s_1, s_2, \ldots, s_n$, where $n = 1000$ and $\Goals = \{s_{100}, s_{200}, \ldots, s_{1000}\}$. Planning over $g \in \Goals$ only requires sweeping over 10 states, rather than 1000. Further, we have taken a 1000 horizon problem and converted it into a 10 step one.\footnote{In this simplified example, we can plan efficiently by updating the value at the end in $s_n$, and then updating states backwards from the end. But, without knowing this structure, it is not a general purpose strategy. For general MDPs, we would need smart ways to do search control: the approach to pick states from one-step updates. In fact, we can leverage search control strategies to improve the goal-space planning step. Then we get the benefit of these approaches, as well as the benefit of planning over a much smaller state space.}
As a result, changes in the environment also propagate faster. If the reward at $s'$ changes, locally the reward model around $s'$ can be updated quickly, to change $\rpigamma(g, g')$ for pairs $g,g'$ where $s'$ is along the way from $g$ to $g'$. This local change quickly updates the values back to earlier $\tilde{g} \in \Goals$.   

\begin{figure}[t]
  \centering
      \includegraphics[width=0.9\textwidth]{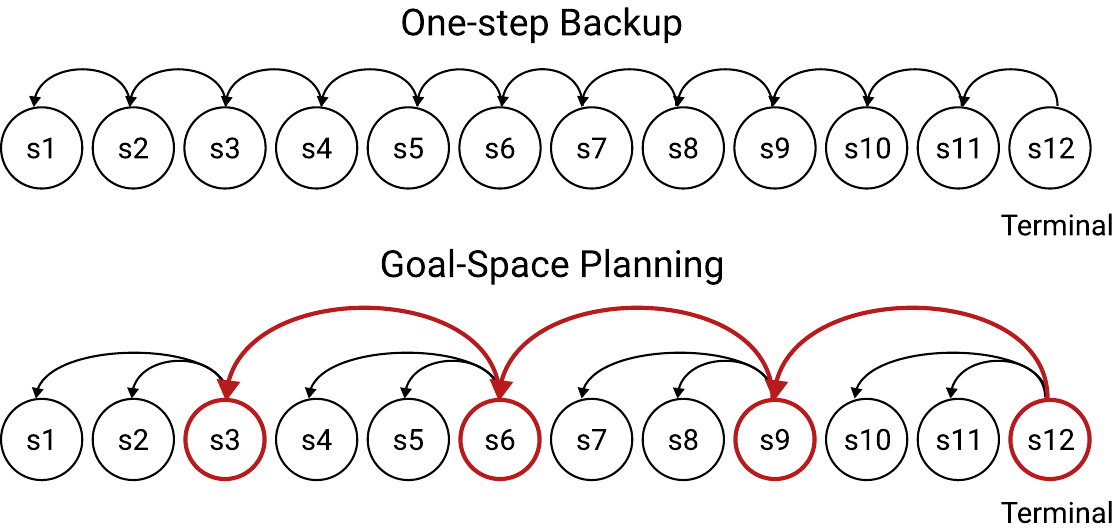}
           \caption{Comparing one-step backup with Goal-Space Planning when subgoals are concrete states. GSP first focuses planning over a smaller set of subgoals (in red), then updates the values of  individual states.}\label{fig:gsp_eval_illust}
  \end{figure}

\subsection{Proofs for the Deterministic Policy Evaluation Setting}\label{app_simpleproofs}

We provide proofs here for the deterministic policy evaluation setting. We assume throughout that the environment discount $\gamma_{t+1}$ is a constant $\gamma_c \in [0,1)$ for every step in an episode, until termination when it is zero. The below results can be extended to the case where $\gamma_c = 1$, using the standard strategy for the stochastic shortest path problem setting. 

First, we want to show that given $\rpigamma$ and $\Ppigamma$, we can guarantee that the update for the values for $\Goals$ will converge. Recall that $\augGoals = \Goals \cup \{\sterm\}$ is the augmented goal space that includes the terminal state. This terminal state is not a subgoal---since it is not a real state---but is key for appropriate planning. 
\begin{lemma}
Assume that we have a deterministic MDP, deterministic policy $\pi$, $\gamma_c < 1$, a discrete set of subgoals $\Goals \subset \States$, and that we iteratively update $v_t \in \RR^{|\augGoals|}$ with the dynamic programming update
\begin{equation}\label{eq_simple_vg}
v_t(g) = \rpigamma(g, g') + \Ppigamma(g,g') v_{t-1}(g') \quad \text{ where } g' = \argmax_{g' \in \augGoals} \Ppigamma(g,g') 
\end{equation}
for all $g \in \Goals$, starting from an arbitrary (finite) initialization $v_0 \in \RR^{|\augGoals|}$, with $v_t(s_{\text{terminal}})$ fixed at zero. Then 
then $v_t$ converges to a fixed point. 
\end{lemma}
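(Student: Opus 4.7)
The plan is to recognize that the update in \eqref{eq_simple_vg} is actually an \emph{affine} (not max-of-affine) operator on $\RR^{|\Goals|}$, because the argmax $g^*_g \defeq \argmax_{g' \in \augGoals} \Ppigamma(g,g')$ depends only on the fixed model $\Ppigamma$ and not on the iterate $v_{t-1}$. So I would fix those argmaxes once and for all, and then apply Banach's fixed point theorem to the resulting linear map.

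Concretely, for each $g \in \Goals$ let $g^*_g$ denote the maximizer. Define the $|\Goals|\times|\Goals|$ matrix $P$ and vector $c \in \RR^{|\Goals|}$ by
\begin{equation*}
P[g,g''] \defeq \Ppigamma(g,g^*_g)\cdot \mathbb{1}[g'' = g^*_g \text{ and } g^*_g \neq \sterm],
\qquad
c(g) \defeq \rpigamma(g, g^*_g),
\end{equation*}
so that, using $v_t(\sterm) = 0$, the update \eqref{eq_simple_vg} becomes $v_t = c + P\, v_{t-1}$ on $\RR^{|\Goals|}$. Convergence follows once I show $P$ is a contraction in $\|\cdot\|_\infty$, since then the map $T(v) \defeq c + Pv$ satisfies $\|T(v)-T(u)\|_\infty \leq \|P\|_\infty\,\|v-u\|_\infty$ and Banach's theorem gives convergence from any finite initialization to the unique fixed point.

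The key step, and the only real work, is bounding $\|P\|_\infty = \max_g \Ppigamma(g,g^*_g)$. I would show that for all $g \in \Goals$ and all $g' \in \augGoals$, $\Ppigamma(g,g') \leq \gamma_c < 1$. Unrolling the recursive definition of $\Ppigamma$ and using determinism of the policy and environment, $\Ppigamma(g,g')$ equals $\EE_\pi[\gamma_c^{\tau_{g'}}]$ where $\tau_{g'} \geq 1$ is the first-hitting time of $g'$ starting from $g$ (with the convention $\gamma_c^\infty = 0$). Since $\tau_{g'}\geq 1$ and $\gamma_c < 1$, every term in this expectation is at most $\gamma_c$, so $\Ppigamma(g,g') \leq \gamma_c$. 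Hence $\|P\|_\infty \leq \gamma_c < 1$ and $T$ is a contraction with modulus $\gamma_c$.

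The only subtlety to be careful about is the terminal state: because $v_t(\sterm)$ is held at $0$ rather than being updated, the $\sterm$ coordinate is absent from the linear system and any mass $\Ppigamma(g,\sterm)$ at the argmax simply disappears into the zero row of $P$, leaving the contraction argument intact. A mild edge case is $\gamma_c$ very close to $1$ or rows where $\Ppigamma(g,g^*_g)$ attains $\gamma_c$; these still give contraction factor strictly less than $1$, so the Banach conclusion is unaffected and $v_t$ converges geometrically to the unique fixed point of $T$.
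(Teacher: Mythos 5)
Your proposal is correct and follows essentially the same route as the paper: both reduce the problem to the entrywise bound $\Ppigamma(g,g') \le \gamma_c$ (you via the hitting-time representation $\EE_\pi[\gamma_c^{\tau_{g'}}]$ with $\tau_{g'}\ge 1$, the paper via a one-step case analysis on whether $g'$ is reached immediately), and then conclude by a contraction/Banach fixed-point argument. If anything, your version is slightly more careful, since you make explicit that the operator is affine with the argmax fixed by the model and work with the $\infty$-norm of a matrix having one nonzero entry per row, whereas the paper states the bound in terms of $\|\Ppigamma\|_2$ somewhat loosely.
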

\begin{proof}
To analyze this as a matrix update, we need to extend $\Ppigamma(g,g')$ to include an additional row transitioning from $s_{\text{terminal}}$. This row is all zeros, because the value in the terminal state is always fixed at zero. Note that there are ways to avoid introducing terminal states, using transition-based discounting \citep{white2017unifying}, but for this work it is actually simpler to explicitly reason about them and reaching them from subgoals. 

To show this we simply need to ensure that $\Ppigamma$ is a substochastic matrix. Recall that
\begin{align*}
\Ppigamma(g, g') &\defeq \mathbb{E}_{\pi}[1(S_{t+1} = g') \gamma_{t+1} + \gamma_{g',t+1} \Ppigamma(S_{t+1}, g') | S_{t} = g]
\end{align*}
where $\gamma_{g',t+1} = 0$ if $S_{t+1} = g'$ and otherwise equals $\gamma_{t+1}$, the environment discount. If it is substochastic, then $\| \Ppigamma\|_2 < 1$. Consequently, the Bellman operator 
\begin{equation*}
(Bv)(g) = \rpigamma(g, g') + \Ppigamma(g,g') \vgoal(g') \quad \text{ where } g' = \argmax_{g' \in \augGoals} \Ppigamma(g,g') 
\end{equation*}
is a contraction, because $\| Bv_1 - B v_2 \|_2 = \| \Ppigamma v_1 - \Ppigamma v_2 \|_2 \le \| \Ppigamma \|_2 \| v_1 - v_2 \|_2 < \| v_1 - v_2 \|_2$. 

Because $\gamma_c < 1$, then either $g$ immediately terminates in $g'$, giving $1(S_{t+1} = g') \gamma_{t+1} + \gamma_{g',t+1} \Ppigamma(S_{t+1}, g') = \gamma_{t+1} + 0 \le \gamma_c$. Or, it does not immediately terminate, and $1(S_{t+1} = g') \gamma_{t+1} + \gamma_{g',t+1} \Ppigamma(S_{t+1}, g') = 0 + \gamma_c \Ppigamma(S_{t+1}, g') \le \gamma_c$ because $\Ppigamma(S_{t+1}, g') \le 1$. 
Therefore, if $\gamma_c < 1$, then $\| \Ppigamma\|_2 \le \gamma_c$.

\end{proof}

\begin{proposition}
For a deterministic MDP, deterministic policy $\pi$, and a discrete set of subgoals $\Goals \subset \States$ that are all reached by $\pi$ in the MDP, given the $\vgoal(g)$ obtained from Equation \ref{eq_simple_vg}, if we set 
 \begin{equation}\label{eq_vpis2}
v(s) = \vsg(s,g) + \Ppigamma(s, g) \vgoal(g) \quad \text{ where } g = \argmax_{g \in \augGoals} \Ppigamma(s,g) 
\end{equation}
for all states $s \in \States$ then we get that $v = v_\pi$. 
\end{proposition}
\begin{proof}
For a deterministic environment and deterministic policy this result is straightforward. The term $\Ppigamma(s, g) > 0$ only if $g$ is on the trajectory from $s$ when the policy $\pi$ is executed. The term 
$\vsg(s,g)$ consists of deterministic (discounted) rewards and $\vgoal(g)$ is the true value from $g$, as shown in Lemma \ref{eq_simple_vg} (namely $\vgoal(g) = v_\pi(g)$). The subgoal $g$ is the closest state on the trajectory from $s$, and $\Ppigamma(s, g)$ is $\gamma_c^t$ where $t$ is the number of steps from $s$ to $g$. 
\end{proof}

\section{Proofs for the General Control Setting}\label{app_control}

In this section we assume that $\gamma_c < 1$, to avoid some of the additional issues for handling proper policies. The same strategies apply to the stochastic shortest path setting with $\gamma_c = 1$, with additional assumptions. 

\begin{proposition}\label{prop_vi_gsp}[Convergence of Value Iteration in Goal-Space] Assuming that $\Gammagg$ is a substochastic matrix, with $v_0 \in \RR^{|\augGoals|}$ initialized to an arbitrary value and fixing $v_{t}(\sterm) = 0$ for all $t$, then iteratively sweeping through all $g \in \Goals$ with update
\begin{equation}\label{eq_vi_gsp_theory}
v_t(g) = \max_{g' \in \augGoals: \relgg(g,g') > 0} \vgg(g, g') + \Gammagg(g,g') v_{t-1}(g')
\end{equation}
convergences to a fixed-point. 
\end{proposition}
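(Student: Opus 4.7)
The plan is to reinterpret Eq.~\eqref{eq_vi_gsp_theory} as Jacobi value iteration in a finite auxiliary MDP $M'$ whose states are $\augGoals$, whose actions at $g \in \Goals$ are the elements $g' \in \augGoals$ with $\relgg(g,g') > 0$, whose reward for action $g'$ is $\vgg(g,g')$, whose transitions deterministically send $g$ to $g'$ under action $g'$, and whose per-action discount is $\Gammagg(g,g')$. The state $\sterm$ is absorbing and pinned at value $0$. Under this correspondence the update is exactly $v_t = T v_{t-1}$ for the Bellman optimality operator
\[
(Tv)(g) \defeq \max_{g' \in \augGoals:\,\relgg(g,g')>0} \vgg(g,g') + \Gammagg(g,g')\, v(g'), \qquad (Tv)(\sterm) = 0.
\]
Because $T$ acts on the complete finite-dimensional space $\RR^{|\augGoals|}$, once $T$ is shown to be a contraction in $\|\cdot\|_\infty$, Banach's fixed-point theorem delivers both a unique fixed point $v^*$ and geometric convergence $\|v_t - v^*\|_\infty \le c^t \|v_0 - v^*\|_\infty \to 0$.

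\textbf{Contraction estimate.} To show $T$ is a contraction, I would fix $v, v'$ and any $g \in \Goals$, let $g^*$ attain the max defining $(Tv)(g)$, and estimate
\[
(Tv)(g) - (Tv')(g) \le \Gammagg(g,g^*)\bigl(v(g^*) - v'(g^*)\bigr) \le c\, \|v - v'\|_\infty, \qquad c \defeq \max_{g,g'} \Gammagg(g,g'),
\]
with a symmetric bound in the other direction giving $\|Tv - Tv'\|_\infty \le c \|v - v'\|_\infty$. What remains is to show $c < 1$. The section-wide standing assumption $\gamma_c < 1$ combined with the recursive definition of $\GamModel$ in Section~4.2 yields $\Gammasg(s,g') \le \gamma_c$ for every $s, g'$: the incurred factor in each step of the recursion is either $0$ (at subgoal termination) or the environment discount $\gamma_c$, so a straightforward induction on hitting time pushes the bound through. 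The convex-combination definition in Eq.~\eqref{sg_models} then transports this bound to $\Gammagg(g,g') \le \gamma_c$ for every $g \in \Goals, g' \in \augGoals$, so $c \le \gamma_c < 1$.

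\textbf{Main obstacle.} The only nontrivial technical point is the gap between the proposition's hypothesis---that $\Gammagg$ is substochastic, which only guarantees row sums $\le 1$---and the strict uniform entrywise bound $c < 1$ that the sup-norm contraction argument needs (since the Bellman operator selects a single $g'$ via the $\max$, only one entry of the $g$-th row contributes). The cleanest way to close it is the route above: lift the appendix's standing assumption $\gamma_c < 1$ into a direct entrywise bound on $\Gammagg$ via the recursive and averaging definitions, which I expect to be a few lines. An alternative, should one wish to stick with pure substochasticity, is to pass to a weighted sup-norm $\|v\|_w = \max_g |v(g)|/w(g)$ for a strictly positive $w$ satisfying $\Gammagg w \le \rho w$ entrywise for some $\rho < 1$; such a $w$ exists provided $\sterm$ is reachable from every $g \in \Goals$ via some chain of $g'$-actions, which is the natural reachability assumption implicit in the framework.
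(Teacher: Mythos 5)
Your proposal is correct and follows essentially the same route as the paper's proof: the paper also defines the Bellman-style operator on $\RR^{|\augGoals|}$, bounds $\Gammagg(g,g') \le \gamma_c$ using the section-wide standing assumption $\gamma_c < 1$ (not the substochasticity hypothesis alone), derives a $\gamma_c$-contraction in $\|\cdot\|_\infty$, and invokes the Banach fixed-point theorem. Your observation that the stated hypothesis (row sums $\le 1$) is weaker than the entrywise bound the $\max$-selection argument actually needs is a fair and accurate reading of the gap the paper silently closes the same way you do.
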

\begin{proof}
We can use the same approach typically used for value iteration. For any $v_0 \in \RR^{|\augGoals|}$, we can define the operator
\begin{equation*}
(B^g v)(g) \defeq \max_{g' \in \augGoals: \relgg(g,g') > 0} \vgg(g, g') + \Gammagg(g,g') \vgoal(g')
\end{equation*}
First we can show that $B^g$ is a $\gamma_c$-contraction. Assume we are given any two vectors $v_1, v_2$. Notice that $\Gammagg(g,g') \le \gamma_c$, because for our problem setting the discount is either equal to $\gamma_c$ or equal to zero at termination. Then we have that for any $g \in \augGoals$
\begin{align*}
&|(B^g v_1)(g) - (B^g v_2)(g)| \\
&= \Big|\max_{g' \in \augGoals: \relgg(g,g') > 0} \vgg(g, g') + \Gammagg(g,g') v_1(g') - \max_{g' \in \augGoals: \relgg(g,g') > 0} \vgg(g, g') + \Gammagg(g,g') v_2(g') \Big|\\
&\le \max_{g' \in \augGoals: \relgg(g,g') > 0} | \vgg(g, g') + \Gammagg(g,g') v_1(g') - (\vgg(g, g') + \Gammagg(g,g') v_2(g')) |\\
&= \max_{g' \in \augGoals: \relgg(g,g') > 0} | \Gammagg(g,g') (v_1(g') - v_2(g')) |\\
&\le \max_{g' \in \augGoals: \relgg(g,g') > 0} \gamma_c | v_1(g') - v_2(g') |\\
&\le  \gamma_c \| v_1 - v_2 \|_{\infty}
\end{align*}
Since this is true for any $g$, it is true for the max over $g$, giving
\begin{equation*}
\| B^g v_1 - B^g v_2 \|_\infty \le \gamma_c \| v_1 - v_2 \|_{\infty}
.
\end{equation*}
Because the operator $B^g$ is a contraction, since $\gamma_c < 1$, we know by the Banach Fixed-Point Theorem that the fixed-point exists and is unique.
\end{proof}

\newcommand{\rmax}{r_{\text{max}}}
\newcommand{\gmax}{G_{\text{max}}}
\newcommand{\rsub}{r_{\text{sub}}}

\section{Learning the Option Policies}\label{app:opt_pol}
In this section we detail the implementation of option learning which was used in all the experiments presented in this paper. This is followed by a brief description of how these option policies could be learnt more generally across domains. Our full procedure is summarised in Figure \ref{fig:gsp_flowchart}.

\subsection{Learning our Option Policies}
\label{learn_opt_pol}

In the simplest case, it is enough to learn $\pi_g$ that makes $\rgamma(s,g)$ maximal for every relevant $s$ (i.e., $\forall \, s \in \States \, \mathrm{s.t.} \, \relsg(s,g) > 0$). For each subgoal $g$, we learn its corresponding option model $\pi_g$ by initialising the base learner in the initiation set of $g$, and terminating the episode once the learner is in a state that is a member of $g$. We used a reward of -1 per step and save the option policy once we reach a 90\% success rate, and the last 100 episodes are within some domain-dependent cut off. This cut off was 10 steps for FourRooms, and 50 steps for GridBall and PinBall.

\begin{figure}[htbp] 
\label{fig:opt_pol_eval}
 \centering
    \includegraphics[width=\textwidth]{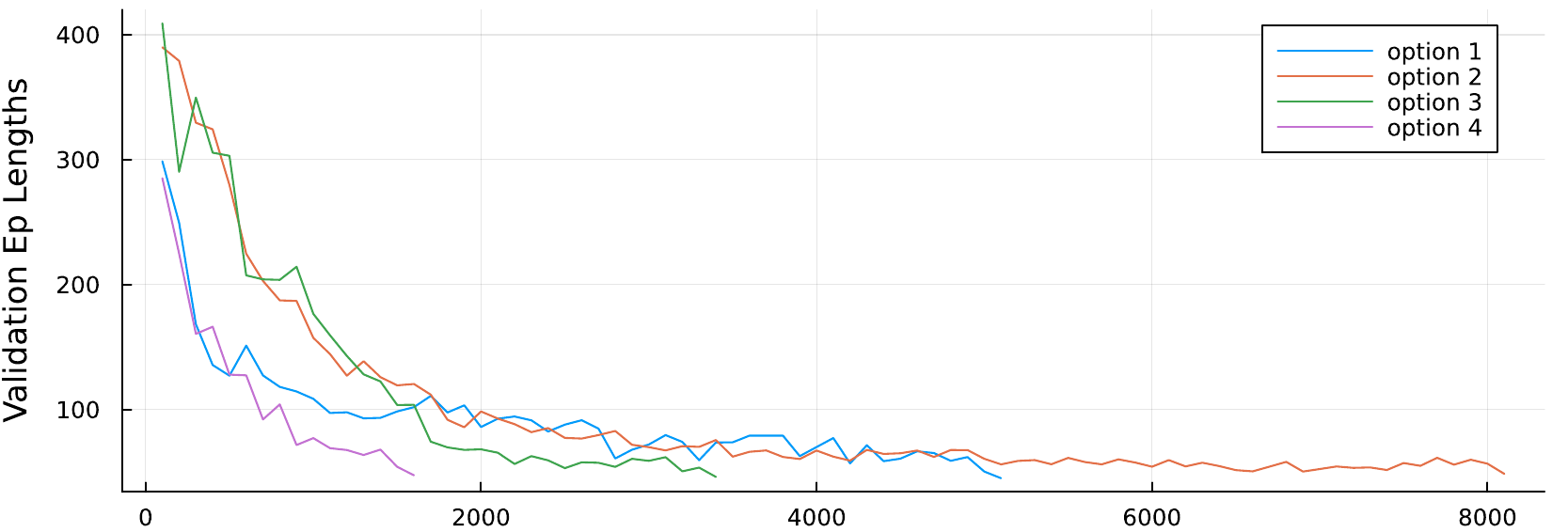}
    \caption{Evaluation of PinBall option policies by average trajectory length. Policies were saved once they were able to reach their respective subgoal in undeer 50 steps, averaged across 100 trajectories. Subgoal 2 was the hardest to learn an option policy for, due to its proximity to obstacles.}
\end{figure}

\textbf{Hyperparameters} In FourRooms, we use Sarsa(0) and Sarsa(0.9) base learners with learning rate $\alpha = 0.01$, discount factor $\gamma_c = 0.99$ and an $\epsilon = 0.02$ for its $\epsilon$-greedy policy. In GridBall, we used Sarsa(0) and Sarsa(0.9) base learners with $\alpha=0.05$, $\gamma_c=0.99$ and $\epsilon=0.1$. $\epsilon$ is decayed by 0.5\% each timestep. In the linear function approximation setting, these learners use a tilecoder with 16 tiles and 4 tilings across each of the both the GridBall dimensions. In PinBall, the Sarsa(0.9) learner was tuned to $\alpha = 0.1$, $\gamma_c=0.99$, $\epsilon=0.1$, decayed in the same manner as in GridBall. The same tile coder was used on on the 4-dimensional state space of PinBall. For the DDQN base learner, we use $\alpha=0.004$, $\gamma_c=0.99$, $\epsilon=0.1$, a buffer that holds up to $10,000$ transitions a batch size of $32$, and a target refresh rate of every $100$ steps. The Q-Network weights used Kaiming initialisation \citep{he2015delving}.

We could have also learned the action-value variant $\rgamma(s,a,g)$ using a Sarsa update, and set $\pi_g(s) = \argmax_{a \in \Actions} \rgamma(s,a,g)$, where we overloaded the definition of $\rgamma$. We can then extract $\rgamma(s,g) = \max_{a \in \Actions} \rgamma(s,a,g)$, to use in all the above updates and in planning. In our experiments, this strategy is sufficient for learning $\pi_g$.

\subsection{A General Algorithm for Option Policy Learning}

More generally, however, these approaches may be ineffective because maximizing environment reward may be at odds with reaching the subgoal in a reasonable number of steps (or at all). For example, in environments where the reward is always positive, maximizing environment reward might encourage the option policy not to terminate.\footnote{It is not always the case that positive rewards result in option policies that do not terminate. If there is a large, positive reward at the subgoal in the environment, Even if all rewards are positive, if $\gamma_c < 1$ and there is larger positive reward at the subgoal than in other nearby states, then the return is higher when reaching this subgoal sooner, since that reward is not discounted as many steps. This outcome is less nuanced for negative reward. If the rewards are always negative, on the other hand, then the option policy will terminate, trying to find the path with the best (but still negative) return. } 
However, we do want $\pi_g$ to reach $g$, while also obtaining the best return along the way to $g$. For example, if there is a lava pit along the way to a goal, even if going through the lava pit is the shortest path, we want the learned option to get to the goal by going around the lava pit.
We therefore want to be reward-respecting, as introduced for reward-respecting subtasks \citep{sutton2022rewardrespecting}, but also ensure termination. 

\newcommand{\rewardinterp}{c}

We can consider a spectrum of option policies that range from the policy that reaches the goal as fast as possible to one that focuses on environment reward. We can specify a new reward for learning the option: $\tilde{R}_{t+1} = \rewardinterp R_{t+1} + (1-\rewardinterp) (-1)$. When $\rewardinterp = 0$, we have a cost-to-goal problem, where the learned option policy should find the shortest path to the goal, regardless of reward along the way. When $\rewardinterp = 1$, the option policy focuses on environment reward, but may not terminate in $g$. 
We can start by learning the option policy that takes the shortest path with $\rewardinterp = 0$, and the corresponding $\rgamma(s,g), \GamModel(s,g)$. The constant $\rewardinterp$ can be increased until $\pi_g$ stops going to the goal, or until the discounted probability $\GamModel(s,g)$ drops below a specified threshold. 

Even without a well-specified $\rewardinterp$, the values under the option policy can still be informative. For example, it might indicate that it is difficult or dangerous to attempt to reach a goal. For this work, we propose a simple default, where we fix $\rewardinterp = 0.5$. Adaptive approaches, such as the idea described above, are left to future work.

The resulting algorithm to learn $\pi_g$ involves learning a separate value function for these rewards. We can learn action-values (or a parameterized policy) using the above reward. For example, we can learn a policy with the Q-learning update to action-values $\optionq$
\begin{align*}
\delta = \rewardinterp R_{t+1} +\rewardinterp - 1 + \gamma_{g, t+1} \max_{a'} \optionq(S_{t+1}, a', g) - \optionq(S_t, A_t, g)
\end{align*} 
Then we can set $\pi_g$ to be the greedy policy, $\pi_g(s) = \argmax_{a \in \Actions} \optionq(s,a,g)$.

\section{Learning the Subgoal Models}\label{app:sec_learning_models}

Now we need a way to learn the state-to-subgoal models, $\rgamma(s,g)$ and $\GamModel(s,g)$, still following the progression in Figure \ref{fig:gsp_flowchart}. These can both be represented as General Value Functions (GVFs) \citep{sutton2011horde}, 

\begin{equation}
    \Gamma(s, g) = \EE_{\pi_g} \left[ \sum_{k=0}^\infty \left(\prod_{k^\prime=0}^k\gamma_{t+k^\prime+1}\right )m(S_{t+1},g) \Big| S_t=s\right ], 
\end{equation}
\begin{equation}
    \rgamma(s, g) = \EE_{\pi_g} \left [ \sum_{k=0}^\infty \left(\prod_{k^\prime=0}^k\gamma_{t+k^\prime+1}\right ) R_{t+k+1} \Big| S_t=s\right],
\end{equation}

and we leverage this form to use standard algorithms in RL to learn them. 

\subsection{Model Learning}
\label{sec:model-learning}

In our experiments, the data is generated offline according to each $\pi_g$. We then use this episode dataset from each $\pi_g$ to learn the subgoal models for that subgoal $g$. This is done by ordinary least squares regression to fit a linear model in four-room, and by stochastic gradient descent with neural network models in GridBall and PinBall. Full experimental details for these methods are described in Appendix \ref{app_model_learning}.

\paragraph{Offline Model Update}
We first collect a dataset of $n$ episodes leading to a subgoal $g$,  $\mathcal{D}_g = \{\langle S_{i,1}, A_{i,1}, R_{i,1}, S_{i,1}, \dots, S_{i,T_i} \rangle\}_{i=1}^{n}$. $S_{i,t}, A_{i,t}, R_{i, t}$ represent the state, action and reward at timestep $t$ of episode $i$. $T_i$ is the length of episode $i$. $S_{i,0}$ is a randomised starting state within the initiation set of $g$, and $S_{i,T_i}$ is a state that is a member of subgoal $g$. For each $g$, we use $\mathcal{D}_g $ to generate a matrix of all visited states, $\mathbf{X} \in \mathbb{R}^{l\times|\States|}$, and a vector of all reward model returns, $\mathbf{g}_r \in \mathbb{R}^l$, and transition model returns $\mathbf{g}_\gamma \in \mathbb{R}^l$,
\begin{align*}
\mathbf{X} = \begin{pmatrix}
              S_{i,1} \\
              S_{i,2} \\
              \vdots \\
              S_{n,T_{n}}
              \end{pmatrix},
\mathbf{g}_r = \begin{pmatrix}
              R_{i,2} + \gamma \rgamma(S_{i,3}, g) \\
              R_{i,3} + \gamma \rgamma(S_{i,4}, g)\\
              \vdots \\
              R_{n,T_n}
              \end{pmatrix},
\mathbf{g}_\gamma = \begin{pmatrix}
                \gamma^{T_1 - 0} \\
                \gamma^{T_1 - 1} \\
                \vdots \\
                \gamma^{T_{n} - T_{n}}
                \end{pmatrix},
\end{align*}
where $l = \sum_{i = 1}^n T_i$ is the total number of visited states in $\mathcal{D}_g$.

This creates a system of linear equations, whose weights we can solve for numerically in the four-room domain,
\begin{align*}
  \mathbf{X}\rparams = \mathbf{g}_r \implies \rparams = \mathbf{X}^+\mathbf{g}_r, \\
  \mathbf{X}\gamparams = \mathbf{g}_\gamma \implies \gamparams = \mathbf{X}^+\mathbf{g}_\gamma,
\end{align*}
where $\rparams, \gamparams \in \mathbb{R}^{|\mathcal{S}|}$ and $\mathbf{X}^+$ is the Moore-Penrose pseudoinverse of $\mathbf{X}$ \citep{penrose1955generalized}.

For GridBall and PinBall, we used fully connected artificial neural networks for $\rgamma$ and $\Gamma$, and performed mini-batch stochastic gradient descent to solve $\rparams$ and $\gamparams$ for that subgoal $g$. We use each mini-batch of $m$ states, reward model returns and transition model returns to perform the update:
\begin{align*}
  \rparams \gets\rparams - \eta_r \sum_{j=1}^m \nabla_\rparams(\rparams^\top\mathbf{X}_{j,:} - \mathbf{g}_{r,j})^2,\\
  \gamparams \gets\gamparams - \eta_\Gamma \sum_{j=1}^m \nabla_\gamparams(\gamparams^\top\mathbf{X}_{j,:} - \mathbf{g}_{\gamma,j})^2,
\end{align*}
where $\eta_r$ and $\eta_\Gamma$ are the learning rates for the reward and discount models respectively. $\mathbf{X}_{j,:}$ is the $j^{\mathrm{th}}$ row of $\mathbf{X}$. $\mathbf{g}_{r,j}$ and $\mathbf{g}_{\gamma,j}$ are the $j^{\mathrm{th}}$ entry of $\mathbf{g}_{r}$ and $\mathbf{g}_{\gamma}$ respectively. In our experiments, we had a fully connected artificial neural network with two hidden layers of 128 units and ReLU activation for each subgoal. The network took a state $s = (x, y, \dot{x}, \dot{y})$ as input and outputted both $\rgamma(s,g)$ and $\Gamma(s,g)$. All weights were initialised using Kaiming initialisation \citep{he2015delving}. We use the Adam optimizer with $\eta=0.001$ and the other parameters set to the default ($b_1=0.9, b_2=0.999, \epsilon=10^{-8}$), mini-batches of 1024 transitions and 100 epochs.

\begin{figure}[htbp] 
  \begin{subfigure}[l]{\textwidth}
    \centering
    \begin{subfigure}[b]{0.45\textwidth} 
        \centering
        \caption{$\rgamma(s,g_1)$ and $\GamModel(s, g_1)$}
        \includegraphics[width=\textwidth]{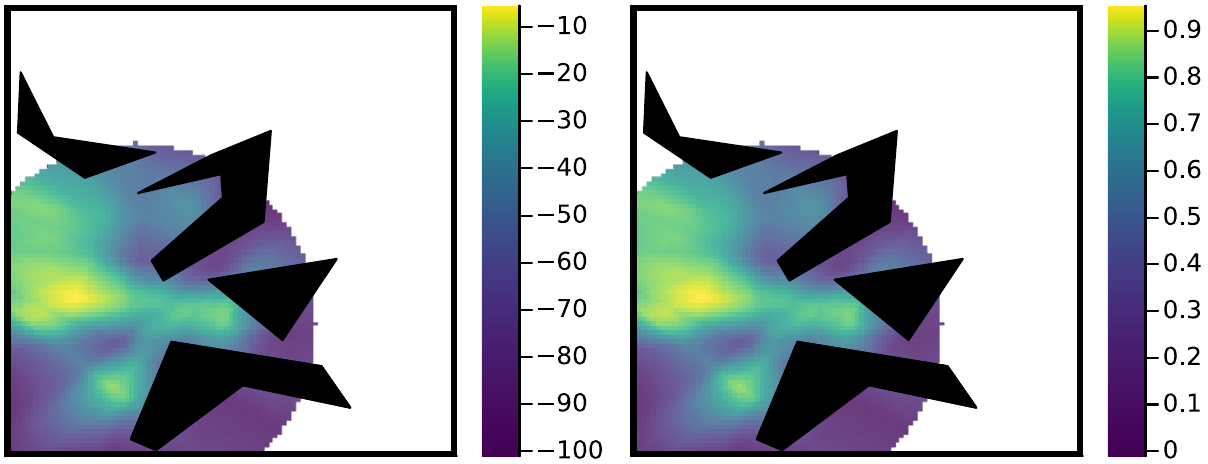} 
    \end{subfigure}
    \begin{subfigure}[b]{0.45\textwidth} 
        \centering
        \caption{$\rgamma(s,g_2)$ and $\GamModel(s, g_2)$}
        \includegraphics[width=\textwidth]{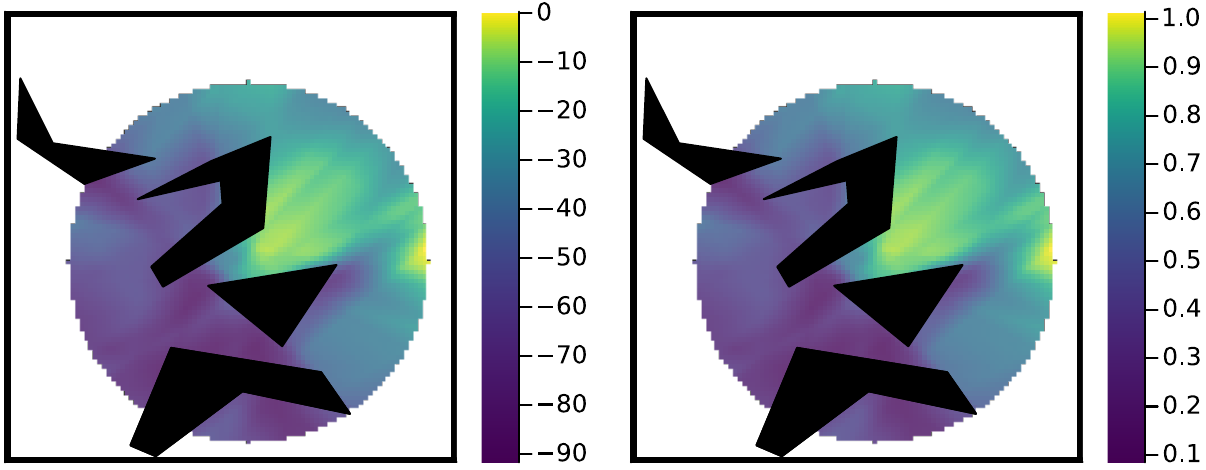} 
    \end{subfigure}

    \begin{subfigure}[b]{0.45\textwidth} 
        \centering
        \caption{$\rgamma(s,g_3)$ and $\GamModel(s, g_3)$}
        \includegraphics[width=\textwidth]{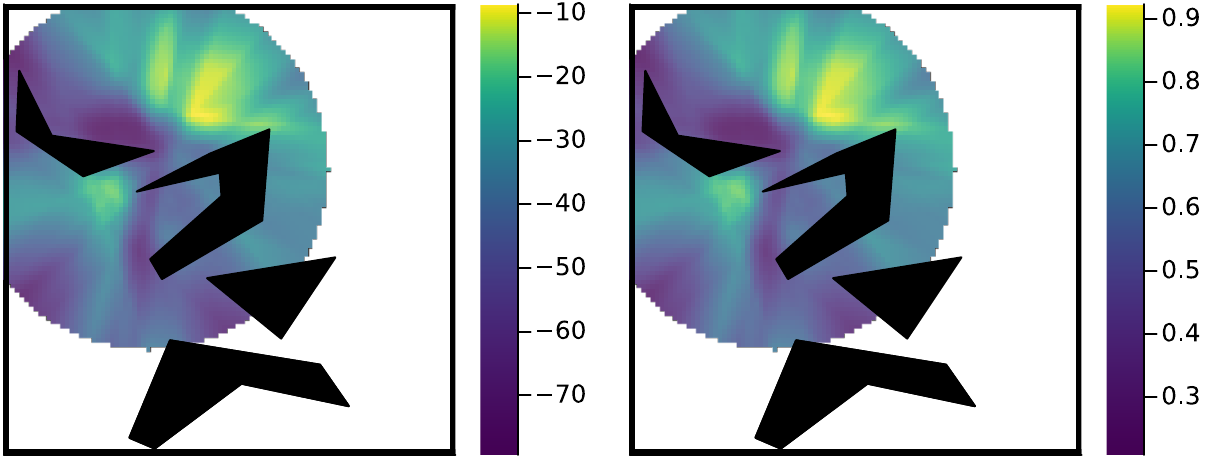} 
    \end{subfigure}
    \begin{subfigure}[b]{0.45\textwidth} 
        \centering
        \caption{$\rgamma(s,g_4)$ and $\GamModel(s, g_4)$}
        \includegraphics[width=\textwidth]{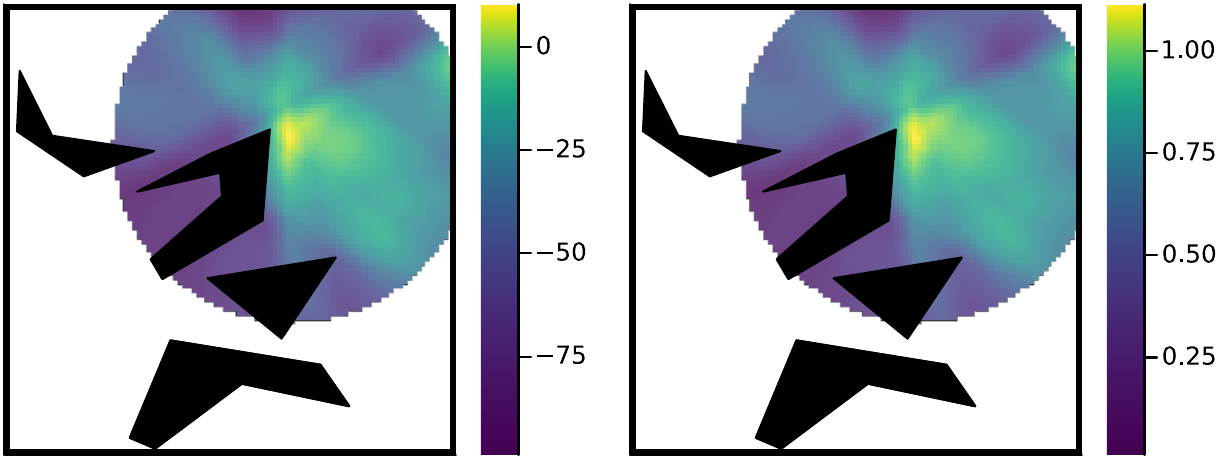} 
    \end{subfigure}
  \end{subfigure}
\caption{State-to-Subgoal models learnt by neural models after 100 epochs.}
\label{fig:s2g_models}
\end{figure}

The data could also be generated off-policy---according to some behavior $b$ rather than from $\pi_g$. We can either use importance sampling or we can learn the action-value variants of these models to avoid importance sampling. We describe both options here.

\paragraph{Off-policy Model Update using Importance Sampling}
We can update $\rgamma(\cdot, g)$ with an importance-sampled temporal difference (TD) learning update $\rho_t \delta_t \nabla \vsg(S_t,g)$ where $\rho_t = \frac{\pi_g(a | S_t)}{b(a | S_t)}$ and
\begin{align*}
\delta^r_t = R_{t+1} + \gamma_{g, t+1} \rgamma(S_{t+1},g) - \rgamma(S_t, g)
\end{align*}
The discount model $\GamModel(s,g)$ can be learned similarly, because it is also a GVF with cumulant $\indsg(S_{t+1},g) \gamma_{t+1}$ and discount $\gamma_{g,t+1}$. The TD update is $\rho_t \delta^\GamModel_t$ where
\begin{align*}
\delta^\GamModel_t = \indsg(S_{t+1},g) \gamma_{t+1} +  \gamma_{g, t+1} \GamModel(S_{t+1},g) - \GamModel(S_t,g)
\end{align*}
All of the above updates can be done using any off-policy GVF algorithm, including those using clipping of IS ratios and gradient-based methods, and can include replay. 

\paragraph{Off-policy Model Update without Importance Sampling}
Overloading notation, let us define the action-value variants $\rgamma(s,a,g)$ and $\GamModel(s,a,g)$. We get similar updates to above, now redefining
\begin{align*}
\delta^r_t = R_{t+1} + \gamma_{g, t+1} \rgamma(S_{t+1},\pi_g(S_{t+1}), g) - \rgamma(S_t, A_t, g)
\end{align*}
and using update $\delta_t \nabla \rgamma(S_t,A_t,g)$. For $\GamModel$ we have
\begin{align*}
\delta^\GamModel_t = \indsg(S_{t+1},g) \gamma_{t+1} +  \gamma_{g, t+1} \GamModel(S_{t+1},\pi_g(S_{t+1}),g) - \GamModel(S_t,A_t,g)
\end{align*}
We then define
$\rgamma(s,g) \defeq \rgamma(s,\pi_g(s), g)$ and $\GamModel(s,g) \defeq \GamModel(s,\pi_g(s), g)$ as deterministic functions of these learned functions. 

\paragraph{Restricting the Model Update to Relevant States} \label{app_restrict_model}
Recall, however, that we need only query these models where $\relsg(s,g) > 0$. We can focus our function approximation resources on those states. This idea has previously been introduced with an interest weighting for GVFs \citep{sutton2016anemphatic}, with connections made between interest and initiation sets \citep{white2017unifying}. For a large state space with many subgoals, using goal-space planning significantly expands the models that need to be learned, especially if we learn one model per subgoal. Even if we learn a model that generalizes across subgoal vectors, we are requiring that model to know a lot: values from all states to all subgoals. It is likely such a models would be hard to learn, and constraining what we learn about with $\relsg(s,g)$ is likely key for practical performance. 

The modification to the update is simple: we simply do not update $\rgamma(s,g), \Gammasg(s,g)$ in states $s$ where $\relsg(s,g) = 0$.\footnote{More generally, we might consider using \emph{emphatic weightings} \citep{sutton2016anemphatic} that allow us to incorporate such interest weightings $\relsg(s,g)$, without suffering from bootstrapping off of inaccurate values in states where $\relsg(s,g) = 0$. Incorporating this algorithm would likely benefit the whole system, but we keep things simpler for now and stick with a typical TD update.}
For the action-value variant, we do not update for state-action pairs $(s,a)$ where $\relsg(s,g) = 0$ and $\pi_g(s) \neq a$. The model will only ever be queried in $(s,a)$ where $\relsg(s,g) = 1$ and $\pi_g(s) = a$.

\paragraph{Learning the relevance model $d$}
We assume in this work that we simply have $\relsg(s,g)$, but we can at least consider ways that we could learn it. One approach is to attempt to learn $\GamModel$ for each $g$, to determine which are pertinent. Those with $\GamModel(s,g)$ closer to zero can have $\relsg(s,g) = 0$. In fact, such an approach was taken for discovering options \citep{khetarpal2020what}, where both options and such a relevance function are learned jointly. For us, they could also be learned jointly, where a larger set of goals start with $\relsg(s,g) = 1$, then if $\GamModel(s,g)$ remains small, then these may be switched to $\relsg(s,g) = 0$ and they will stop being learned in the model updates. 

\paragraph{Learning the Subgoal-to-Subgoal Models}
Finally, we need to extract the subgoal-to-subgoal models $\vgg, \Gammagg$ from $\rgamma, \GamModel$. These models were defined as means of the GVFs taken over member states of each subgoal, as specified in Equation \ref{sg_models}. The strategy involves updating towards the state-to-subgoal models, whenever a state corresponds to a subgoal. In other words, for a given $s$, if $\indsg(s,g) = 1$, then for a given $g'$ (or iterating through all of them), we can update $\vgg$ using
\begin{align*}
(\rgamma(s,g') - \vgg(g,g')) \nabla \vgg(g,g')
\end{align*}
and update $\Gammagg$ using
\begin{align*}
(\Gammasg(s,g') - \Gammagg(g,g')) \nabla \Gammagg(g,g')
.
\end{align*}
Note that these updates are not guaranteed to uniformly weight the states where $\indsg(s,g) = 1$. Instead, the implicit weighting is based on sampling $s$, such as through which states are visited and in the replay buffer. We do not attempt to correct this skew, as mentioned in the main body, we presume that this bias is minimal. An important next step is to better understand if this lack of reweighting causes convergence issues, and how to modify the algorithm to account for a potentially changing state visitation. 

\paragraph{Computing $\vsub$}
In order to compute $\vsub$, we first need a $\vgoal$ from our abstract MDP to look up abstract subgoal values. We compute $\vgoal$ by value iteration in the abstract MDP with a tolerance of $\epsilon = 10^{-8}$ and maximum of 10,000 iterations. The resulting $\vgoal$ from these subgoal models was used in the projection step to obtain $\vsub$, by iterating over relevant subgoals as described in Equation \eqref{eq:vsub}.

\begin{figure}

\centering
\includegraphics[width=0.5\textwidth]{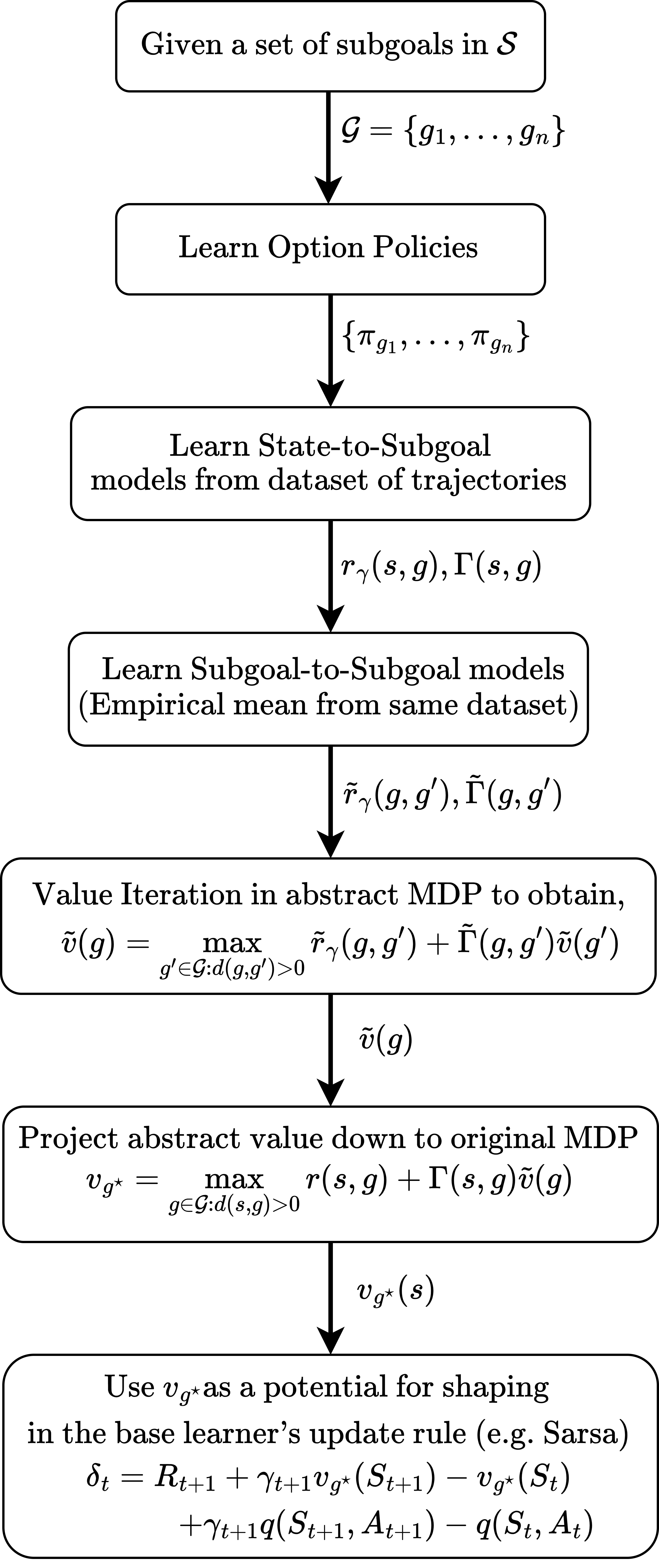}
\caption{Learning and using pre-trained models for GSP.}
\label{fig:gsp_flowchart}
\end{figure}

\section{Putting it all together}\label{app:pseudocode}

We summarize the above updates in pseudocode, specifying explicit parameters and how they are updated.
The algorithm is summarized in Algorithm \ref{alg:gsp}, with a diagram in Figure \ref{gsp_diagram}. An online update is used for the action-values for the main policy, without replay. All background computation is used for model learning using a replay buffer and for planning with those models. The pseudocode assumes a small set of subgoals, and is for episodic problems. We provide extensions to other settings in Appendix \ref{app_variants}, including using a Double DQN update for the policy update. We also discuss in-depth differences to existing related ideas, including landmark states and UVFAs.

Note that we overload the definitions of the subgoal models. We learn action-value variants $\rgamma(s,a,g; \rparams)$, with parameters $\rparams$, to avoid importance sampling corrections. 
We learn the option-policy using action-values $\optionq(s,a; \polparams)$ with parameters $\gamparams$, and so query the policy using $\pi_g(s; \polparams) \defeq \argmax_{a\in \Actions} \optionq(s,a, g; \polparams)$. The policy $\pi_g$ is not directly learned, but rather defined by $\optionq$. Similarly, we do not directly learn $\rgamma(s,g)$; instead, it is defined by $\rgamma(s,a,g; \rparams)$. Specifically, for model parameters $\mparams = (\rparams,\gamparams,\polparams)$, we set $\rgamma(s,g; \mparams) \defeq \rgamma(s,\pi_g(s; \polparams),g; \rparams)$ and $\GamModel(s,g; \mparams) \defeq \GamModel(s,\pi_g(s; \polparams),g; \gamparams)$. We query these derived functions in the pseudocode.

Finally, we assume access to a given set of subgoals. But there have been several natural ideas already proposed for option discovery, that nicely apply in our more constrained setting. One idea was to use subgoals that are often visited by the agent \citep{stolle2002learning}. Such a simple idea is likely a reasonable starting point to make a GSP algorithm that learns everything from scratch, including subgoals. Other approaches have used bottleneck states \citep{mcgovern2001automatic}.

\subsection{Pseudocode}

\begin{algorithm}[H]
  \caption{Goal-Space Planning for Episodic Problems}
  \label{alg:gsp}
\begin{algorithmic}
\State Assume given subgoals $\Goals$ and relevance function $d$
\State Initialize base learner (i.e. $\qparams, \mathbf{z} = \mathbf{0}, \mathbf{0}$ for Sarsa($\lambda$)), model parameters $\mparams = (\rparams, \gamparams, \polparams), \tilde{\mparams} = (\vggparams, \gamggparams)$
\State Sample initial state $s_0$ from the environment
  \For {$t \in 0, 1, 2, ...$}
    \State Take action $a_t$ using $q$ (e.g., $\epsilon$-greedy), observe $s_{t+1}, r_{t+1}, \gamma_{t+1}$
    \State Choose $a'$ from $s_{t+1}$ using $q$ (e.g. $\epsilon$-greedy)
    \State \subroutine{ModelUpdate}$(s_t, a_t, s_{t+1}, r_{t+1}, \gamma_{t+1})$
    \State \subroutine{Planning}$()$
    \State \subroutine{MainPolicyUpdate}$(s_t, a_t, s_{t+1}, r_{t+1}, \gamma_{t+1}, a')$ // Changes with base learner
  \EndFor
\end{algorithmic}
\end{algorithm}

\begin{algorithm}[H]
  \caption{\subroutine{MainPolicyUpdate}$(s, a, s', r, \gamma, a')$}
  \label{alg:MainPolicySarsaLambdaUpdate}
\begin{algorithmic}
\State // For a Sarsa($\lambda$) base learner
\State  $\vsub \gets \max_{g \in \augGoals: \relsg(s,g) > 0} \rgamma(s, g; \mparams) + \GamModel(s,g; \mparams) \vgoal(g)$
  \State $\delta \leftarrow r + \gamma \vsub(s') - \vsub(s) + \gamma q(s',a'; \qparams) - q(s,a; \qparams)$
  \State $\qparams \leftarrow \qparams + \alpha \delta \mathbf{z}\nabla_\qparams q(s,a; \qparams)$ 
  \State $\mathbf{z} \gets \gamma\lambda\mathbf{z} + \nabla_\qparams q(s,a; \qparams)$
\end{algorithmic}
\end{algorithm}

\begin{algorithm}[H]
  \caption{\subroutine{Planning}$()$}
  \label{alg:GoalSpacePlanning}
\begin{algorithmic}
  \For {$n$ iterations, for each $g \in \Goals$ }
    \State $\vgoal(g) \leftarrow \max_{g' \in \augGoals: d(g, g') > 0} \vgg(g, g'; \vggparams) + \Gammagg(g, g'; \gamggparams) \vgoal(g')$
  \EndFor
\end{algorithmic}
\end{algorithm}

\begin{algorithm}[H]
  \caption{\subroutine{ModelUpdate}$(s, a, s', r, \gamma)$}
  \label{alg:ModelUpdate}
\begin{algorithmic}
\State Add new transition $(s, a, s', r, \gamma)$ to buffer $B$
  \For {$g' \in \augGoals$, for multiple transitions $(s, a, r, s', \gamma)$ sampled from $B$}
  \State $\gamma_{g'} \gets \gamma (1 - \indsg(s', g'))$ 
  \State // Update option policy - e.g. by Sarsa
   \State $a' \gets \pi_{g'}(s'; \polparams)$   
    \State $\delta^\pi \leftarrow \tfrac{1}{2} (r - 1) + \gamma_{g'} \optionq(s', a', g'; \polparams) - \optionq(s, a, g'; \polparams)$
    \State $\polparams \leftarrow \polparams + \alpha^{\pi}\delta^\pi \nabla_\polparams q(s, a, g'; \polparams)$
    \State // Update reward model and discount model
    \State $\delta^r \leftarrow r + \gamma_{g'} \rgamma(s', a', g'; \rparams) - \rgamma(s, a, g'; \rparams)$
    \State $\delta^\Gamma \leftarrow \indsg(s', g)\gamma + \gamma_{g'} \GamModel(s', a', g'; \gamparams) - \GamModel(s, a, g'; \gamparams)$
    \State $\rparams \leftarrow \rparams + \alpha^{r}\delta^r \nabla_\rparams \rgamma(s, a, g'; \rparams)$
    \State $\gamparams \leftarrow \gamparams + \alpha^{\Gamma} \delta^\Gamma \nabla_\gamparams \GamModel(s, a, g'; \gamparams)$
    \State // Update goal-to-goal models using state-to-goal models
    \For {each $g$ such that $\indsg(s,g) > 0$ }
    \State $\vggparams \leftarrow \vggparams + \tilde{\alpha}^{r} (\rgamma(s, g'; \mparams) - \vgg(g, g';\vggparams)) \nabla_\rparams \vgg(g, g'; \vggparams)$
     \State $\gamggparams \leftarrow \gamggparams + \tilde{\alpha}^{\Gamma} (\GamModel(s, g'; \mparams) - \Gammagg(g, g';\vggparams)) \nabla_\gamparams
     \Gammagg(g, g'; \gamggparams)$
 \EndFor
  \EndFor
\end{algorithmic}
\end{algorithm}

\subsection{Extending GSP to Deep RL}\label{app_variants}
It is simple to extend the above pseudocode for the main policy update and the option policy update to use Double DQN \citep{van2016deep} updates with neural networks. The changes from the above pseudocode are 1) the use of a target network to stabilize learning with neural networks, 2) changing the one-step bootstrap target to the DDQN equivalent, 3) adding a replay buffer for learning the main policy, and 4) changing the update from using a single sample to using a batch update. Because the number of subgoals is discrete, the equations for learning $\vggparams$ and $\gamggparams$ does not change. We previously summarized these changes for learning the main policy in Algorithm \ref{alg:DDQN_GSP} and now detail the subgoal model learning in Algorithm \ref{alg:DDQN_model}.

\begin{algorithm}[H]
  \caption{\subroutine{DDQNModelUpdate}$(s, a, s', r, \gamma)$}
  \label{alg:DDQN_model}
\begin{algorithmic}
\State Add new transition $(s, a, s', r, \gamma)$ to buffer $D_{\mathrm{model}}$
  \For {$g' \in \augGoals$}
  \For {$n_{\mathrm{model}}$ mini-batches} 
  \State Sample batch $B_{\mathrm{model}} = \{ (s, a, r, s', \gamma )\}$ from $D_{\mathrm{model}}$
  \State $\gamma_{g'} \gets \gamma (1 - \indsg(s', g'))$ 
  \State // Update option policy
  \State $a' \leftarrow \argmax_{a' \in \Actions} \optionq(s', a', g'; \polparams)$
    \State $\delta^\pi(s, a, s', r, \gamma) \gets \tfrac{1}{2} (r - 1) + \gamma_{g'} \optionq(s', a', g'; \boldsymbol{\theta}^\pi_{\mathrm{targ}}) - q(s, a, g'; \polparams)$
    \State $\polparams \leftarrow \polparams - \alpha^{\pi}\nabla_{\polparams} \frac{1}{|B_{\mathrm{model}}|} \sum_{(s, a, r, s', \gamma) \in B_{\mathrm{model}}}(\delta^\pi)^2$

    \State $\polparams_{\mathrm{targ}} \leftarrow \rho_{\mathrm{model}} \polparams + (1 - \rho_{\mathrm{model}})\polparams_{\mathrm{targ}}$ 
    \State // Update reward model and discount model
    \State $\delta^r (s, a, r, s', \gamma) \leftarrow r + \gamma_{g'}(\gamma, s')\rgamma(s', a', g'; \rparams_{\mathrm{targ}}) - \rgamma(s, a, g'; \rparams)$
    \State $\delta^\Gamma (s, a, r, s', \gamma)  \leftarrow \indsg(s', g)\gamma + \gamma_{g'}(\gamma, s') \GamModel(s', a', g'; \gamparams_{\mathrm{targ}}) - \GamModel(s, a, g'; \gamparams)$
    \State $\rparams \leftarrow \rparams - \alpha^{r} \nabla_{\rparams} \frac{1}{|B_{\mathrm{model}}|} \sum_{(s, a, r, s', \gamma) \in B_{\mathrm{model}}} (\delta^r)^2$
    \State $\gamparams \leftarrow \gamparams - \alpha^{\Gamma} \nabla_{\gamparams} \frac{1}{|B_{\mathrm{model}}|} \sum_{(s, a, r, s', \gamma) \in B_{\mathrm{model}}} (\delta^\Gamma)^2$
    \If{$n_{\text{updates}}\% \tau == 0$}
    \State $\rparams_{\mathrm{targ}} \leftarrow \rparams$ 
    \State $\gamparams_{\mathrm{targ}} \leftarrow \gamparams$
    \EndIf
    \State $n_{\text{updates}}$ = $n_{\text{updates}}$ + 1
    \EndFor
    \State // Update goal-to-goal models using state-to-goal models
    \State \dots same as in prior pseudocode.
  \EndFor
\end{algorithmic}
\end{algorithm}

\subsection{Optimizations for GSP using Fixed Models}
It is possible to reduce computation cost of GSP when learning with a fixed model. When the subgoal models are fixed, $\vsub$ for an experience sample does not change over time as all components that are used to calculate $\vsub$ are fixed. This means that 
the agent can calculate $\vsub$ when it first receives the experience sample and save it in the buffer, and use the same calculated $\vsub$ whenever this sample is used for updating the main policy. When doing so, $\vsub$ only needs to be calculated once per sample experienced, instead of with every update. This is beneficial when training neural networks, where each sample is often used multiple times to update network weights.

An additional optimization possible on top of caching of $\vsub$ in the replay buffer is that we can batch the calculation of $\vsub$ for multiple samples together, which can be more efficient than calculating $\vsub$ for a single sample every step. To do this, we create an intermediate buffer that stores up to some number of samples. When the agent experiences a transition, it adds the sample to this intermediate buffer rather than the main buffer. When this buffer is full, the agent calculates $\vsub$ for all samples in this buffer at once and adds the samples alongside $\vsub$ to the main buffer. This intermediate buffer is then emptied and added to again every step. We set the maximum size for the intermediate buffer to 1024 in our experiments.

\section{An Alternative way of using $\vsub$}\label{app:alt_way}
As mentioned in section \ref{sec:alt_way}, this work also looked at an alternative way of incorporating $\vsub$ into the base learner's update rule. We do so by biasing the target of the TD error towards $\vsub$. This modifies the TD error,

\begin{equation}
    R_{t+1} + \gamma_{t+1} (\beta \vsub(S_{t+1}) + (1-\beta)q(S_{t+1}, A_{t+1})) - q(S_t, A_t),
\end{equation}

\noindent where $\beta \in [0,1]$ is a hyper-parameter. We can recover the base learner's update rule by setting $\beta = 0$, whereas $\beta = 1$ completely biases the updates towards the model's prediction (as in our approximation to LAVI in Section \ref{sec:alt_way}). While this allows us to control the extent of our model's influence on the learning update, we found using $\vsub$ as a potential to out perform all $\beta$ in Four-rooms, GridBall and PinBall. However, biasing the TD target in this manner does give the update a faster convergence as we reduce the effective horizon. We shall show this by analyzing the update to the main policy.

We assume we have a finite number of state-action pairs $n$, with parameterized action-values $q(\cdot; \mathbf{w}) \in \RR^n$ represented as a vector with one entry per state-action pair. 
Value iteration to find $q^*$ corresponds to updating with the Bellman optimality operator
\begin{equation}\label{eq_boperator}
(B q)(s,a) \defeq r(s,a) + \sum_{s'} P(s' | s, a)\gamma(s') \max_{a' \in \Actions} q(s',a') 
\end{equation}
On each step, for the current $q_t \defeq q(\cdot; \mathbf{w}_t)$, if we assume the parameterized function class can represent $B q_t$, then we can reason about the iterations of $\mathbf{w}_1, \mathbf{w}_2, \ldots$ obtain when minimizing distance between $q(\cdot; \mathbf{w}_{t+1})$ and $B q_t$, with
\begin{equation*}
q(s,a; \mathbf{w}_{t+1}) = (B q(\cdot; \mathbf{w}_{t}))(s,a)
\end{equation*}
Under function approximation, we do not simply update a table of values, but we can get this equality by minimizing until we have zero Bellman error. Note that $q^\star = B q^\star$, by definition.

In this \emph{realizability} regime, we can reason about the iterates produced by value iteration. The convergence rate is dictated by $\gamma_c$, as is well known, because 
\begin{equation*}
\| B q_1 - B q_2 \|_\infty \le \gamma_c \| q_1 - q_2\|_\infty
\end{equation*}
Specifically, if we assume $|r(s,a)| \le \rmax$, then we can use the fact that 1) the maximal return is no greater than $\gmax \defeq \frac{\rmax}{1-\gamma_c}$, and 2) for any initialization $q_0$ no larger in magnitude than this maximal return we have that $\| q_0 - q^\star \|_\infty \le 2 \gmax$. Therefore, we get that 
\begin{equation*}
\| B q_0 - q^\star \|_\infty = \| B q_0 - B q^\star \|_\infty \le \gamma_c \| q_0 - q^\star\|_\infty
\end{equation*}
and so after $t$ iterations we have 
\begin{equation*}
\| q_t - q^\star \|_\infty = \| B q_{t-1} - B q^\star \|_\infty \le \gamma_c \| q_{t-1} - q^\star\|_\infty \le \gamma_c^2 \| q_{t-2} - q^\star\|_\infty \ldots \le \gamma_c^t \| q_0 - q^\star\|_\infty = \gamma_c^t \gmax
\end{equation*}
We can use the exact same strategy to show convergence of value iteration, under our subgoal-value bootstrapping update. Let $r_{g^\star}(s,a) \defeq \sum_{s'} P(s' | s,a) \vsub(s')$, assuming $\vsub: \States \rightarrow [-\gmax, \gmax]$ is a given, fixed function. Then the modified Bellman optimality operator is
\begin{equation}\label{eq_boperator_beta}
(B^\beta q)(s,a) \defeq r(s,a) + \beta r_{g^\star}(s,a) + (1-\beta)\sum_{s'} P(s' | s, a) \gamma(s') \max_{a' \in \Actions} q(s',a'). 
\end{equation}

\begin{proposition}[Convergence rate of tabular value iteration for the biased update]\label{prop_gsp_rate}
The fixed point $q^\star_\beta = B^\beta q^\star_\beta$ exists and is unique.
Further, for $q_0$, and the corresponding $\mathbf{w}_0$, initialized such that $|q_0(s,a; \mathbf{w}_0)| \le \gmax$, the value iteration update with subgoal bootstrapping $q_t = B^\beta q_{t-1}$ for $t = 1, 2, \ldots$ satisfies
\begin{equation*}
\| q_t - q^\star_\beta \|_\infty \le \gamma_c^t(1-\beta)^t \frac{\rmax +  \beta\gmax}{1-\gamma_c(1-\beta)} 
\end{equation*}
\end{proposition}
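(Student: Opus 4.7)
The plan is to mirror the standard textbook argument for the convergence of tabular value iteration, with all bookkeeping centered on the modified operator $B^\beta$ from Equation \eqref{eq_boperator_beta}. Existence and uniqueness will follow from showing that $B^\beta$ is a contraction on $\mathbb{R}^n$ with respect to $\|\cdot\|_\infty$, at which point Banach's fixed-point theorem gives $q_\beta^*$ and its uniqueness for free. The convergence rate will then come from iterating the contraction inequality, together with an a priori bound on $\|q_\beta^*\|_\infty$ obtained from the fixed-point equation itself.

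First I would establish the contraction factor. The term $r(s,a) + \beta\, \rsub(s,a)$ does not depend on $q$, so it cancels in $(B^\beta q_1)(s,a) - (B^\beta q_2)(s,a)$, leaving only $(1-\beta)\sum_{s'} P(s'|s,a)\gamma(s')[\max_{a'} q_1(s',a') - \max_{a'} q_2(s',a')]$. Exactly the argument used in the proof of Proposition~\ref{prop_vi_gsp}, combined with $\gamma(s') \le \gamma_c$ and the non-expansiveness of the max, yields
\begin{equation*}
\| B^\beta q_1 - B^\beta q_2 \|_\infty \;\le\; (1-\beta)\gamma_c\, \| q_1 - q_2 \|_\infty .
\end{equation*}
Since $(1-\beta)\gamma_c < 1$ for any $\beta \in [0,1]$ and $\gamma_c < 1$, Banach's fixed-point theorem applies, giving the first claim. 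Iterating the contraction $t$ times yields $\|q_t - q_\beta^*\|_\infty \le ((1-\beta)\gamma_c)^t \|q_0 - q_\beta^*\|_\infty$.

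Next I would obtain an a priori bound on $\|q_\beta^*\|_\infty$. Taking absolute values in the fixed-point equation $q_\beta^*(s,a) = r(s,a) + \beta\, \rsub(s,a) + (1-\beta)\sum_{s'} P(s'|s,a)\gamma(s') \max_{a'} q_\beta^*(s',a')$ and using $|r(s,a)|\le \rmax$, $|\vsub(s')|\le \gmax$ (hence $|\rsub(s,a)|\le \gmax$), and $\gamma(s')\le \gamma_c$ gives
\begin{equation*}
\|q_\beta^*\|_\infty \;\le\; \rmax + \beta \gmax + (1-\beta)\gamma_c \|q_\beta^*\|_\infty ,
\end{equation*}
which rearranges to $\|q_\beta^*\|_\infty \le (\rmax + \beta \gmax)/(1-(1-\beta)\gamma_c)$. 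Combined with $\|q_0\|_\infty \le \gmax$ from the hypothesis, this will bound the initial error $\|q_0 - q_\beta^*\|_\infty$, so that after multiplying by $((1-\beta)\gamma_c)^t$ we obtain the stated geometric decay.

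The main obstacle is getting the leading constant exactly as stated rather than with a loose triangle-inequality factor. The naive route $\|q_0 - q_\beta^*\|_\infty \le \|q_0\|_\infty + \|q_\beta^*\|_\infty$ overshoots, so the cleaner strategy is to first verify that the ball of radius $K \defeq (\rmax + \beta\gmax)/(1-(1-\beta)\gamma_c)$ is invariant under $B^\beta$ (this is immediate from the same norm bound applied to $B^\beta q$), check that $\gmax \le K$ so the initial iterate already lies in this ball, and then exploit invariance together with the per-step contraction to propagate the bound without accumulating a factor of two. If a sharper combination is still needed, one can absorb the small overshoot into a one-step refinement using $q_1 = B^\beta q_0$ in place of $q_0$ and apply the contraction from there, since a single application of $B^\beta$ maps everything into $[-K,K]$ and the rate $((1-\beta)\gamma_c)^t$ is unchanged.
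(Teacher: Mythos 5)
Your proposal is correct and follows essentially the same route as the paper: show that $B^\beta$ is a $(1-\beta)\gamma_c$-contraction in $\|\cdot\|_\infty$ (the $q$-independent term $r + \beta\,\rsub$ cancels and the max is non-expansive), invoke the Banach fixed-point theorem for existence and uniqueness, iterate the contraction, and control the initial error by observing that the modified problem has reward bounded by $\rmax + \beta\gmax$ and discount $(1-\beta)\gamma_c$, so that $\|q_\beta^*\|_\infty \le (\rmax+\beta\gmax)/(1-(1-\beta)\gamma_c)$ --- the paper phrases this as the "maximal return" of a new MDP, you derive it directly from the fixed-point equation, which is the same bound. The one place you are more careful than the paper is the leading constant: the paper simply asserts $\|q_0 - q_\beta^*\|_\infty \le (\rmax+\beta\gmax)/(1-(1-\beta)\gamma_c)$, which under the hypothesis $|q_0|\le\gmax$ only follows with an extra additive $\gmax$ (a factor of at most $2$, since $\gmax$ is dominated by that quantity); your ball-invariance and one-step-refinement ideas do not remove that factor either, but this is a cosmetic looseness present in the paper's own argument (which also silently drops the analogous factor of $2$ in the $\beta=0$ warm-up) and does not affect the claimed $((1-\beta)\gamma_c)^t$ rate.
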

\begin{proof}
First we can show that $B^\beta$ is a $\gamma_c(1-\beta)$-contraction. Assume we are given any two vectors $q_1, q_2$. Notice that $\gamma(s) \le \gamma_c$, because for our problem setting it is either equal to $\gamma_c$ or equal to zero at termination. Then we have that for any $(s,a)$
\begin{align*}
|(B^\beta) q_1(s,a) - (B^\beta q_2)(s,a)| 
&=  \Bigg|(1-\beta)\sum_{s'} P(s' | s, a) \gamma(s') [\max_{a' \in \Actions} q_1(s',a') - \max_{a' \in \Actions} q_2(s',a')]\Bigg|\\
&\le \gamma_c(1-\beta) \sum_{s'} P(s' | s, a) \big|\max_{a' \in \Actions} q_1(s',a') - \max_{a' \in \Actions} q_2(s',a')\big|\\
&\le \gamma_c(1-\beta) \sum_{s'} P(s' | s, a) \max_{a' \in \Actions} |q_1(s',a') - q_2(s',a')|\\
&\le \gamma_c(1-\beta) \sum_{s'} P(s' | s, a) \max_{s' \in \States, a' \in \Actions} |q_1(s',a') - q_2(s',a')|\\
&\le \gamma_c(1-\beta) \sum_{s'} P(s' | s, a) \| q_1 - q_2 \|_{\infty}\\
&= \gamma_c(1-\beta) \| q_1 - q_2 \|_{\infty}
\end{align*}
Since this is true for any $(s,a)$, it is true for the max, giving
\begin{equation*}
\| B^\beta q_1 - B^\beta q_2 \|_\infty \le \gamma_c(1-\beta) \| q_1 - q_2 \|_{\infty}
.
\end{equation*}
Because the operator is a contraction, since $\gamma_c(1-\beta) < 1$, we know by the Banach Fixed-Point Theorem that the fixed-point exists and is unique. 

Now we can also use contraction property for the convergence rate. Notice first that we can consider $\tilde{r}(s,a) \defeq r(s,a) + r_{g^\star}(s,a)$ as the new reward, with maximum value $\rmax + \beta\gmax$. Taking discount as $\gamma_c(1-\beta)$, the maximal return is $\frac{\rmax +  \beta\gmax}{1-\gamma_c(1-\beta)}$.
\begin{align*}
\| q_t - q^\star_\beta \|_\infty &= \| B^\beta q_{t-1} - B^\beta q^\star \|_\infty \le \gamma_c(1-\beta) \| q_{t-1} - q^\star\|_\infty \ldots \le \gamma_c^t(1-\beta)^t \| q_0 - q^\star\|_\infty\\
&\le \gamma_c^t(1-\beta)^t \frac{\rmax + \beta\gmax}{1-\gamma_c(1-\beta)} 
\end{align*}
\par\vspace{-0.9cm}
\end{proof}

This rate is dominated by $(\gamma_c(1-\beta))^t$. We can determine after how many iterations this term overcomes the increase in the upper bound on the return. In other words, we want to know how big $t$ needs to be to get
\begin{align*}
\gamma_c^t(1-\beta)^t \frac{\rmax + \beta\gmax}{1-\gamma_c(1-\beta)}  \le \gamma_c^t \gmax
. 
\end{align*}
Rearranging terms, we get that this is true for
\begin{align*}
t > \log\left(\frac{\rmax + \beta \gmax}{\gmax(1-\gamma_c(1-\beta))}\right) \Bigg/ \log\left(\frac{1}{1-\beta}\right)
. 
\end{align*}
For example if $\rmax = 1$, $\gamma_c = 0.99$ and $\beta = 0.5$, then we have that $t > 1.56$. 
If we have that $\rmax = 10$, $\gamma_c = 0.99$ and $\beta = 0.5$, then we get that $t \ge 5$. 
If we have that $\rmax = 1$, $\gamma_c = 0.99$ and $\beta = 0.1$, then we get that $t \ge 22$. 

While this increased convergence rate is present for the biased update, it does not show up when using $\vsub$ as a potential-based shaping reward. Although using $\vsub$ as a potential does not increase the convergence rate to $v^\star$, it can help quickly identify $\pi^\star$. Specifically, when $\vsub$ is $v^\star$, and the value function is constant, e.g., initialized to $0$, it only takes one application of the bellman operator in each state to find the optimal policy. We formalize this in the proposition below. 

\begin{proposition}
\label{prop:optimalstep}
    For $\vsub = v^\star$ and $v_0 = c$, for $c \in \mathbb{R}$, then the policy, $\pi_1$ derived after a single bellman update at all states will be optimal, i.e., 
    \begin{equation}
    \forall s \ \pi_1(s) \in \argmax_{a} q^\star(s,a). 
    \end{equation}
\end{proposition}
\begin{proof}
    Let the $q$ estimate for the $k^\text{th}$ iteration be 
    \begin{equation}
        q_k(s,a) = R(s,a) + \sum_{s'}P(s,a,s') \left (\gamma_c \vsub(s') - \vsub(s) + \gamma_c v_{k-1}(s') \right ). 
    \end{equation}
    The value function for iteration $k$ is $v_k = \max_a q_k(s,a)$ and 
    the policy for the $k^\text{th}$ iteration is $\pi_k(s) \in \argmax_a q_k(s,a)$. The value of $q_1$ is 
    \begin{align}
        q_1(s,a) = &\ R(s,a) + \sum_{s'}P(s,a,s') \left (\gamma_c \vsub(s') - \vsub(s) + \gamma_c v_{0}(s') \right ) \\
        = &\ R(s,a) + \sum_{s'}P(s,a,s') \left ( \gamma_c v^\star(s') - v^\star(s) + \gamma_c v_{0}(s') \right ) \\
        = &\ \underbrace{R(s,a) + \sum_{s'}P(s,a,s')\gamma_c v^\star(s')}_{=q^\star(s,a)} + \underbrace{\sum_{s'}P(s,a,s') \gamma_c v_{0}(s')}_{= \gamma_c c} - v^\star(s) \\
        = &\ q^\star(s,a) + \gamma_c c - v^\star(s)
    \end{align}
    where the last line follows because $v_{0}(s') = c$ for all $s'$.
    Then plugging this expression into $\pi_1$ yields 
    \begin{equation}
    \pi_1(s) \in \argmax_a q^\star(s,a) + \gamma_c c - v^\star(s) = \argmax_a q^\star(s,a).
    \end{equation}
\end{proof}

\begin{remark}
While having $\vsub=v^\star$ is not realistic, Proposition \ref{prop:optimalstep} means that the policy will quickly align with what is preferable under $\vsub$ before finding what is optimal for the MDP without the shaping reward. 
\end{remark}

\section{Errors in Learned Subgoal Models} \label{app_model_learning}

\begin{figure}[htbp] 
  \begin{subfigure}[l]{0.35\textwidth}
    \centering
    \begin{subfigure}[b]{0.45\textwidth} 
        \centering
        \includegraphics[width=\textwidth]{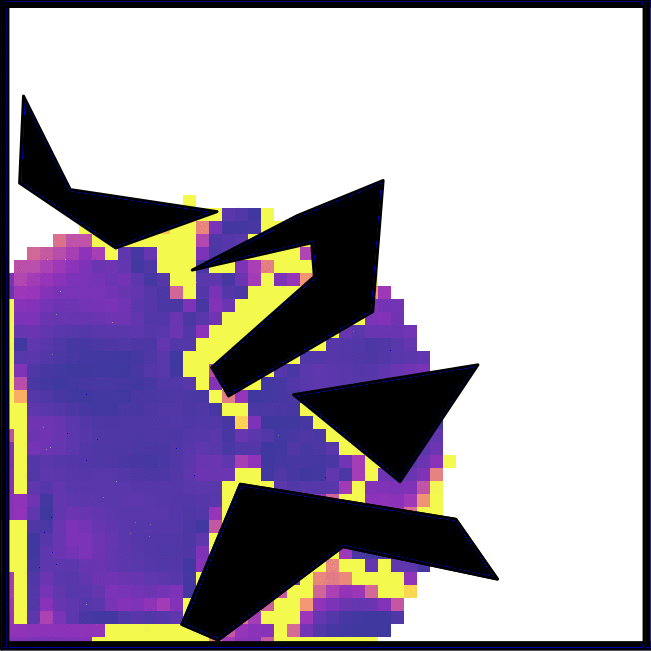} 
    \end{subfigure}
    \begin{subfigure}[b]{0.45\textwidth} 
        \centering
        \includegraphics[width=\textwidth]{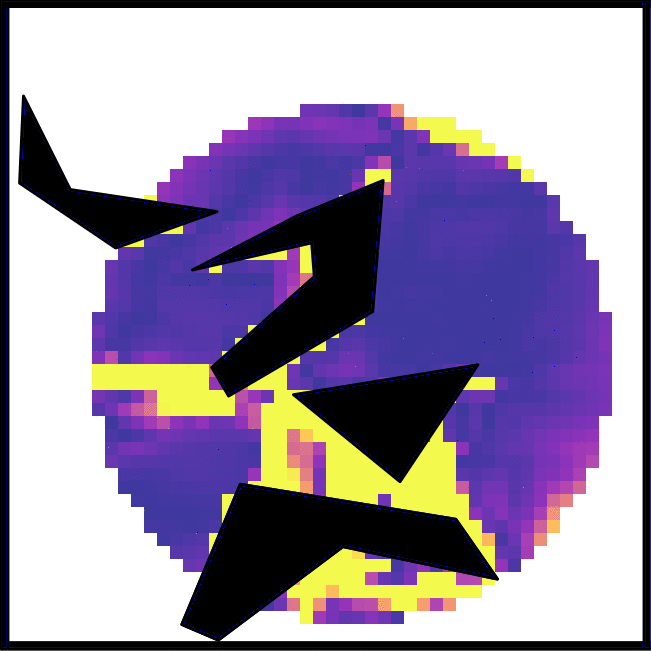} 
    \end{subfigure}

    \begin{subfigure}[b]{0.45\textwidth} 
        \centering
        \includegraphics[width=\textwidth]{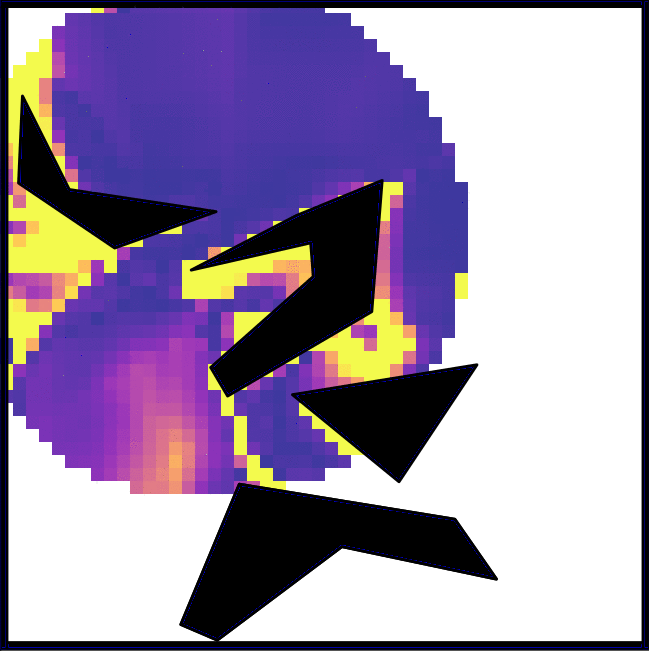} 
    \end{subfigure}
    \begin{subfigure}[b]{0.45\textwidth} 
        \centering
        \includegraphics[width=\textwidth]{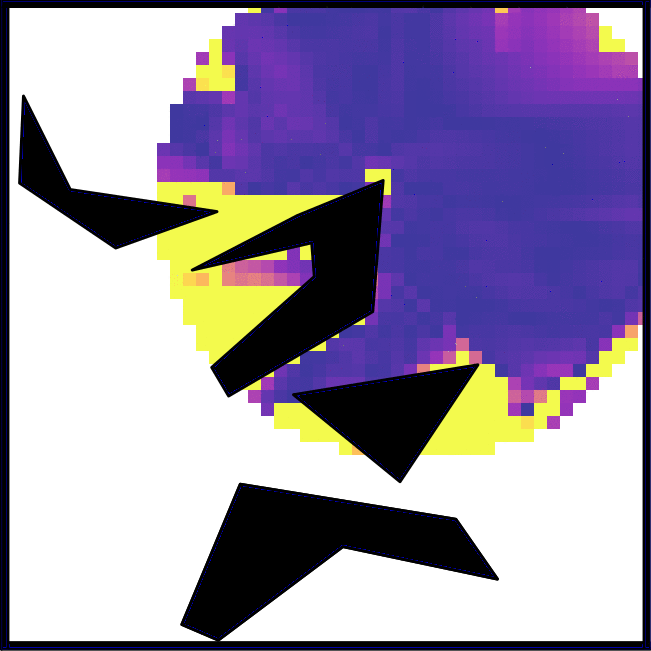} 
    \end{subfigure}
  \caption{Absolute error in $r(s,g)$}
  \label{r_model_error}
  \end{subfigure}
  \begin{subfigure}[l]{0.08\textwidth}
    \centering
    \includegraphics[width=\textwidth]{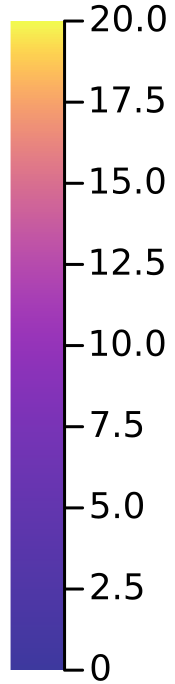} 
  \end{subfigure}
\hfill
\begin{subfigure}[r]{0.35\textwidth} 
  \centering
  \begin{subfigure}[b]{0.45\textwidth} 
      \centering
      \includegraphics[width=\textwidth]{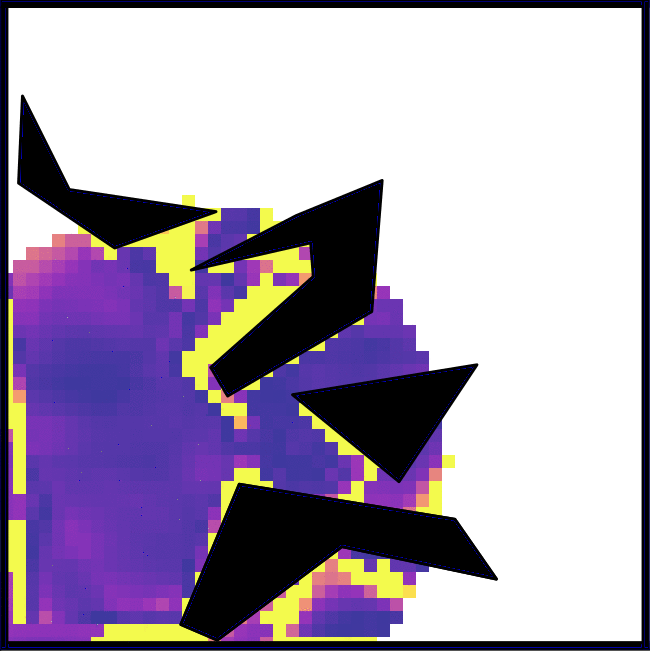} 
  \end{subfigure}
  \begin{subfigure}[b]{0.45\textwidth} 
      \centering
      \includegraphics[width=\textwidth]{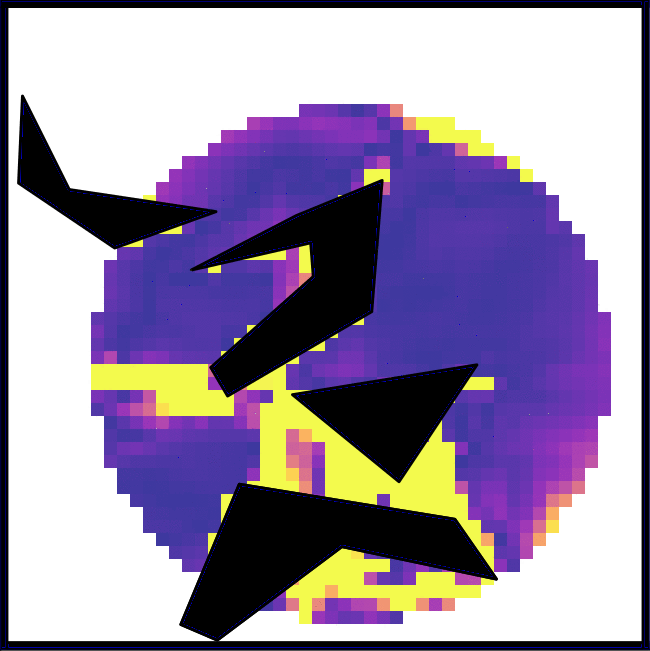} 
  \end{subfigure}

  \begin{subfigure}[b]{0.45\textwidth} 
      \centering
      \includegraphics[width=\textwidth]{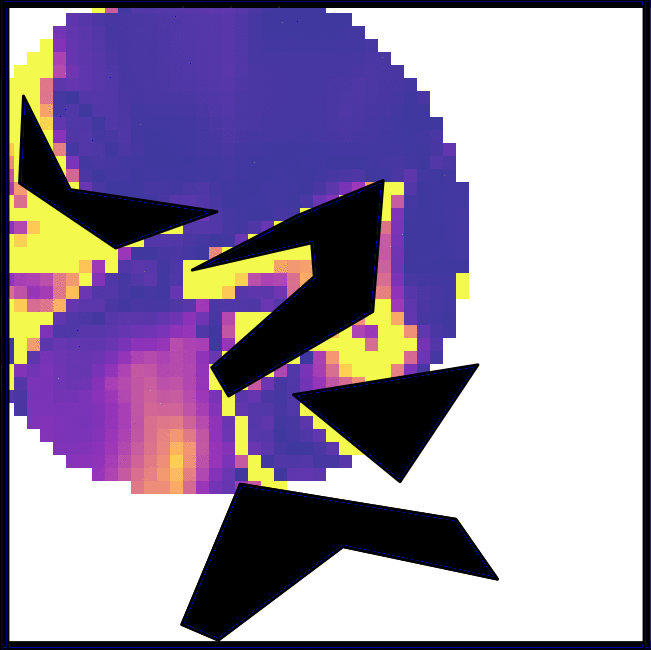} 
  \end{subfigure}
  \begin{subfigure}[b]{0.45\textwidth} 
      \centering
      \includegraphics[width=\textwidth]{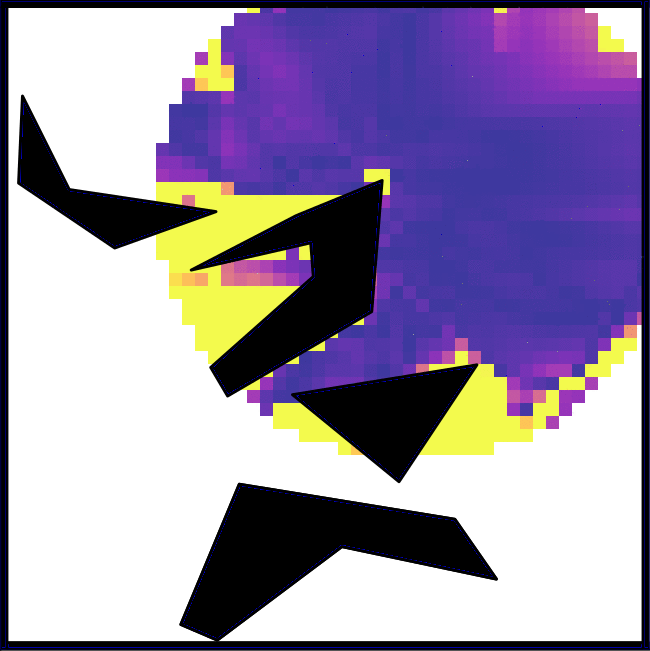} 
  \end{subfigure}
  \caption{Absolute error in $\Gamma(s,g)$}
  \label{Gamma_model_error}
\end{subfigure}
\begin{subfigure}[l]{0.09\textwidth}
  \centering
  \includegraphics[width=\textwidth]{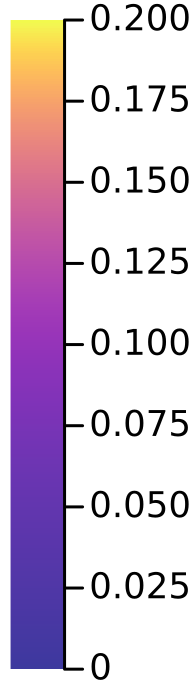} 
\end{subfigure}
\caption{Model errors in State-to-Subgoal models used in GridBall.}
\label{fig:model_errors}
\end{figure}

To better understand the accuracy of our learned subgoal models, we performed roll-outs of the learned option policy at different $(x,y)$ locations on GridBall and compared the true $\rgamma$ and $\Gamma$ with the estimated values. Figure \ref{fig:model_errors} shows a heatmap of the absolute error of the model compared to the ground truth, with the mapping of colors on the right. The error in each pixel was computed by rolling out episodes from that state and logging the actual reward and discounted probability of reaching the subgoal. The models tend to be more accurate in regions that are clear of obstacles, and less near these obstacles or near the boundary of the initiation set. The distribution of error over the initiation set is very similar for both $r$ and $\Gamma$ models. While the magnitudes of errors are not unreasonable, they are also not very near zero. This results is encouraging in that inaccuracies in the model do not prevent useful planning.

\subsection{Subgoal Placement and Region of Attraction}
\label{sec:region_of_attraction}

A counter intuitive observation from the experiments in Section \ref{sec_subgoalselection} was that the On Alternate path helped the agent quickly change its policy but $\vsub$ did not quickly change. In this section, we investigate this reason and put forth the following hypothesis:
\begin{hypothesis}
    GSP creates a region of attraction so that the agent follows the optimal path as determined by the abstract MDP.
\end{hypothesis} 

That is to say, if a single chain of subgoals is represented in the abstract MDP, then the learner will initially try and closely follow this path even if it is not the optimal path. To test this hypothesis, we want to see that the agent will occupy states similar to what is specified by the optimal path in the abstract MDP. For this experiment, we ran GSP on FourRooms (without the lava pools) with each subgoal configuration defined in the previous section. We measured how much time the agent spends in the bottom left room and the top right room. The agent should, as it learns about the environment, spend more time in the top right room and less time in the bottom left room. We would expect all agents to follow this trend, except for the one that is missing a subgoal to go through the top right room. We show the results for each configuration and Sarsa($0$) with no GSP in Figure \ref{fig:attraction}. 

\begin{figure}[tb]
  \centering
    \includegraphics[width=\textwidth]{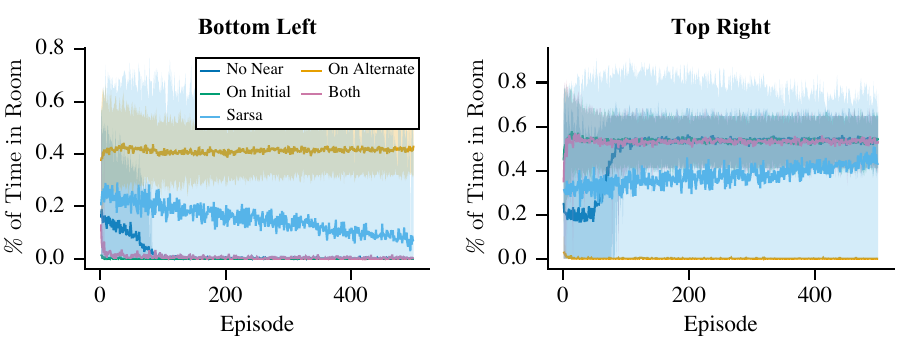}
    \caption{This figure shows the time the agent spends per episode in the bottom left and top right rooms. The lines convey the average $\%$ of time the agent spend and the shaded lines represent $(0.05,0.9)$ tolerance intervals computed from $100$ trials.} 
    \label{fig:attraction}
\end{figure}

The results in Figure \ref{fig:attraction} are clear. All methods learn to go through the top right room except for the subgoal configuration missing a subgoal on that path to the goal state. This supports our hypothesis that the agent will learn to follow the optimal path as specified by the abstract MDP. This also means that while potential-based shaping (used to propagate value information from the abstract MDP to the base learner) does not change the optimal policy, it can make it harder for the learner to find the optimal policy. 

Based on the experiments in Section \ref{sec_subgoalselection} and this one, we can conclude a few key points about GSP. The first is that $\vsub$ can only provide value information as determined by the optimal value function through the abstract MDP, which may not reflect the connectivity of the original MDP. Second, the learner's exploration through the state space will be highly impacted by the known subgoals. With the basic $\epsilon$-greedy exploration policy that GSP currently uses, GSP will quickly follow and refine the best policy found within the abstract MDP. If the optimal policy is near to the policy found by the abstract MDP, then GSP will be able to quickly discover it. However, if the optimal policy is very different than the one found by the abstract MDP (for example, if the best abstract MDP policy follows an alternate sub-optimal path), this will make the agent explore around its sub-optimal policy, and thus possibly slowing down the discovery of the optimal policy, because the basic $\epsilon$-greedy exploration policy centralizes exploration around the current best policy known by the agent. 

This is all to say that there is work to be done to improve GSP's exploration by incorporating more sophisticated exploration strategies. There are also opportunities to develop new exploration strategies that takes advantage of how GSP learns with the knowledge of subgoals within the environment. For example, one may consider leveraging an existing subgoal formulation for more directed exploration by introducing reward bonuses at other subgoals, once we know the environment has changed. Additional work to find new subgoals or refine the current subgoal configurations can also have a high impact in how well GSP can explore and adapt to changes in the environment.

\section{Additional Experiment Details}\label{app_exp_details}

This section provides additional details for the PinBall environment, the various hyperparameters used for DDQN and GSP, and the hyperparameters sweeps performed.

The pinball configuration that we used is based on the easy configuration found at \texttt{https://github.com/DecisionMakingAI/BenchmarkEnvironments.jl}, which was released under the MIT license. We have modified the environment to support additional features such as changing terminations, visualizing subgoals, and various bug fixes.

\subsection{Hyperparameter Sweep Methodology} \label{sec:hyperparam_sweep} 

For Sarsa($\lambda$), we swept it's learning rate over $[0.001, 0.01, 0.05$ $ 0.01, 0.5, 0.9]$. 0.01, 0.05 and 0.1 were found to be the best for FourRooms, GridBall and PinBall respectively. For DDQN, we swept its learning rate $\alpha$ over $[5 \times 10^{-4}, 1\times10^{-3}, 2\times10^{-3}, 4\times10^{-3}, 5\times10^{-3}]$ and target refresh rate $\tau$ over $[1, 50, 100, 200, 1000]$ as shown in Figure \ref{fig:ddqn_sweep}.

\begin{figure}[htb]
    
    \centering

    \begin{subfigure}{0.65\textwidth}
        \includegraphics[width=0.45\linewidth]{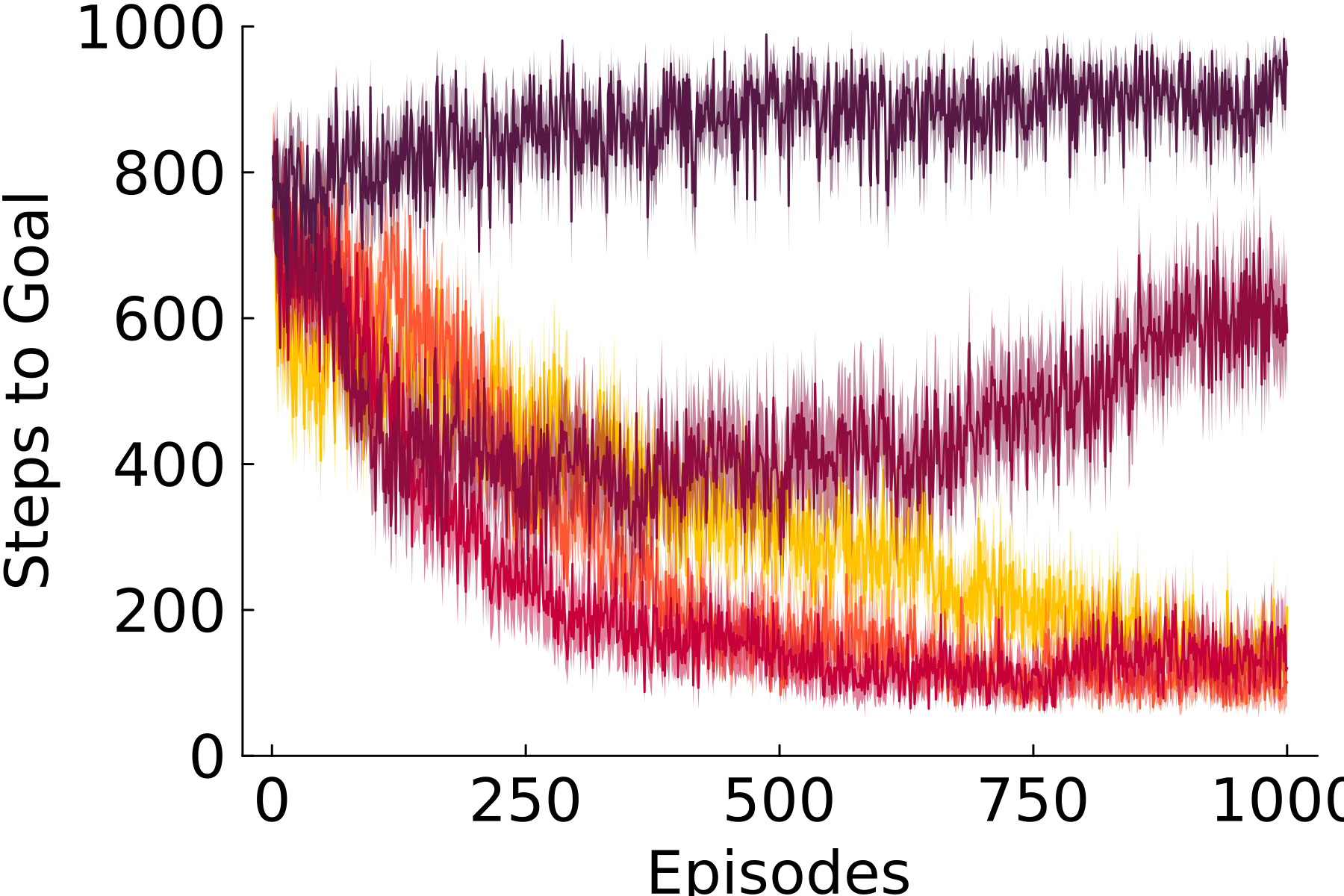}
         \includegraphics[width=0.45\linewidth]{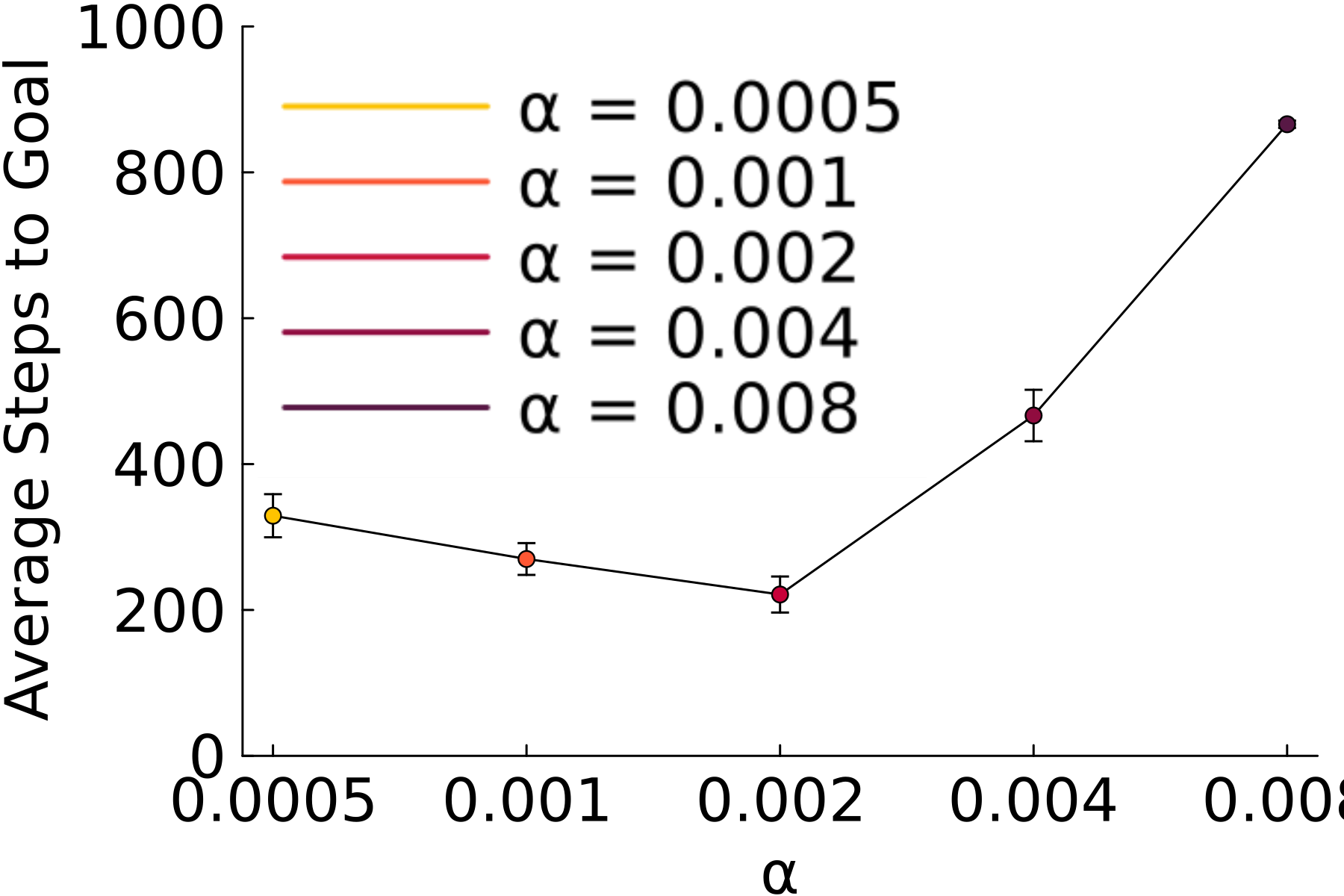}
         \caption{$\tau = 1$}
    \end{subfigure}
    \vspace{\baselineskip} 

    \begin{subfigure}{0.65\textwidth}
        \includegraphics[width=0.45\linewidth]{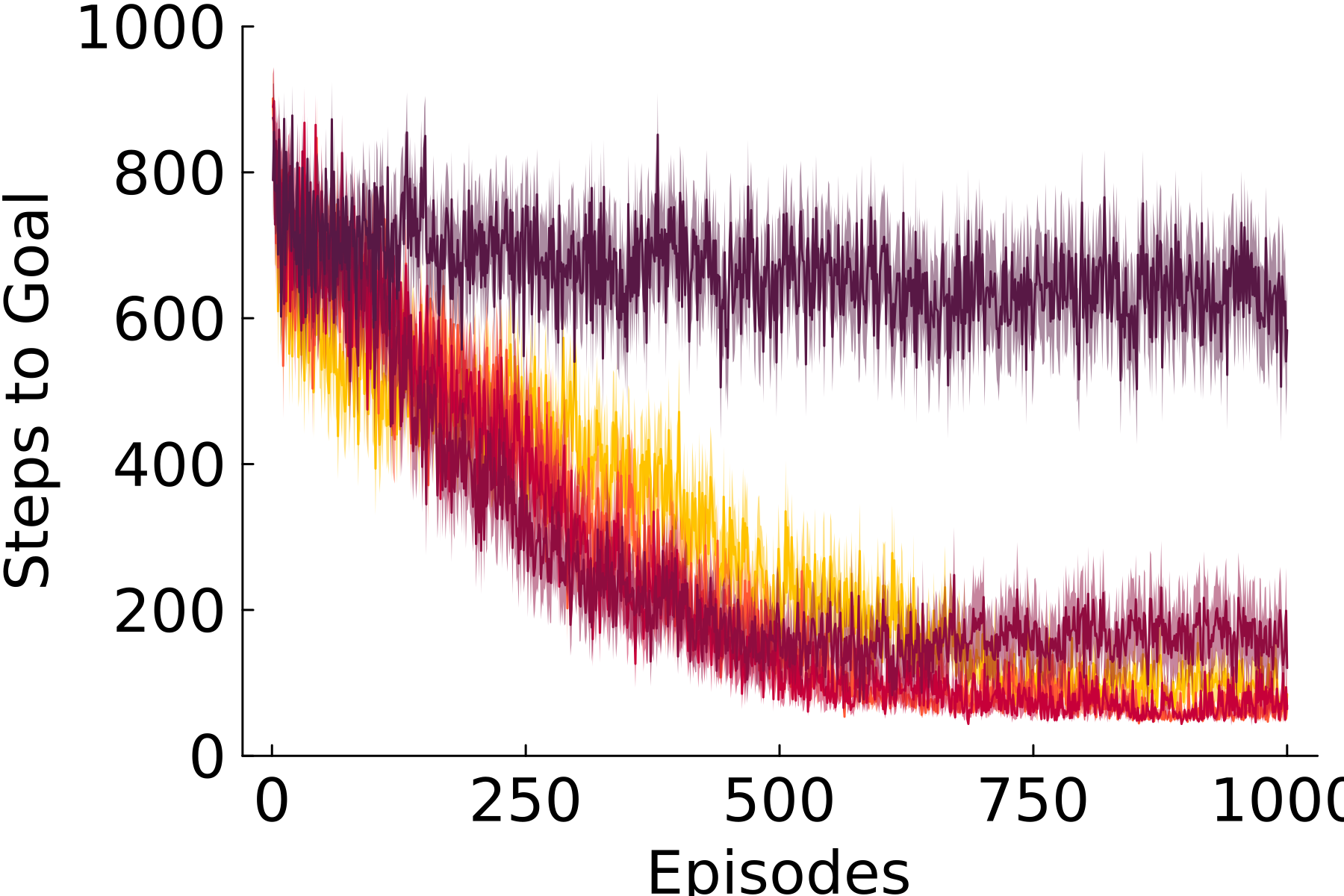}
        \includegraphics[width=0.45\linewidth]{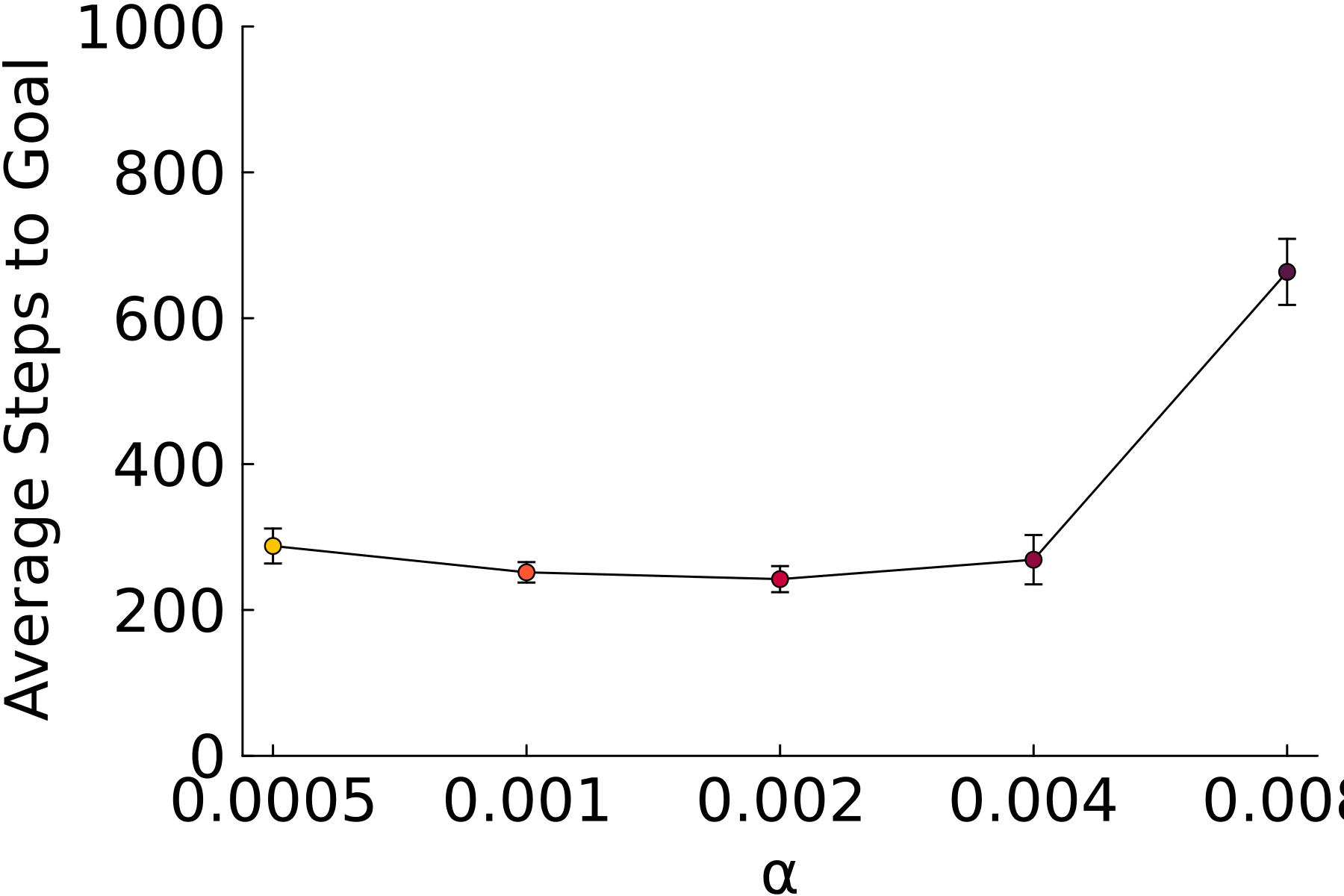}
         \caption{$\tau = 50$}
    \end{subfigure}
      \vspace{\baselineskip}
    
    \begin{subfigure}{0.65\textwidth}
        \includegraphics[width=0.45\linewidth]{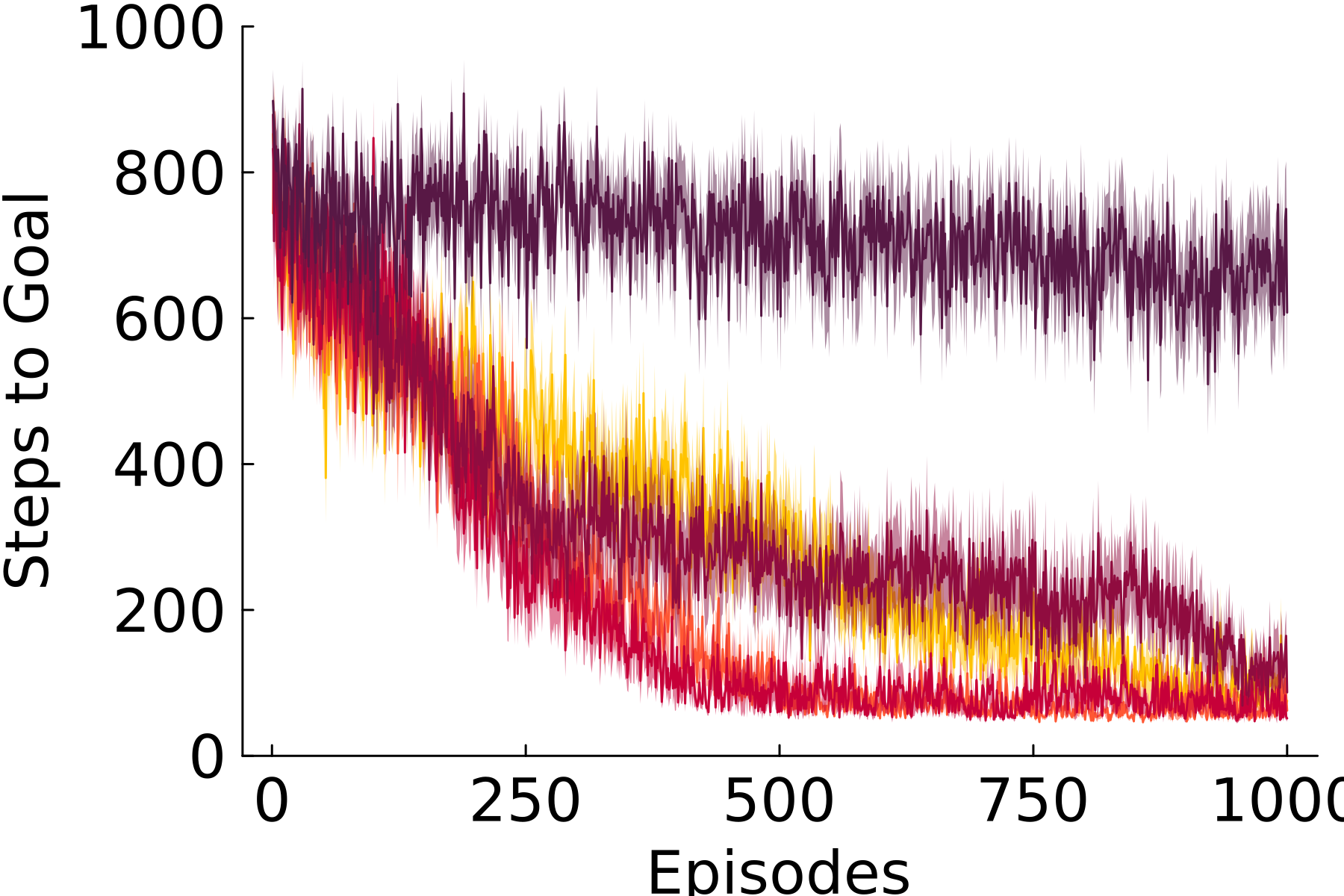}
        \includegraphics[width=0.45\linewidth]{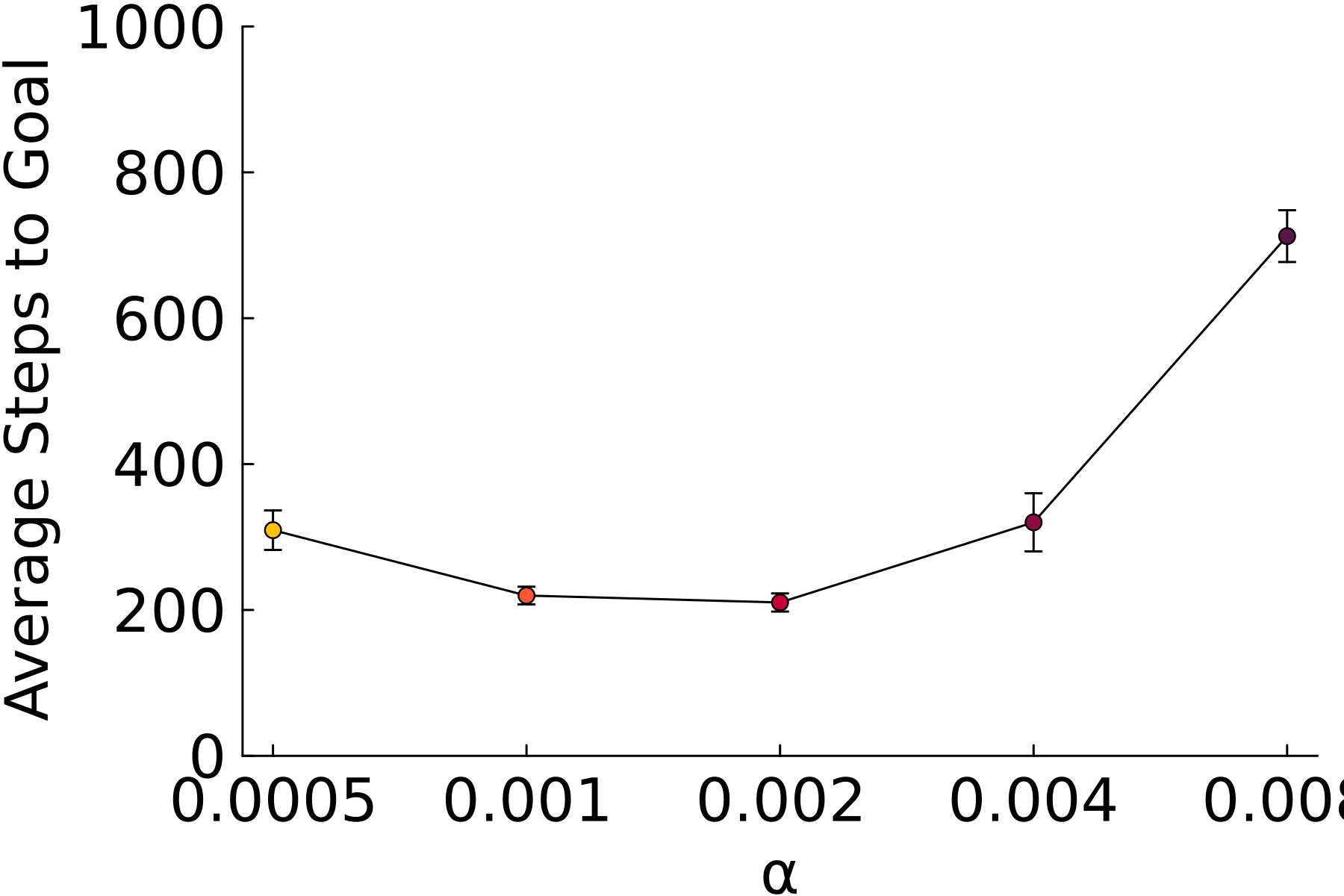}
         \caption{$\tau = 100$}
    \end{subfigure}
    \vspace{\baselineskip}
    
    \begin{subfigure}{0.65\textwidth}
        \includegraphics[width=0.45\linewidth]{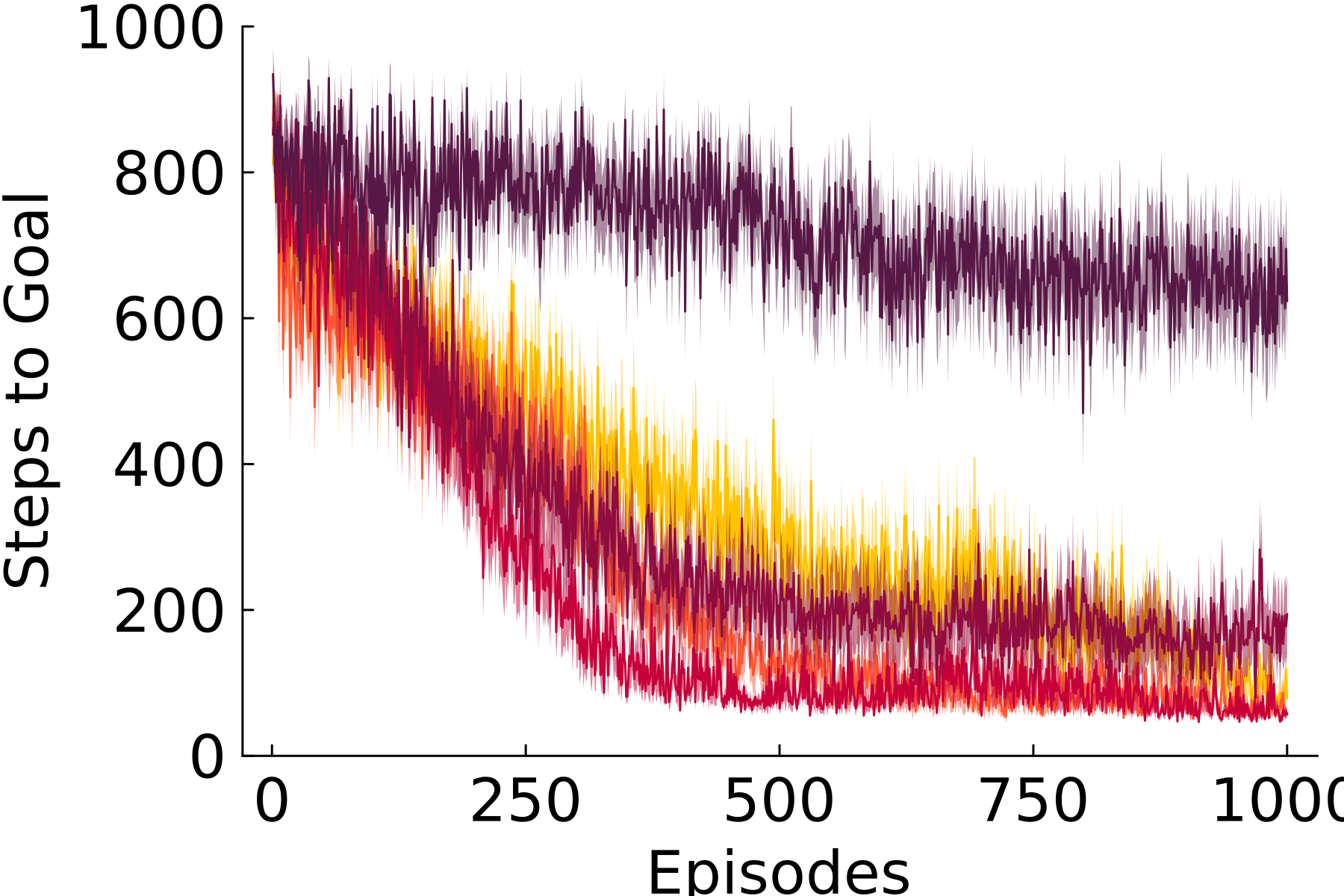}
        \includegraphics[width=0.45\linewidth]{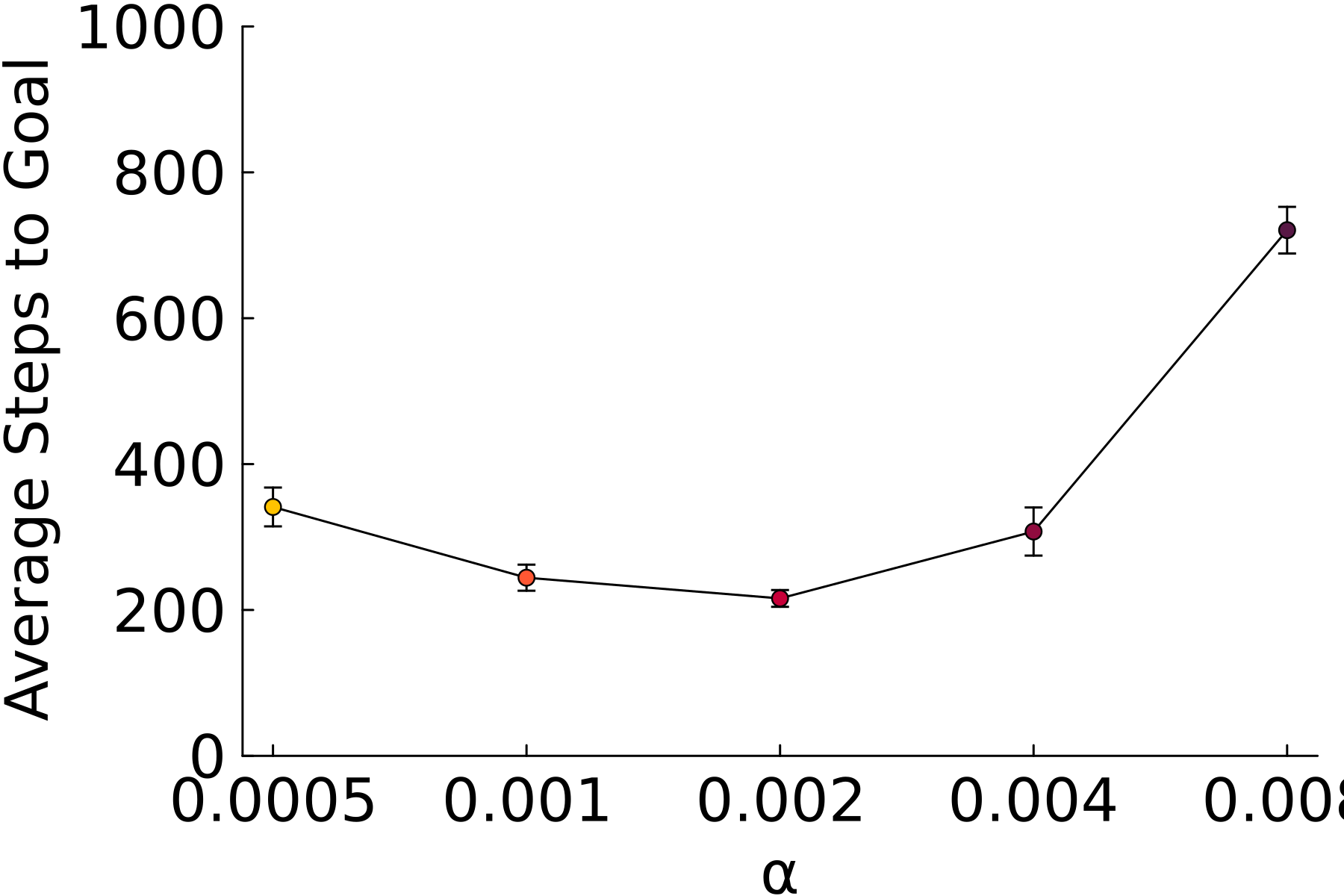}
         \caption{$\tau = 200$}
    \end{subfigure}
    \vspace{\baselineskip}
    
    \begin{subfigure}{0.65\textwidth}
        \includegraphics[width=0.45\linewidth]{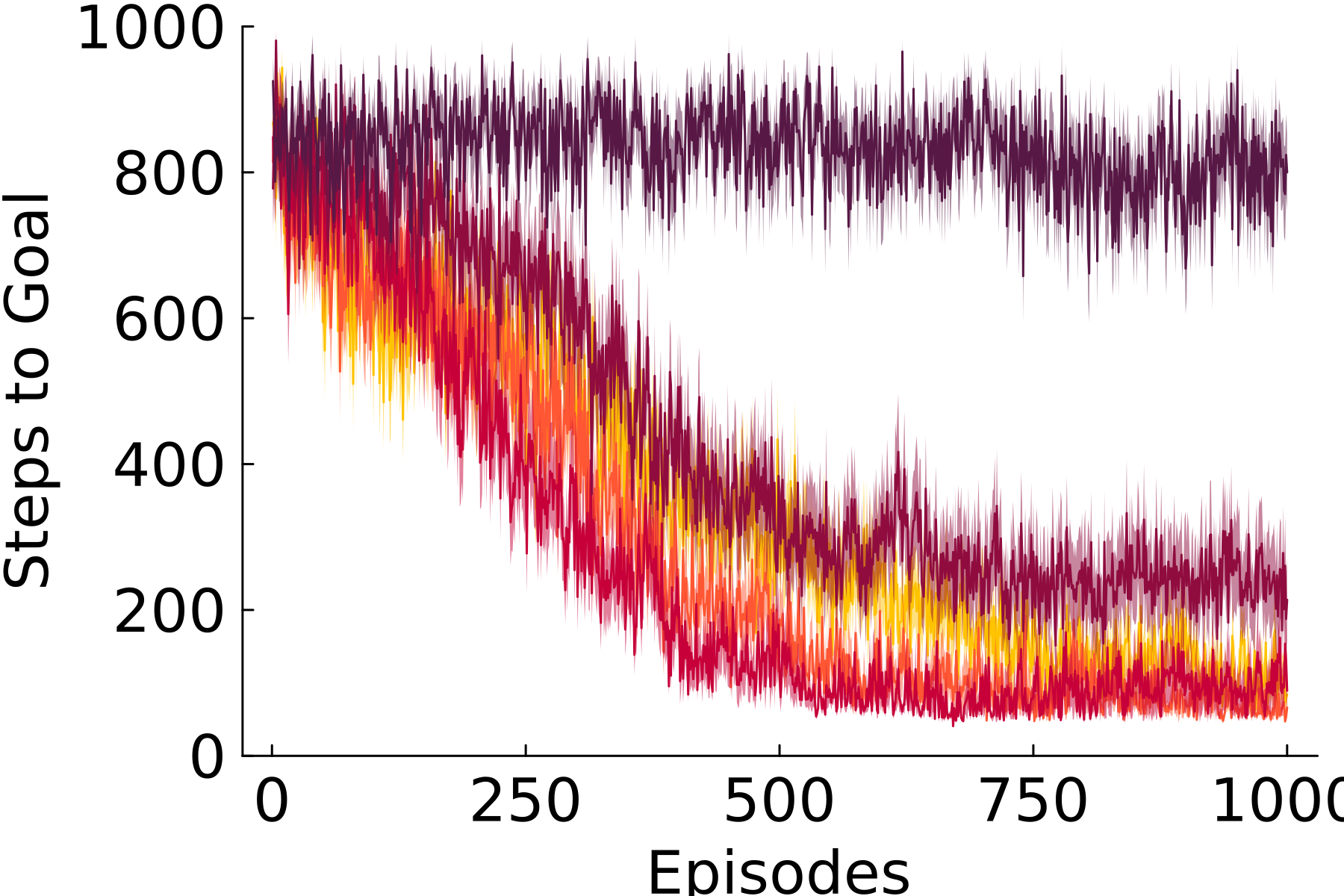}
        \includegraphics[width=0.45\linewidth]{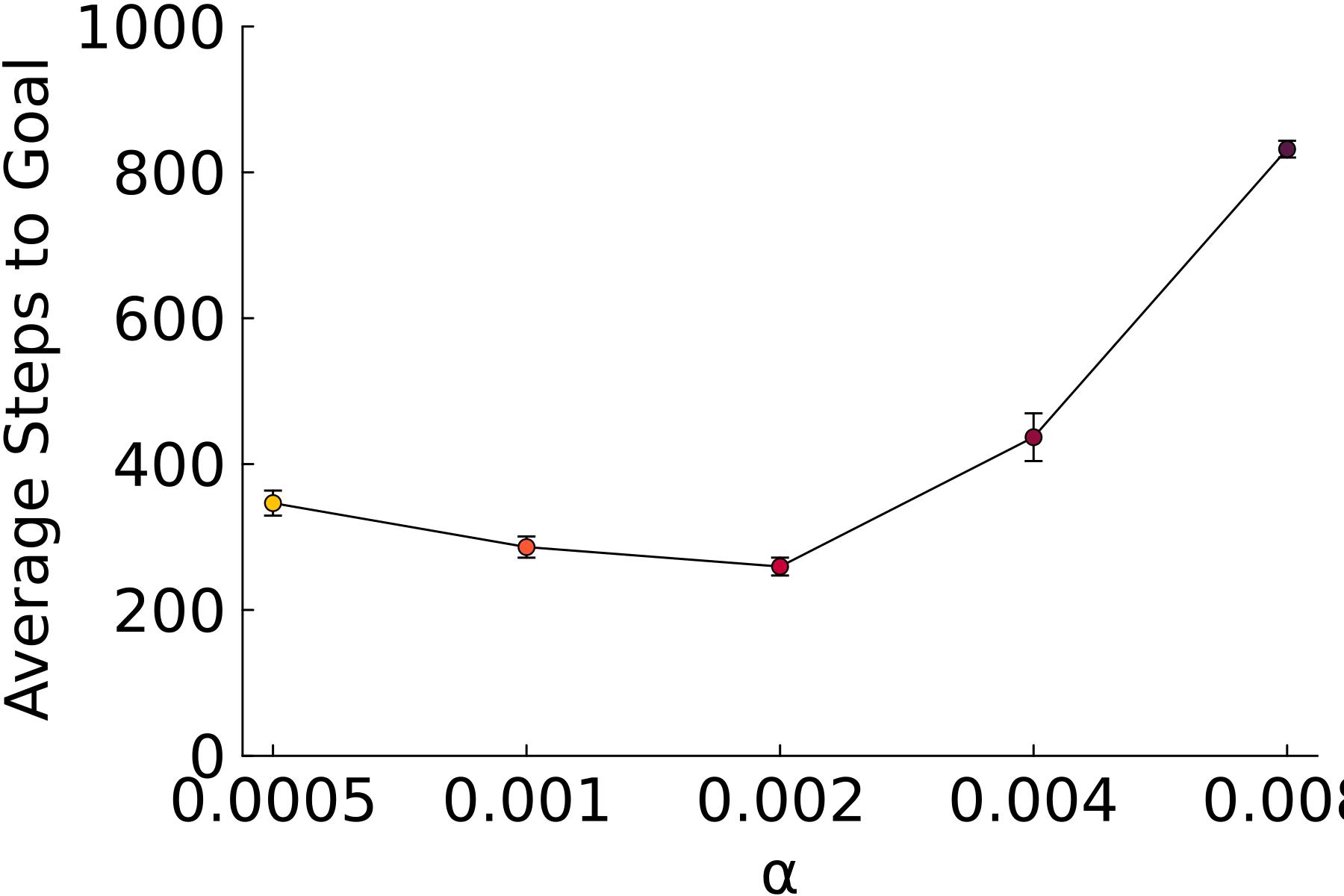}
         \caption{$\tau = 1000$}
    \end{subfigure}




    \caption{Left Column: each figure show the learning curves for five different step sizes, $\alpha$, averaged over 30 runs. Right Column: sensitivity of the DDQN base learner to different step sizes. Each dot represents the steps to goal for that learner, averaged over 30 runs and 1000 episodes. The error bars show one standard error. The refresh rate $\tau$ increases with each row.}
    \label{fig:ddqn_sweep}
\end{figure}

\end{document}